\documentclass{article}
\usepackage{tikz}
\usepackage{amsmath,amssymb,amsthm,enumerate}
\usepackage{enumitem,hyperref}
\setlength{\marginparwidth}{3cm}    
\usepackage[utf8]{inputenc}
\usepackage{geometry}
\geometry{hmargin=3cm,vmargin=2.5cm}
\usepackage{aliascnt,bbm}


\def\rset{\mathbb R}

\def\tk{\theta_k}
\def\tn{\theta_n}
\def\tnp{\theta_{n+1}}
\def\ttn{\widetilde{\theta}_n}
\def\ttnp{\widetilde{\theta}_{n+1}}
\def\te{\widetilde{\varepsilon}}
\def\wk{w_k}
\def\wn{w_n}
\def\wnp{w_{n+1}}
\def\twn{\widetilde{w}_n}
\def\twnp{\widetilde{w}_{n+1}}
\def\Fn{\mathcal{F}_n}
\def\dn{e_n}
\def\k{\Gamma}

\def\min{\mathrm{min}}
\def\max{\mathrm{max}}

\def\eqsp{\;} 

\def\1{\mathbbm{1}} 

\def\PE{\mathbb{E}} 
\def\F{\mathcal{F}} 


\newcommand{\eqdef}{\ensuremath{\stackrel{\mathrm{def}}{=}}} 

\newcommand\sequence[3]{\ifthenelse{\equal{#3}{}}{\ensuremath{\{
#1_{#2}\}}}{\ensuremath{\{ #1_{#2}, \eqsp #2 \in #3 \}}}} 
\newcommand\sequenceup[3] {\ifthenelse{\equal{#3}{}}{\ensuremath{\{
      #1_{#2}\}}}{\ensuremath{\{ #1^{#2}, \eqsp #2 \in #3 \}}}} 
\newcommand{\pos}[1]{\lfloor#1\rfloor_{+}} 

\newcommand{\oo}[1]{#1^{\odot 2}} 


\newcommand{\cu}{c_{1,\varepsilon}}
\newcommand{\cd}{c_{2,\varepsilon}}
\newcommand{\ct}{c_{3,\varepsilon}}

\newcommand{\Hp}{\mathbf{H}_{\sigma}^p}
\newcommand{\Hf}{\mathbf{H}_f}
\newcommand{\Hg}{\mathbf{H}_{\text{Steps}}}
\newcommand{\led}{\lesssim_{d}}
\newcommand{\gtd}{\gtrsim_{d}}

\theoremstyle{plain}
\newtheorem{theorem}{Theorem}

\newaliascnt{proposition}{theorem}
\newtheorem{proposition}[proposition]{Proposition}
\aliascntresetthe{proposition}
\newaliascnt{lemma}{theorem}
\newtheorem{lemma}[lemma]{Lemma}
\aliascntresetthe{lemma}
\newaliascnt{corollary}{theorem}

\aliascntresetthe{corollary}

\newaliascnt{definition}{theorem}
\newtheorem{definition}[definition]{Definition}
\aliascntresetthe{definition}
\newaliascnt{remark}{theorem}
\newtheorem{remark}[remark]{Remark}
\aliascntresetthe{remark}



\usepackage{xcolor}

\author{S\'{e}bastien Gadat$^{1,2}$, Ioana Gavra$^{3}$   
\\  $^{1}$Toulouse School of Economics (CNRS UMR 5314), Universit\'{e} Toulouse I Capitole \\ $^{2}$ Institut Universitaire de France\\$^{3}$ IRMAR (UMR CNRS 6625), Universit\'e de Rennes }

\date{\today}

\begin{document}

\title{Asymptotic study of stochastic adaptive algorithm in non-convex landscape}
\maketitle

\begin{abstract}
This paper studies some asymptotic properties of adaptive algorithms widely used in optimization and machine learning, and among them Adagrad and Rmsprop, which are involved in most of the blackbox deep learning algorithms. Our setup is the non-convex landscape optimization point of view, we consider a one time scale parametrization and we consider the situation where these algorithms may be used or not with mini-batches. We adopt the point of view of stochastic algorithms and establish the almost sure convergence of these methods when using a decreasing step-size towards the set of critical points of the target function. With a mild extra assumption on the noise, we also obtain the convergence towards the set of minimizers of the function. Along our study, we also obtain a ``convergence rate'' of the methods, in the vein of the works of \cite{GhadimiLan}.

\end{abstract}

\noindent \emph{Keywords:} Stochastic optimization; Stochastic adaptive algorithm;   Convergence of random variables. \\

\noindent\emph{AMS classifications:} Primary 62L20; secondary 68T05.


\section{Introduction}
\subsection{Stochastic optimization}
Minimizing a differentiable non-convex function $f: \mathbb{R}^d \longrightarrow \mathbb{R}$ when $f$ is defined through an expected loss  in a  statistical model
is a common way of estimation from an empirical set of observations in nowadays machine learning problems. In particular, some difficult optimization is generally involved in neural networks learning, see \textit{e.g.} \cite{bottou2018optimization} where the major challenge of a such problem is the large scale statistical settings (large number of observations $n$ involved in the definition of $f$ and large dimension of the ambient space $d$) and the non-convex landscape property when using a cascade of logistic regressions. We consider in this work the generic formulation: 
$$
\forall \theta \in \mathbb{R}^d: \qquad f(\theta) = \mathbb{E}_{X \sim \mathbb{P}}[\tilde{f}(\theta,X)],
$$
where $X$ is a random variable sampled according to an \textit{unknown} distribution $\mathbb{P}$. To perform the optimization of $f$ under the uncertainty on $\mathbb{P}$, we assume that we can compute all along the process of our algorithm some noisy but unbiased approximations of the gradient of $f$ computed at the current point of the algorithm. One typical example of a such algorithm is the so-called Stochastic Gradient Descent (SGD) introduced in the famous work of \cite{RobbinsMonro}, which is governed by the stochastic evolution:
$$
\theta_{k+1} = \theta_k - \gamma_{k+1} \nabla_{\theta} \tilde{f}(\theta_k,X_{k+1}),
$$
whose early success in the sixties has been at least rejuvenated if not  resurrected with the development of massive learning problems, in the last fifteen years. We refer among other to \cite{bottou-bousquet-2008,moulines2011non} or to \cite{JMLR:v15:bach14a} and the references therein for various applications in machine learning. Although being one of the state-of-the-art method to handle massive datasets, SGD suffers from several issues: difficulty to tune the step-size sequence or dependence on the gradient flow that may be lazy in flat areas, which is especially the case when looking at non-convex neural network problems. 

Some popular improvements are commonly patched to the vanilla SGD, and among others we refer to the popular acceleration obtained with the Polyak-Ruppert averaging \cite{polyakjuditsky,Ruppert,moulines2011non,cardot2017online,Gadat-Panloup}, variance reduction with mini-batch strategies (see \textit{e.g.} \cite{le2012stochastic,johnson2013accelerating}). 

While these two last improvements do not modify the underlying gradient flow, other strategies rely on a modification of the dynamical system exploiting acceleration brought by momentum with second order terms. The first historical example is  the Heavy Ball with Friction optimization based on the seminal contribution \cite{Polyak1964} and then translated into a stochastic framework (see \textit{e.g.} \cite{GPS2018,sebbouh2020convergence,loizou2020momentum}). Another second example is the Nesterov Accelerated Gradient Descent (see \textit{e.g.} \cite{nesterov1983method}) translated and studied in the noisy situation recently in a large number of works (see among other  \cite{ghadimi2016accelerated,jin2018accelerated}).

A last stimulating subject of investigation we refer to for improving the behaviour of stochastic algorithms rely on adaptive methods: they consist in tuning the step-size sequence either with a per-coordinate strategy or with a matricial inversion in front of the gradient $\nabla_\theta \tilde{f}(\theta_k,X_{k+1})$. Among other, Adagrad  introduced in \cite{Adagrad} (with a long range memory of past gradients) and Rmsprop (with an exponential moving average) taught in \cite{hinton2012neural} are typical examples of step-size adaptation with second-order moments learned on-line and these two algorithms are at the core of our work. Another state-of-the-art algorithm is the ADAM method introduced in \cite{Adam} and used in GAN optimization \cite{goodfellow2014generative}. These algorithms are referred to as \textit{adaptive methods} and have encountered a striking raise of attention these recent years in machine learning (see \textit{e.g.} \cite{Adagrad_Ward} \cite{Adam_Zou}). In the statistical community, stochastic Newton and stochastic Gauss-Newton methods may also be seen as adaptive algorithms with a direct matricial inversion and multiplication: these methods have shown both good theoretical and numerical abilities for regressions \cite{cnac2020efficient}, logistic regression \cite{Bercu_Godichon_Portier2020}, average consensus research \cite{loizou2020momentum} or optimal transport problems
\cite{Siviero}.\\

To the best of our knowledge, there is little convergence mathematical results on adaptive algorithms: 
\cite{SilvaGazeau} studies the deterministic dynamical system behind adaptive algorithms and obtained long-time behaviour of the trajectories or the value function following ideas of \cite{cabot2009long,SuBoydCandes}. More recently, 
 \cite{BarakatBianchi} (that is more closely related to us) obtains the almost sure convergence of their algorithms towards critical points with a parametrization that is different from our but the authors leave as an open problem the important question of the convergence towards a \textit{minimizer} of $f$.\footnote{The same week we sent our paper on Arxiv, \cite{BarakatBianchi2} also published some results on the trap avoidance of adaptive algorithms but not consider the mini-batch effect that is known to be a crucial ingredient for the efficiency of adaptive methods. We refer to  Theorem \ref{thm: cv quanti} and \ref{thm:conv min} below for the conditions we obtained on the mini-batch sequence.}
Finally, some recent contributions in machine learning \cite{Adagrad_Ward,Adam_Zou,Adam_Bach} address some ``convergence'' questions for adaptive algorithms with constant step-size. They provide a non-asymptotic study with a  step-size that is tuned according to the finite horizon of simulation. Even though these results are of major interest from a numerical point of view, they do not really answer the question of convergence from a trajectorial point of view (see Section \ref{sec:param} below). 
 The objective of this work may be seen as modest at the moment: we aim to study the asymptotic behaviour of Adagrad and Rmsprop \textit{i.e.} we aim to show the almost sure convergence towards a local minimizer of the objective function $f$. However limited at first sight, we will see that the convergence of the trajectories outside local traps is already challenging, especially when a mini-batch strategy is used.
\section{Adaptive algorithms and main results}
\subsection{Definition of the methods}
The algorithm we consider in this paper use the vectorial division/multiplication notations introduced in Adagrad (see \cite{Adagrad}) and now widely used in machine learning.
The vectorial division  $\frac{u}{v}$ and multiplication $ u\cdot v$ are the coordinate per coordinate operations introduced by:
$$
\left(\frac{u}{v}\right)_i = \frac{u_i}{v_i},\quad \mbox{and} \quad  \left(u \cdot v\right)_i = u_iv_i\quad \forall i \in \{1,\ldots, d\}
$$
In the meantime,  the notation $\oo{u}$ corresponds to the coordinate per coordinate square:
$$
\forall i \in \{1,\ldots, d\} \qquad
\{\oo{u}\}_{i} = u_i^2,
$$
whereas $\sqrt{u}$  denotes the coordinate per coordinate square root:
$$ \left(\sqrt{u}\right)_i= \sqrt{u}_i, \quad \forall i \in \{1,\ldots, d\}$$
Finally, the sum of a vector  $u\in \mathbb{R}^d$ and a scalar $\varepsilon\in \mathbb{R}$ is given by :
$$ \left(u+\varepsilon\right)_i=u_i+\varepsilon \quad \forall i \in \{1,\ldots, d\}.$$
Following the recent work of \cite{SilvaGazeau}, we consider the joint evolution of $(\theta_n,w_n)_{n \ge 1}$ in $\mathbb{R}^d \times \mathbb{R}^d$ of a stochastic algorithm defined by:
\begin{equation}\label{def:adam}
\begin{cases}
\theta_{n+1}=\theta_n-\gamma_{n+1}\dfrac{g_{n+1}}{\sqrt{w_n+\varepsilon}}\\
w_{n+1}=w_n+ \gamma_{n+1}(p_n \oo{g_{n+1}} - q_n w_n )
\end{cases},
\end{equation}
where $(g_n)_{n \ge 1}$ corresponds to a noisy stochastic evaluation of the gradient of the function $f$, corrupted by an additive noise sequence $(\xi_{n+1})_{n \ge 1}$:	
$$
g_{n+1}= \nabla f (\theta_n)+\xi_{n+1}.
$$ 
Following the initial vectorial notations, we emphasize that \eqref{def:adam} means that for any coordinate $i \in \{1,\ldots,d\}$, the position $(\theta_n)_{n \ge 1}$ and scaling factor $(w_n)_{n \ge 1}$ are updated according to:
$$
\begin{cases}
\theta_{n+1}^i = \theta_n^i - \gamma_{n+1} \frac{g_{n+1}^i}{\sqrt{w_n^i+\varepsilon}}\\
w_{n+1}^i = w_n^i + \gamma_{n+1} (p_n (g_{n+1}^i)^2-q_n w_n^i)
\end{cases}.
$$

\subsection{Link with other parametrizations\label{sec:param}}
\subsubsection{Historical parametrization}

We discuss here on our choice of the Adagrad/Rmsprop parametrization \eqref{def:adam} using the one of \cite{SilvaGazeau}, and its link with the standard parametrization introduced in \cite{Adagrad} or \cite{hinton2012neural} and used in later works \cite{Adam_Bach,Adam_Zou} for Adam and in \cite{Adagrad_Ward} for Adagrad.
We have chosen to use this formulation, which is inspired from the limiting O.D.E. of the continuous time adaptive gradient system following previous works on  accelerated or second order dynamics and among other we refer to Memory gradient diffusion \cite{gadat2014long}, Ruppert-Polyak averaging \cite{Gadat-Panloup}, Heavy Ball systems \cite{Attouch,cabot2009long,cabot2009second,GPS2018} or more generally Nesterov acceleration \cite{Nesterov04,SuBoydCandes,Attouch_nesterov} and dissipative systems \cite{haraux1991systemes,alvarez}.

The pioneering works \cite{Adagrad} and \cite{hinton2012neural} use the following parametrization:
\begin{equation}\label{def:adam_bach}
\begin{cases}
\ttnp=\ttn-\alpha_{n+1}\dfrac{g_{n+1}}{\sqrt{v_n+\te}}\\
v_{n+1}=\beta_2(n) v_n+ \oo{g_{n+1}} 
\end{cases}
\end{equation} 
for $\beta_2 \in (0,1)$ and 
when no heavy ball momentum (see \textit{e.g.} \cite{Polyak1964}) is used in the algorithm (which is also the case we are considering in this work).
Notice that $\beta_2$ may depend on the current iteration in a second stage and for the sake of completeness, we consider a general sequence $(\beta_2(n))_{n \ge 1}$.

We introduce the natural normalizing sequence $(S_n)_{n \ge 1}$, defined by
$S_0=1$ and the following recursion:
\begin{equation}\label{def:Sn}
S_{n+1} = \beta_2(n) S_n+1
\end{equation}
We are led to introduce $\twn=v_n/S_n$ and $\te_{n}=\te/S_n$ and we observe that:
$$
\ttnp=\ttn-\frac{\alpha_{n+1}}{\sqrt{S_n}} \frac{g_{n+1}}{\sqrt{\twn+\te_n}},
$$
whereas the second coordinate evolves according to:
$$
\twnp=  \frac{\beta_2(n) v_n + \oo{g_{n+1}}}{S_{n+1}} = \frac{\oo{g_{n+1}}}{S_{n+1}} + \twn \frac{\beta_2(n) S_n}{S_{n+1}} = \twn + \frac{1}{S_{n+1}}\left[  \oo{g_{n+1}} - \twn \right].
$$
Following the recommendation of \cite{Adagrad_Ward,Adam_Bach} (see in particular Equation (2.4) of \cite{Adam_Bach}), we can introduce a new step-size sequence $(\tilde{\alpha}_{n})_{n \ge 1}$ such that  $\alpha_{n+1}=\widetilde{\alpha}_{n+1} \sqrt{S_n}$ and  we recover in this case a joint evolution:
\begin{equation}\label{eq:reparam}
\begin{cases}
\ttnp = \ttn - \widetilde{\alpha}_{n+1} \frac{g_{n+1}}{\sqrt{\twn+\te_n}}\\
\twnp=\twn+\frac{1}{S_{n+1}}[\oo{g_{n+1}} -\twn]
\end{cases}.
\end{equation}
\subsubsection{Two-time scale parametrization\label{sec:two_time_scale}}

We deduce  that the popular parametrization introduced in the seminal contributions of Adagrad, Rmsprop or ADAM and the one we used in our paper are equivalent and
fall into our framework described in Equation \eqref{def:adam} with possibly two time-scales on the system $(\ttn,\twn)_{n \ge 1}$:
\begin{equation}\label{eq:two_time_scales}
\begin{cases}
\ttnp=\ttn-\gamma_{n+1}\dfrac{g_{n+1}}{\sqrt{w_n+\varepsilon}}\\
\twnp=\twn+ \widetilde{\gamma_{n+1}}(p_n \oo{g_{n+1}} - q_n \twn ),
\end{cases}
\end{equation}
where 
\begin{equation}
\label{eq:step_size_link}
\gamma_{n+1}= \frac{\alpha_{n+1}}{\sqrt{S_n}} \quad \text{and} \quad \widetilde{\gamma}_{n+1} = \frac{1}{S_{n+1}} \quad \text{and} \quad  p_n=q_n=1.
\end{equation}
The choice of the sequence $(\beta_2(n))_{n \ge 1}$ is key to understand the stochastic algorithm we obtain in \eqref{eq:reparam}.

$\bullet$ Case constant $\beta_2 =1$ (Adagrad of \cite{Adagrad}) This case is certainly the easiest to understand since the natural rescaling $S_n$ of the sequence $(v_n)_{n \ge 1}$ is $S_n =  n$. In this case, 
we recover a joint evolution:
\begin{equation}\label{eq:reparam_1}
\begin{cases}
\ttnp = \ttn - \frac{\alpha_{n+1}}{\sqrt{n}} \frac{g_{n+1}}{\sqrt{\twn+\te_n}}\\
\twnp=\twn+\frac{1}{n+1}[\oo{g_{n+1}} - \twn]
\end{cases},
\end{equation}
which entails $\gamma_{n+1}=\frac{\alpha_{n+1}}{\sqrt{n}}$ and $\widetilde{\gamma}_{n+1}= \frac{1}{n+1}$.
With the choice of \cite{Adam_Bach}, we then obtain a constant step-size stochastic algorithm for the coordinate $(\tn)_{n \ge 1}$ associated with a uniform Cesaro averaging on the sequence of past squared gradients $(\oo{g_{n+1}})_{n \ge 1}$.

$\bullet$ Case constant $\beta_2 \in (0,1)$ (Adam of \cite{Adam} with no momentum).
Since $S_n$ converges exponentially fast towards $(1-\beta_2)^{-1}$, the system is close to:
\begin{equation}\label{eq:reparam_2}
\begin{cases}
\ttnp = \ttn - (1-\beta_2) \alpha_{n+1}\frac{g_{n+1}}{\sqrt{\twn+\te_n}}   \\
\twnp= \twn+(1-\beta_2)[ \oo{g_{n+1}} - \twn] 
\end{cases},
\end{equation}
which entails $\gamma_{n+1}=(1-\beta_2) \alpha_{n+1}$ and $\widetilde{\gamma}_{n+1}=(1-\beta_2)$.

$\bullet$ Case $\beta_2(n) = 1-b n^{-\beta}$ with $b \in (0,1)$. This last case where the sequence goes to $1$ with $n$ corresponds to an intermediary situation between $\beta_2=1$ and $\beta_2<1$, this transition being parametrized by $\beta \in [0,+\infty]$. We shall introduce the sequence of products:
$$
\pi_k = \beta_2(1) \ldots \beta_2(k) \qquad \text{with} \qquad \pi_0=1,
$$
and a straightforward computation yields
$$
S_{n+1} = \sum_{k=0}^n \pi_n \pi_k^{-1}.
$$
It is well known (see \textit{e.g.} \cite{BCG1} Lemma 5.2) that when $\beta=1$,
$$ \lim_{n \longrightarrow + \infty} n^{-b} \pi_n^{-1} = \Gamma(1-b),$$
whereas when $\beta\neq 1$ 
$$
\lim_{n \longrightarrow + \infty} \exp( b (1-\beta)^{-1} n^{1-\beta}) \pi_n^{-1} = \exp(\Lambda) 
$$
where $\Lambda$ can be made explicit in terms of the Riemann zeta function.
We then conclude the following behaviour of $(S_n)_{n \ge 1}$ (see Appendix B  of \cite{GPS2018}) that:
$$
S_n \sim c_{b,\beta}^{-1} n^{\beta \wedge 1},
$$
which implies that the joint evolution   shall be written as:
\begin{equation}\label{eq:reparam_3}
\begin{cases}
\ttnp = \ttn -  \alpha_{n+1} n^{-\frac{\beta\wedge 1}{2}} \frac{g_{n+1}}{\sqrt{\twn+\te_n}}\\
\twnp=\twn+c_{b,\beta} n^{-(\beta\wedge 1)}[\oo{g_{n+1}} - \twn]
\end{cases}.
\end{equation}
which entails $\gamma_{n+1}=\alpha_{n+1} n^{-\frac{\beta\wedge 1}{2}}$ and $\widetilde{\gamma}_{n+1}=n^{-(\beta\wedge 1)}$.

In all the situations above, we point out that we obtain some standard choices for the sequences $(\gamma_{n+1})_{n \ge 1}$ and $(\widetilde{\gamma}_{n+1})_{n \ge 1}$ involved in our two-time scale system \eqref{eq:two_time_scales}.

\subsubsection{Final remark on step-size sequences\label{sec:final_remark}}

We emphasize that when $(\widetilde{\alpha}_{n})_{n \ge 1}$ is chosen as a constant sequence $\tilde{\alpha}$, the sequence $(\ttn)_{n \ge 1}$ evolves as an ergodic Markov chain and therefore the trajectory cannot converge towards a minimizer of $f$ (indeed it cannot converge anywhere).
Nevertheless, using a finite time horizon strategy with a small enough value of $\tilde{\alpha}$, \cite{Adam_Zou,Adam_Bach} derive some  theoretical guarantees on $\PE[\|\nabla f(\theta_{n})\|^2]$.

\textit{In this work, we have chosen to restrict our study to a single time-scale parametrization} with decreasing sequences $(\gamma_{n+1})_{n \ge 1} =( \widetilde{\gamma}_{n+1})_{n \ge 1}$ within the standard setup of stochastic algorithms:
$$
\sum_{n \ge 1} \gamma_{n+1}= +\infty \qquad \text{and} \qquad \sum_{n \ge 1} \gamma_{n+1}^2< +\infty.
$$
This single time-scale restriction implies that $\sqrt{S_n} = \alpha_{n+1}$, so that when we choose in \eqref{def:adam} $p_n=q_n=1$ and $\gamma_{n+1} = \gamma_1 (n+1)^{-\beta}$, our algorithm is strictly equivalent to the one initially introduced in \eqref{def:adam_bach} with $S_n \propto n^{-\beta}$ and $\alpha_{n+1}=n^{-\beta/2}$. In particular, if $\beta<1$, it corresponds to $\beta_2(n) = 1-b n^{-\beta}$ while if $\beta=1$, it corresponds to $\beta_2(n)=1$.


We leave the more sophisticated general study of the two time-scale algorithm for future investiations and refer to \cite{borkar}, \cite{MokkademPelletier2006} or \cite{BCG1,BCG2}  for other examples of such two time-scale stochastic algorithms in various (but simpler) situations.

\subsection{Assumptions and convenient notations}
We introduce  the canonical filtration associated to our random sequence $\Fn = \sigma\left((\tk,\wk)_{ 1 \leq k \leq n}\right)$ and list below the main assumptions used in our work.

We use the symbols $\led,\gtd$ to refer to inequalities up to a multiplicative constant that are independent from the dimension $d$: for two positive sequences $(u_n)_{n\ge 0}$ and $(v_n)_{n\ge 0}$, we write $$u_n\led v_n \  \mbox{if there exists } C>0 \mbox{ such that} \ u_n\le C v_n, \ \forall n\in\mathbb{N},$$
and the constant $C$ is independent from the dimension of the ambient space $d$. 
We will also use the notation $u_n = \mathcal{O}_d(v_n)$ when $u_n \led v_n$.
\textcolor{black}{
We also use the symbol $\lesssim$ that refers to an inequality up to a multiplicative constant that can depend on $d$, for the proof of the local trap avoidance since for this result we are not interested in a quantitative effect of the dimension.}


\paragraph{Assumptions on the noise.}
We first describe our main assumption on the sequence $(g_n)_{n \ge 1}$.

\noindent
$\bullet$ \underline{Assumption $\mathbf{H}_{\sigma}^p$.}
We assume that the sequence $(g_n)_{n \ge 1}$ used in \eqref{def:adam} provides an unbiased estimation of the true gradient of $f$ at position $\theta_n$, \textit{i.e.} we assume that:
$$
\mathbb{E}[g_{n+1}\, \vert \Fn] = \nabla f(\theta_n).
$$
We furthermore assume that the noise sequence $(\xi_{n+1})_{n \ge 1}$ satisfies:
\begin{equation}\label{Hp}
\forall n \ge 1 \qquad 
\xi_{n+1} := g_{n+1}-\nabla f(\theta_n)= \sigma_{n+1} \zeta_{n+1} \quad \text{with} \quad	\PE[\|\zeta_{n+1}\|^p | \mathcal{F}_n,]\le c (\textcolor{black}{d}+f(\theta_{n}))^{p/2}, 
	\end{equation}
	where $c$ is a positive constant independent from $d$.
Assumption $\mathbf{H}_{\sigma}^p$ stands for a classical framework in stochastic optimization methods: $(\sigma_n)_{n \ge 1}$ is an auxiliary sequence that translastes a possible use of mini-batches when $\sigma_n \longrightarrow 0$ as $n \longrightarrow + \infty$. The moment assumption on $(\zeta_n)_{n \ge 1}$ is the convenient assumption to handle standard problems like on-line regression, logistic regression or cascade of logistic regressions used in deep learning. We emphasize that we do not make any restrictive and somewhat irrealistic boundedness assumption of the noise $(\zeta_n)_{n \ge 1}$ or of the sequence $(\theta_n)_{n \ge 1}$ itself. Below, we will use this assumption with $p=4$ in Theorem \ref{thm: cv ps}. Finally, we should observe that this assumption introduces a possible linear dependency with $d$ on the size of the variance of the noise.
\\ 

To derive the convergence of our algorithms towards a local minima, we will need a more stringent condition on the noise sequence. We then introduce the next assumption that will replace $\mathbf{H}_{\sigma}^p$ in our second main result of almost sure convergence (see Theorem \ref{thm:conv min} below).

\noindent
$\bullet$ \underline{Assumption $\mathbf{H}_{\sigma}^\infty$.} 

\begin{itemize}
\item \underline{$(\mathbf{H}_{\sigma}^\infty-1)$.}
The noise sequence $(\xi_{n+1})_{n \ge 1}$ is centered and satisfies:
\begin{equation}\label{Hinfty}
\xi_{n+1} = \sigma_{n+1} \zeta_{n+1} \quad \text{with} \quad 	\PE[\|\zeta_{n+1}\|^2 | \mathcal{F}_n]\le 1\quad \text{and} \quad 	\PE[\|\zeta_{n+1}\|^4 | \mathcal{F}_n]\le C.
	\end{equation}
\item \underline{$(\mathbf{H}_{\sigma}^\infty-2)$.} The noise sequence is elliptic uniformly in $n$:
$$
\exists m >0 \qquad \forall n \ge 1 \quad \forall u \in \mathcal{S}^{d-1} \qquad \mathbb{E}[\langle u,\zeta_{n+1}\rangle^2] \ge m >0.
$$
\end{itemize}	
We stress that the upper bound $1$ on the second order moment is not restrictive, up to a modification of the calibration of the sequence $(\sigma_{n})_{n \ge 1}$. The second assumption will be of course used to exit local traps.

\paragraph{Assumptions $\Hf$} 
We now introduce some standard assumptions on $f$.

\begin{itemize}
\item \underline{$(\Hf-1)$.} The function $f$ is positive and coercive, \textit{i.e.} $f$ satisfies:
$$
\lim_{\|x\|\longrightarrow + \infty} f(x) = + \infty \qquad \text{and} \qquad \min(f) >0.
$$
Demanding the lower bound of $f$ to be strictly positive is mostly a convenient technical constraint and not fundamentally more restrictive than the classical assumption of positivity.
\item \underline{$(\Hf-2)$.}
	We assume that $f$ satisfies the so-called Lipschitz continuous gradient   property:
	$$
	\exists L>0 \quad \forall (x,y) \in \rset^d \qquad \|\nabla f(x)-\nabla f(y)\| \leq L \|x-y\|.
	$$
We emphasize that this implies the famous descent inequality:
\begin{equation}\label{eq:descent}
f(x) +  \langle h,\nabla f(x)\rangle - \frac{L}{2} \|h\|^2 \leq
f(x+h) \leq f(x) +  \langle h,\nabla f(x)\rangle + \frac{L}{2} \|h\|^2.
\end{equation}
This assumption is commonly used in optimization theory and statistics. Even though it is possible to address some more sophisticated situations (see \textit{e.g.} \cite{not_lip}), it is generally admitted that most of machine learning optimization problems fall into the Lipschitz continuous gradient framework.

\item \underline{$(\Hf-3)$.} We also assume that another  constant $c_f$ exists such that:
	\begin{equation}\label{eq:comparaison}
	|\nabla f|^2 \leq c_f f.
	\end{equation}
This last assumption prevents from some too large growth of the function $f$ and it is immediate to verify that $(\Hf-3)$ implies that $f$ has a subquadratic growth, \textit{i.e.} 
$\lim\sup_{\|x\|\longrightarrow + \infty} \frac{f(x)}{\|x\|^{2}}<+ \infty$.  It has been widely used in the literature of stochastic algorithm (see \textit{e.g.} \cite{GPS2018} and the references therein).
\item \underline{$(\Hf-4)$.}  Finally, we assume that
 $\forall \ x\in \mathbb{R}^d$, $\{ \theta, f(\theta)=x\}\cap\{ \theta,\nabla f (\theta)=0\}$ is locally finite. 

\end{itemize}
%

\paragraph{Assumption on the step-size sequences $\Hg$}
\textcolor{black}{We finally introduce our assumptions on the step-size sequences used all along the paper that are involved in $(p_n)_{n \ge 1}, (q_n)_{n \ge 1}$ and $(\gamma_{n})_{n \ge 1}$.
To easily assess some convergence results with quantitative conditions on our gain sequences, we will consider the situations where:}

\begin{itemize}

\item \underline{$(\Hg-1)$.} The sequences $(p_n)_{n \ge 1}$ and $(q_n)_{n \ge 1}$ satisfy:
$$
\exists (r,p_{\infty}) \in \mathbb{R}^+\times \mathbb{R}^+: \quad |p_n-p_{\infty}| \led n^{-r} \qquad \text{and} \qquad  \lim_{n \longrightarrow + \infty} q_n = q_{\infty} >0.
$$
and
$$
\forall n \ge 1 \qquad \gamma_{n+1} q_n < 1 \qquad \text{and} \lim_{n \longrightarrow
+ \infty} \frac{\gamma_{n+1}}{p_n} = 0.
$$

\item \underline{$(\Hg-2)$.} The mini-batch sequence $(\sigma_n)_{n \ge 1}$ satisfies:
$$
\sigma_n = \sigma_1 n^{-s} \qquad \text{with} \qquad s \ge 0.
$$
\item \underline{$(\Hg-3)$.}  As already discussed in Section \ref{sec:final_remark}, 
the sequence $(\gamma_{n})_{n \ge 0}$ satisfies:
$$
\sum_{n \ge 1} \gamma_{n+1} = + \infty \quad \text{and} \quad \sum_{n \ge 1} \gamma_{n+1}^2 < + \infty.
$$
All the more, we assume that:
$$
 \sum_{n \ge 0} p_n \gamma_{n+1}  \sigma_{n+1}^2 < + \infty.
$$
\end{itemize}
We point out that   $(p_n)_{n \ge 1}$ and $(\sigma_n)_{n \ge 1}$ may be (or not) some vanishing sequences (if $r>0$ and $p_{\infty}=0$ or if $s >0$).

\subsection{Main results}
We now state our three main convergence results for the stochastic algorithm defined in Equation \eqref{def:adam}.

\paragraph{Almost sure convergence result towards a critical point}
\begin{theorem}\label{thm: cv ps} Assume that $\Hf$,
$\mathbf{H}_{\emph{Steps}}$
and $\mathbf{H}_{\sigma}^p$ hold for $p=4$.
Then $(\tn,\wn)_{n \ge 1}$ converges almost surely towards $(\theta_{\infty},0)$ where $\nabla f(\theta_{\infty})=0$.
\end{theorem}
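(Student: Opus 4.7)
The plan is to exhibit a Lyapunov function
\[
V_n := f(\theta_n) + \alpha \sum_{i=1}^d \sqrt{w_n^i + \varepsilon}
\]
for a small enough constant $\alpha > 0$, and apply the Robbins--Siegmund lemma. Applying the descent inequality \eqref{eq:descent} to $f(\theta_{n+1})$ and the concavity of $\sqrt{\cdot}$ coordinatewise to $\sqrt{w_{n+1}^i + \varepsilon}$, then taking conditional expectation given $\Fn$ and invoking $\Hp$ (with $p=2$, which follows from $p=4$) together with $(\Hf-3)$, one obtains an inequality of the form
\[
\PE[V_{n+1} \mid \Fn] \le V_n(1 + \beta_n) + \mu_n - \gamma_{n+1} \sum_{i=1}^d \frac{(1 - \alpha p_n/2)(\partial_i f(\theta_n))^2 + (\alpha q_n/2) w_n^i}{\sqrt{w_n^i + \varepsilon}}.
\]
The drift is non-positive once $\alpha$ is chosen so that $\alpha p_n \le 1$ for all $n$, which is possible because $(p_n)$ is bounded by $(\Hg-1)$. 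Using $|\nabla f|^2 \le c_f f$, the uniform lower bound $w_n^i + \varepsilon \ge \varepsilon$, and the noise bound $\PE[\|\zeta_{n+1}\|^2 \mid \Fn] \le c(d + f(\theta_n))$, the sequences $(\beta_n)$ and $(\mu_n)$ can be chosen non-negative and summable thanks to $(\Hg-2,3)$.

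Robbins--Siegmund then yields the almost sure convergence of $V_n$ together with the finiteness of $\sum_n \gamma_{n+1} \sum_i (\partial_i f(\theta_n))^2/\sqrt{w_n^i + \varepsilon}$ and $\sum_n \gamma_{n+1} q_n \sum_i w_n^i/\sqrt{w_n^i + \varepsilon}$. By positivity and coercivity of $f$ in $(\Hf-1)$, the sequence $(\theta_n)$ stays in a random compact set and each $(w_n^i)$ is bounded above by some random $M$. Inserting $\sqrt{w_n^i + \varepsilon} \le \sqrt{M + \varepsilon}$ in the two finite sums gives $\sum_n \gamma_{n+1} \|\nabla f(\theta_n)\|^2 < \infty$ and $\sum_n \gamma_{n+1} q_n w_n^i < \infty$ a.s. Since $\sum_n \gamma_{n+1} = \infty$ and $q_n \to q_\infty > 0$, one deduces in particular $\liminf_n \|\nabla f(\theta_n)\| = 0$ and $\liminf_n w_n^i = 0$ for each coordinate $i$.

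To promote $\liminf_n w_n^i = 0$ into $w_n \to 0$, I use a martingale argument. The process $M_n^i := \sum_{k \le n} \gamma_k p_{k-1}\bigl((g_k^i)^2 - \PE[(g_k^i)^2 \mid \mathcal{F}_{k-1}]\bigr)$ has predictable bracket controlled by $\sum_k \gamma_k^2 p_{k-1}^2 \PE[(g_k^i)^4 \mid \mathcal{F}_{k-1}]$; thanks to $\Hp$ with $p=4$, $(\Hf-3)$ and the a.s. boundedness of $f(\theta_n)$, this bracket is a.s. finite, hence $M_n^i$ converges a.s. Combined with $\sum_k \gamma_k p_{k-1}\|\nabla f(\theta_{k-1})\|^2 \le \sup_k p_k \cdot \sum_k \gamma_k \|\nabla f(\theta_{k-1})\|^2 < \infty$ and $\sum_k \gamma_k p_{k-1} \sigma_k^2 < \infty$ from $(\Hg-3)$, one gets $\sum_k \gamma_k p_{k-1}(g_k^i)^2 < \infty$ a.s. Rewriting the $w$-recursion telescopically,
\[
w_n^i = w_0^i - \sum_{k=0}^{n-1} \gamma_{k+1} q_k w_k^i + \sum_{k=0}^{n-1} \gamma_{k+1} p_k (g_{k+1}^i)^2,
\]
both sums converge a.s., so $w_n^i$ converges to some $w_\infty^i$; the fact that $\liminf_n w_n^i = 0$ then forces $w_\infty^i = 0$.

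It remains to show that $\theta_n$ converges to a critical point of $f$. A martingale argument similar to the above, combined with $\sum_n \gamma_{n+1}^2 < \infty$, gives $\gamma_{n+1}\|g_{n+1}\| \to 0$ a.s., hence $\theta_{n+1} - \theta_n \to 0$ a.s.; consequently the set $\mathcal{L}(\theta)$ of limit points of $(\theta_n)$ is compact and connected. A standard excursion argument shows that every $\theta_\ast \in \mathcal{L}(\theta)$ is critical: otherwise $\|\nabla f\|$ would be bounded below by some $\eta > 0$ on a neighborhood $U$ of $\theta_\ast$, and the infinitely many visits of $(\theta_n)$ to $U$ would force $\sum_n \gamma_{n+1}\|\nabla f(\theta_n)\|^2 = \infty$, a contradiction. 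Moreover, $f$ is constant on $\mathcal{L}(\theta)$ by convergence of $f(\theta_n)$, so $\mathcal{L}(\theta) \subset \{f = f_\infty\} \cap \{\nabla f = 0\}$, which is locally finite by $(\Hf-4)$; a connected subset of a locally finite set is a singleton, so $\theta_n$ converges to this unique $\theta_\infty$. The main obstacle lies in the coupling between $\theta_n$ and $w_n$ through the adaptive denominator $\sqrt{w_n + \varepsilon}$: the Lyapunov function must mix the two coordinates so that the descent contribution and the contraction of the $w$-dynamics cooperate into a genuinely non-positive drift, and the upgrade from $\liminf w_n^i = 0$ to $\lim w_n^i = 0$ crucially relies on the summability condition $(\Hg-3)$ tying $(\gamma_n)$, $(p_n)$ and $(\sigma_n)$ together.
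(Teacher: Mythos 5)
Your route is genuinely different from the paper's. The paper works with the heavier Lyapunov function $V_{a,b}(\theta,w)=\|\sqrt{w+\varepsilon}\|^2+af^2(\theta)+b\,f(\theta)\|(w+\varepsilon)^{1/4}\|^2$ (Propositions \ref{prop:rec_termes} and \ref{prop:estimation_series}), partly because the same computation must also deliver the dimension-quantified series $\sum_n\gamma_{n+1}\|\sqrt{|\nabla f(\theta_n)|}\|^4<\infty$ needed for Theorem \ref{thm: cv quanti}; it then invokes Bena\"im's asymptotic pseudo-trajectory machinery (Lemma \ref{lemme: Pseudo-trajectoire} and Theorem \ref{theo:pseudo_traj}) to identify every limit point of $(\theta_n,w_n)$ as a zero of the vector field $H$, which gives $\nabla f(\theta_\infty)=0$ and $w_\infty=0$ simultaneously, and finishes exactly as you do (vanishing increments, connected limit set, $(\Hf-4)$). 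Your lighter Lyapunov function $f(\theta)+\alpha\sum_i\sqrt{w^i+\varepsilon}$ does produce a valid Robbins--Siegmund drift (it is in substance a combination of item $ii)$ and a one-sided version of item $i)$ of Proposition \ref{prop:rec_termes}), and your replacement of the ODE method by a telescoping-plus-martingale argument for $w_n\to0$ is sound: the bracket bounds you invoke only use $\Hp$ with $p=4$, the a.s. boundedness of $f(\theta_n)$ obtained from the Lyapunov convergence, and $(\Hg-3)$. You also correctly recover the a.s. convergence of $f(\theta_n)$ (from convergence of $V_n$ together with $w_n\to0$), which is what makes the final local-finiteness argument work. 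What you give up relative to the paper is only the sharper, dimension-explicit series, which Theorem \ref{thm: cv ps} does not need.

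There is, however, one step that does not stand as written: the identification of \emph{all} limit points of $(\theta_n)$ as critical points. You argue that if $\theta_\ast$ were a non-critical limit point, with $\|\nabla f\|\ge\eta$ on a neighborhood $U$, then ``the infinitely many visits of $(\theta_n)$ to $U$ would force $\sum_n\gamma_{n+1}\|\nabla f(\theta_n)\|^2=\infty$''. This implication is false on its own: since $\gamma_n\to0$, infinitely many visits are perfectly compatible with $\sum_{n:\theta_n\in U}\gamma_{n+1}<\infty$, hence with a convergent series. The correct excursion argument must convert visits into \emph{algorithmic time}: either the iterates eventually remain in $U$, in which case divergence is immediate from $\sum\gamma_{n+1}=\infty$, or there are infinitely many crossings from $B(\theta_\ast,\rho)$ to the complement of $B(\theta_\ast,2\rho)$; along each crossing one writes $\rho\le \varepsilon^{-1/2}\sup_{\mathcal N}\|\nabla f\|\cdot\sum_{k\in\mathrm{exc}}\gamma_{k+1}+\|M_{n_j}-M_{m_j}\|$, where $M_n=\sum_{k\le n}\gamma_{k}\,\xi_{k}/\sqrt{w_{k-1}+\varepsilon}$ is a martingale whose bracket is bounded by a constant times $\sum_k\gamma_{k}^2\sigma_{k}^2(d+f(\theta_{k-1}))<\infty$ a.s., so that $M_n$ converges a.s. and its increments along late excursions vanish; this yields a uniform lower bound on $\sum_{k\in\mathrm{exc}}\gamma_{k+1}$, and each excursion then contributes at least $\eta^2$ times that constant to $\sum_n\gamma_{n+1}\|\nabla f(\theta_n)\|^2$, giving the contradiction. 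The conclusion is therefore correct and the repair uses only tools already present in your proposal (the same type of martingale-convergence estimate you used for $w_n$), but the crossing-time bound --- or, as in the paper, the appeal to Theorem \ref{theo:pseudo_traj} --- is a genuinely missing ingredient as your argument is currently phrased.
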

Theorem \ref{thm: cv ps} is a purely asymptotic convergence results. It provides the convergence of our adaptive algorithm \eqref{def:adam} towards a set of \textit{critical points} under mild assumptions on the noise sequence and on the function $f$. We emphasize that this results holds for a standard setup on stochastic algorithms with a decreasing learning rate $(\gamma_{n})_{n \ge 1}$. We observe that the essential condition involved in this result is the convergence of the series that depend on $(\gamma_{n},p_n,\sigma_n^2)$. In particular, when $\gamma_n = \gamma_1 n^{-\beta}$, we observe that Theorem \ref{thm: cv ps} holds when:
$$
\beta \in (1/2,1] \quad \text{and} \quad \beta+r+2s > 1.
$$
From a theoretical point of view, the less restrictive situation corresponds to the choice $\beta=1$ since the series converges as soon as $\sigma_{n+1}^2 p_n$ decreases like $\log(n)^{-2}$. It implies that either we need to use a very lengthy decrease of the update induced by $(p_n)_{n \ge 1}$, or use a very lengthy increase of the minibatch proportional, with a batch of size $\log^2(n)$ at step $n$.
Of course, this last condition holds as soon as $r+2s>0$.
When $\beta$ is chosen lower than $1$, the condition becomes $r+2s>1-\beta$, which may lead to a larger computational cost.

\paragraph{Rate of ``convergence''}

Using the point of view introduced in \cite{GhadimiLan} to assess the computational cost of non-convex stochastic optimization, it is possible to derive a more quantitative result on the sequence $(\theta_n)_{n \ge 1}$.
This result is stated in terms of the expected value of the gradient of $f$ all along the algorithm. A $\delta$-approximation computational cost is then the number of samples that are necessary to obtain an average value below $\delta$. 
\begin{theorem}\label{thm: cv quanti}
Assume that $\Hf$ and $\mathbf{H}_{\sigma}^p$ hold for $p=4$ and consider an integer $N>0$ and $\tau$ an integer sampled uniformly over $\{1,\ldots,N\}$: \begin{itemize}
\item[$i)$] If $\gamma_{n}= \gamma = \frac{1}{d \sqrt{N}}$ and $p_{n}=q_n=\frac{1}{\sqrt{N}}$ and $\sigma_n^2=1$, then
		$$
		\PE\left[ \left\|\sqrt{|\nabla f(\theta_\tau)|}\right\|^4\right] = \mathcal{O}\left(dN^{-1/2}\right)
		$$ 
		and the computational cost to obtain a $\delta$-approximation is $d^2\delta^{-2}$.

		\item[$ii)$] 	If   $\gamma_{n}= \gamma = \frac{1}{ \sqrt{N}}$  and $p_{n}=q_n=1$ and $\sigma_{n}^2 =\frac{1}{d \sqrt{N}}$, then
		$$
		\PE\left[ \left\|\sqrt{|\nabla f(\theta_\tau)|}\right\|^4\right] = \mathcal{O}\left(N^{-1/2}\right)
		$$
		and the computational cost to obtain a $\delta$-approximation is of order $d \delta^{-3}$.
		\item[$iii)$] If  $\gamma_{n}= \gamma = \frac{1}{ \sqrt{ d N}}$ and $p_{n}=q_n=\frac{1}{\sqrt{d N}}$ and $\sigma_n^2=1$, then
		$$
		\PE\left[ \left\|\sqrt{|\nabla f(\theta_\tau)|}\right\|^4\right] = \mathcal{O}\left(dN^{-1/2}\right)
		$$ 
		and the computational cost to obtain a $\delta$-approximation is of order $d\delta^{-2}$.
		\item[$iv)$] 	If   $\gamma_{n}= \gamma = \frac{1}{ \sqrt{N}}$  and $p_{n}=q_n=\frac{1}{ \sqrt{N}}$ and $\sigma_{n}^2 =\frac{1}{d }$, then
		$$
		\PE\left[ \left\|\sqrt{|\nabla f(\theta_\tau)|}\right\|^4\right] = \mathcal{O}\left(N^{-1/2}\right)
		$$ 
		and the computational cost to obtain a $\delta$-approximation is of order $d \delta^{-2}$.
		\end{itemize}
\end{theorem}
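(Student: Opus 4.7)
The plan is to adapt the Ghadimi--Lan framework to the preconditioned update in \eqref{def:adam}. The key identity $\|\sqrt{|u|}\|^4=\|u\|_1^2$ shows that the target quantity is the time-averaged squared $\ell^1$-norm of the gradient, which I will estimate through the adaptively-scaled $\ell^2$-quantity $\sum_i(\partial_i f(\theta_n))^2/\sqrt{w_n^i+\varepsilon}$ and then convert via Cauchy--Schwarz. The proof splits into three stages: (a) a one-step descent inequality for $f(\theta_n)$, (b) a uniform-in-$n$ moment bound on $w_n$, and (c) the Cauchy--Schwarz passage between the adaptive $\ell^2$-quantity and $\|\nabla f\|_1^2$, followed by the separate calibrations for the four regimes.

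For stage (a), I apply the descent inequality from $(\Hf-2)$ to $\theta_{n+1}-\theta_n=-\gamma g_{n+1}/\sqrt{w_n+\varepsilon}$, take the conditional expectation w.r.t.\ $\Fn$, and use the unbiasedness of $g_{n+1}$ and the $\Fn$-measurability of $w_n$ to obtain
$$
\mathbb{E}[f(\theta_{n+1})\mid\Fn]\le f(\theta_n)-\gamma\sum_{i=1}^d\frac{(\partial_i f(\theta_n))^2}{\sqrt{w_n^i+\varepsilon}}+\frac{L\gamma^2}{2\varepsilon}\mathbb{E}[\|g_{n+1}\|^2\mid\Fn].
$$
Writing $g_{n+1}=\nabla f(\theta_n)+\sigma_{n+1}\zeta_{n+1}$, using $(\Hf-3)$ for $\|\nabla f(\theta_n)\|^2\le c_f f(\theta_n)$, and using $\mathbf{H}_{\sigma}^p$ with $p=4$ to get $\mathbb{E}[\|\zeta_{n+1}\|^2\mid\Fn]\le\sqrt{c}(d+f(\theta_n))$ via Jensen, this turns into a Robbins--Siegmund-type recursion with residual of order $d\gamma^2\sigma_{n+1}^2$. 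Summing from $1$ to $N$, taking total expectation, and absorbing the $f(\theta_n)$ factor into the left-hand side---which is legitimate precisely under each of the four calibrations of $\gamma$---yields
$$
\gamma\sum_{n=1}^N\mathbb{E}\!\left[\sum_i\frac{(\partial_i f(\theta_n))^2}{\sqrt{w_n^i+\varepsilon}}\right]\lesssim f(\theta_1)+d\gamma^2 N\sigma_N^2+\gamma^2 N.
$$

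Stage (b) exploits the recursion $w_{n+1}=(1-\gamma q_n)w_n+\gamma p_n g_{n+1}^2$: since $\gamma q_n<1$ by $(\Hg-1)$ and $\mathbb{E}[\|g_{n+1}\|^2\mid\Fn]\lesssim(1+\sigma_{n+1}^2)d+f(\theta_n)$, iterating gives the uniform bound $\mathbb{E}[\|w_n\|_1]\lesssim (p_\infty/q_\infty)(1+\sigma_n^2)d$. Stage (c) then uses coordinatewise Cauchy--Schwarz
$$
\|\nabla f(\theta_n)\|_1^2\le\left(\sum_i\frac{(\partial_i f(\theta_n))^2}{\sqrt{w_n^i+\varepsilon}}\right)\left(\sum_i\sqrt{w_n^i+\varepsilon}\right),
$$
combined with Jensen to control the second factor in expectation by $\sqrt{d\,\mathbb{E}[\|w_n\|_1+d\varepsilon]}$; after a further Cauchy--Schwarz in $\omega$ (or simply a uniform bound on that factor), averaging over $\tau\sim\mathrm{Unif}\{1,\ldots,N\}$ produces the stated rates. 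Substituting the four choices of $(\gamma,p_n,q_n,\sigma_n)$ collapses the estimate to $\mathcal{O}(dN^{-1/2})$ in cases (i) and (iii) and to $\mathcal{O}(N^{-1/2})$ in cases (ii) and (iv). The computational cost then follows by counting the number of gradient samples per iteration, namely $1$ in cases (i), (iii), (iv) and $1/\sigma_n^2=d\sqrt{N}$ in case (ii), yielding the announced $d^2\delta^{-2}$, $d\delta^{-3}$, $d\delta^{-2}$, $d\delta^{-2}$ costs.

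\textbf{Main obstacle.} The delicate part is obtaining the correct dimension dependence, because the preconditioner $\sqrt{w_n+\varepsilon}$ is coupled to the past noisy gradients and hence to the trajectory itself. The crude bound $1/(w_n^i+\varepsilon)\le 1/\varepsilon$ suffices for the descent inequality, but the Cauchy--Schwarz bridge between $\|\nabla f\|_1^2$ and the adaptively-scaled $\ell^2$-quantity introduces a factor $\sqrt{d\,\mathbb{E}[\|w_n\|_1]}$ whose control crucially relies on the fourth-moment assumption in $\mathbf{H}_{\sigma}^p$. The four calibrations in the statement are exactly the different ways of balancing $\gamma$, $p_n/q_n$ and $\sigma_n$ so that these two estimates combine to the same overall rate; the careful bookkeeping of how $d$ propagates through both the descent and the $w$-recursion, rather than any single inequality, is the main labor of the proof.
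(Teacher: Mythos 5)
Your stages (a) and (b) reproduce, in weaker form, points $ii)$ and $i)$ of Proposition~\ref{prop:rec_termes}, but the decisive stage (c) is exactly where the argument has a genuine gap. You bound $\|\sqrt{|\nabla f(\theta_n)|}\|^4 \le A_n B_n$ with $A_n=\big\|\nabla f(\theta_n)/(w_n+\varepsilon)^{1/4}\big\|^2$ and $B_n=\|(w_n+\varepsilon)^{1/4}\|^2$, and then propose to decouple the two factors in expectation. Neither of the two devices you invoke is available: a ``further Cauchy--Schwarz in $\omega$'' requires a control of $\PE[A_n^2]$, which your first-moment descent recursion of stage (a) does not provide; and a uniform almost-sure bound on $B_n$ does not exist, since $w_n$ is fed by the squared noisy gradients and the setting deliberately makes no boundedness assumption on the noise or on the iterates. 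Worse, even granting the decoupling $\PE[A_nB_n]\lesssim \PE[A_n]\,\sup_n\PE[B_n]$, the outcome is quantitatively too weak: $B_n=\sum_{i=1}^d\sqrt{w_n^i+\varepsilon}\ge d\sqrt{\varepsilon}$ deterministically, so this route yields at best $d^2N^{-1/2}$ in cases $i)$, $iii)$ and $dN^{-1/2}$ in cases $ii)$, $iv)$ --- a full factor $d$ worse than the statement, which destroys precisely the dimension dependence and the computational costs ($d^2\delta^{-2}$, $d\delta^{-3}$, $d\delta^{-2}$) that the theorem is about.

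The paper's proof never separates $A_n$ from $B_n$ in expectation: the Cauchy--Schwarz bridge \eqref{eq:cs_malin} is applied pathwise inside the one-step drift of the product $f(\theta_n)\|(w_n+\varepsilon)^{1/4}\|^2$, which is built into the Lyapunov function $V_{a,b}(\theta,w)=\|\sqrt{w+\varepsilon}\|^2+af^2(\theta)+bf(\theta)\|(w+\varepsilon)^{1/4}\|^2$ (point $iv)$ of Proposition~\ref{prop:rec_termes} and point $i)$ of Proposition~\ref{prop:estimation_series}). There the descent term $-\gamma_{n+1}\|\nabla f(\theta_n)/(w_n+\varepsilon)^{1/4}\|^2$ is multiplied by the \emph{current} $\|(w_n+\varepsilon)^{1/4}\|^2$ and directly produces $-\gamma_{n+1}\|\sqrt{|\nabla f(\theta_n)|}\|^4$; the price is not a factor $d$ but the cross term $\gamma_{n+1}p_nf(\theta_n)\|\nabla f(\theta_n)/(w_n+\varepsilon)^{1/4}\|^2$, which is cancelled by the matching negative term in the recursion for $f^2$ (point $iii)$, via the tuning $b>2\sup_n p_n$, $2a\ge bp_n$), the remainders being absorbed into $s_n,t_n$ with $\sum_{n\le N}(s_n\vee t_n)=\mathcal{O}_d(1)$ under each calibration; the theorem then follows by dividing by $N\gamma$. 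Your scheme has no analogue of the $f^2$ recursion nor of the product component, so to close the gap you would essentially have to rebuild this Lyapunov function. A minor additional point: your cost count in case $iv)$ is internally inconsistent --- with $\sigma_n^2=1/d$ each iteration uses a mini-batch of size $d$, and the cost $d\delta^{-2}$ comes from $N\sigma^{-2}$ with $N\asymp\delta^{-2}$, not from one sample per iteration.
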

 We emphasize that this last result is not a real convergence result, which is indeed impossible to derive with a constant step-size stochastic algorithm. Nevertheless, it may be seen as a benchmark result following the usages in non-convex machine learning optimization.
It is a convenient way to assess a mean square convergence of stochastic optimization algorithm with non-convex landscape (see \textit{e.g.} \cite{GhadimiLan}).

We recover in this result a more quantitative result that translates both the linear effect of the dimension on the ``convergence'' rate and the dependency of the final bound in terms of $N^{-1/2}$ when the algorithm is randomly stopped uniformly between iteration $1$ and $N$. The presence of both $d$ and of $N^{-1/2}$ is not surprising as it already appears to be the minimax rate of convergence in stochastic optimization with weakly convex landscapes (see \textit{e.g.} \cite{Nemirovski_Yudin83}).

If we translate the upper bound of $i)$ into a complexity bound, we observe that for any $\delta>0$, we need to fix $N$ such that
$$
d N^{-1/2} \leq \delta \Longleftrightarrow N \ge (d \delta^{-1})^2.
$$
When we use a mini-batch strategy with $d \sqrt{N}$ samples at each iteration such that the error bound produced by $iv)$ is lower than $\delta$, we observe that $N$ has to be chosen of the order $\delta^{-2}$ and the overall procedure  may be improved (when compared to the first setting) since we obtain a $d \delta^{-3}$ computational cost.
Finally, the otimal tuning of the algorithm seems to be the last ones, where $\sigma^2$ is chosen of the order $d^{-1}$ and $\gamma_n \propto p_n \propto N^{-1}$, or $\sigma^2=1$ and $\gamma=p=q=(d N)^{-1/2}$,
which leads to a $d \delta^{-2}$ computational cost.
As discussed in Section \ref{sec:proof_thm2}, with this strategy, it seems impossible to improve the $d \delta^{-2}$ computational cost obtained with other choices of the parameters.

Even if of rather minor importance, our result is stated with the help of 
$\PE\left[\left\|\sqrt{|\nabla f(\theta_\tau)|}\right\|^4\right]$, instead of $\PE\left[\|\nabla f(\theta_\tau)\|^2\right]$ used in \cite{Adagrad_Ward,Adam_Bach} and $\PE\left[\|\nabla f(\theta_\tau)\|^{4/3}\right]^{2/3}$ used in \cite{Adam_Zou}. It is therefore slightly stronger since (using our vectorial notations):
$$ \left\|\sqrt{|\nabla f|}\right\|^4  \ge \|\nabla f\|^2.$$
A such improvement comes from a careful tuning of a Lyapunov function that is not exactly the same as the one used in these previous works. We refer to Sections \ref{sec:debut_lyap} and \ref{sec:proof_thm2} for further details.
Finally, we also point out that when the sequence $(\gamma_n)_{n \ge 1}$ is kept fixed, as indicated in the paragraph \ref{sec:two_time_scale} 
it corresponds to a choice of $\beta_2< 1$ kept constant all over the time evolution (see Equation \eqref{eq:reparam_2}) and $p = q = \frac{1}{\sqrt{N}}$. This result with this range of parameters appears to be in line with those of \cite{Adagrad_Ward,Adam_Bach}, but in our work the assumptions on the noise sequence and on the function $f$ are significantly weaker.

\paragraph{Almost sure convergence towards a \textit{minimizer}}
In this paragraph, we assess the almost sure convergence of the sequence $(\theta_n)_{n \ge 1}$  towards a local minimum of $f$ and state that the algorithm cannot converge towards an unstable (hyperbolic) point of the dynamical system, \textit{e.g.} cannot converge towards a saddle point or a local maximum of $f$.

\begin{theorem}\label{thm:conv min}
Assume that $\Hf$, $\mathbf{H}_{\emph{Steps}}$ and $\mathbf{H}_{\sigma}^\infty$ hold. Assume that  $f$ is twice differentiable. Suppose $p_\infty=0$, $|q_n-q_{\infty}|= \mathcal{O}(p_n)$, $\gamma_{n}=\gamma_1^{-\beta}$ and let $(\beta,r,s)$ be chosen such that:
$$\frac{1}{2} < \beta < 1  \quad \text{and} \quad  s \leq \frac{1-\beta}{2}\quad \text{and} \quad \left(1-\beta\right) \vee \left(\frac{\beta}{2} +s\right) < r <\beta.$$ Then almost surely the sequence $(\theta_{n})_{n\ge 1}$  does not converge towards a local maximum of $f$. 
\end{theorem}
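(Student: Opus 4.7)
The plan is to proceed by contradiction following the Pemantle--Brandi\`ere--Duflo trap-avoidance strategy for stochastic approximations. Theorem~\ref{thm: cv ps} already secures $(\theta_n,w_n)\to(\theta^{\star},0)$ almost surely for some critical point $\theta^{\star}$, and $(\Hf-4)$ rules out accumulation of critical points sharing the same value, so it suffices to fix a local maximum $\theta^{\star}$ and prove $\PP(\theta_n\to\theta^{\star})=0$ by exhibiting a direction in which the iterates are driven away.

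\textbf{Reduction to a scalar recursion.} On the event $\{\theta_n\to\theta^{\star}\}$, two expansions become available for large $n$: the Taylor expansion $\nabla f(\theta_n)=H(\theta_n-\theta^{\star})+O(\|\theta_n-\theta^{\star}\|^2)$ with $H=\nabla^2 f(\theta^{\star})$ negative semi-definite, and the preconditioner expansion $1/\sqrt{w_n^i+\varepsilon}=1/\sqrt\varepsilon+O(w_n^i)$ coordinatewise. Because $\theta^{\star}$ is a local \emph{maximum}, $-H$ admits a strictly positive eigenvalue $\lambda$ with unit eigenvector $u$, and this is precisely where the ``maximum'' hypothesis enters the argument. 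Projecting onto $u$ and setting $U_n=\langle u,\theta_n-\theta^{\star}\rangle$ yields
\begin{equation*}
U_{n+1}=\Bigl(1+\tfrac{\lambda\gamma_{n+1}}{\sqrt\varepsilon}\Bigr)U_n-\tfrac{\gamma_{n+1}\sigma_{n+1}}{\sqrt\varepsilon}\langle u,\zeta_{n+1}\rangle+\rho_{n+1},
\end{equation*}
where $\rho_{n+1}$ collects the quadratic-in-$\|\theta_n-\theta^{\star}\|$ remainder, the preconditioner correction of order $\gamma_{n+1}w_n\|g_{n+1}\|$, and the coupling with the stable subspace of $H$.

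\textbf{Controlling the remainder.} A separate analysis of the $w_n$-recursion, exploiting $p_\infty=0$, $q_n\to q_\infty>0$ and $|q_n-q_\infty|=\mathcal{O}(p_n)$, yields $w_n=\mathcal{O}(p_n\sigma_n^2)$ almost surely on the convergence event. The specific step-size conditions $s\le(1-\beta)/2$ and $(1-\beta)\vee(\beta/2+s)<r<\beta$ are then tailored to make each piece of $\rho_{n+1}$ negligible compared with the noise increment $\gamma_{n+1}\sigma_{n+1}$: the bound $r>1-\beta$ secures $\sum p_n\gamma_n<\infty$ (as in Theorem~\ref{thm: cv ps}), the bound $r>\beta/2+s$ ensures that the preconditioner correction is summable against the explosion factor introduced below, the upper bound $r<\beta$ preserves $\gamma_n/p_n\to 0$ from $(\Hg-1)$, and the bound $s\le(1-\beta)/2$ prevents the noise from decaying too fast relative to the unstable drift.

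\textbf{Martingale blow-up and main obstacle.} Set $P_n=\prod_{k\le n}(1+\lambda\gamma_k/\sqrt\varepsilon)$; since $\beta<1$, $\sum\gamma_k=\infty$ forces $P_n\to\infty$. The renormalised sequence $V_n=U_n/P_n$ becomes, modulo the summable error $\sum\rho_{k+1}/P_{k+1}$, a square-integrable martingale whose increments have conditional variance bounded below by $m\gamma_{n+1}^2\sigma_{n+1}^2/(\varepsilon P_{n+1}^2)$ thanks to the ellipticity $(\mathbf{H}_\sigma^\infty-2)$, with fourth-moment control from $(\mathbf{H}_\sigma^\infty-1)$. Hence $V_n\to V_\infty$ in $L^2$ with $V_\infty$ non-degenerate, and on $\{V_\infty\neq 0\}$ we have $|U_n|=|V_n|P_n\to\infty$, contradicting $\theta_n\to\theta^{\star}$. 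The main technical difficulties are twofold: first, $\rho_{n+1}$ is not a martingale increment but a state-dependent remainder coupled with $w_n$, which I would handle by a bootstrap argument (start from the crude bound $\|\theta_n-\theta^{\star}\|\to 0$ obtained from Theorem~\ref{thm: cv ps}, refine it via a Lyapunov estimate on $U_n^2$ to $\|\theta_n-\theta^{\star}\|^2=\mathcal{O}(\gamma_n)$ on the convergence event, then feed this rate back into $\rho_{n+1}$); second, ruling out $V_\infty=0$ almost surely requires an absolute-continuity statement about the law of $V_\infty$, usually obtained via a Kakutani-type argument on the product structure of the martingale increments, again using $(\mathbf{H}_\sigma^\infty-2)$.
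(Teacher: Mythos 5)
Your overall route (linearize at the local maximum, project on an expanding eigendirection, renormalize by the product $P_n=\prod_k(1+\lambda\gamma_k/\sqrt\varepsilon)$ and argue martingale blow-up) is genuinely different from the paper, which never writes a scalar linearized recursion: it works with the Bena\"im--Hirsch function $\eta$ from Proposition~\ref{prop:instable} (distance to the local stable manifold in the expanding direction, with $\langle\nabla\eta,H\rangle\ge\kappa\eta$ and an almost-concavity inequality), and runs a Pemantle/Brandi\`ere--Duflo two-stage stopping-time argument: Proposition~\ref{prop:T_fini} shows the process $S_n$ built from the increments of $\eta$ exceeds the moving threshold $\sqrt{b u_q}$ with conditional probability at least $1/2$, and Proposition~\ref{prop:S_infini} shows that once above it, $S_n$ stays above $\tfrac12\sqrt{b u_q}$ with positive probability, whence $S_n\not\to0$ a.s.; convergence to a local maximum would force $S_n=\eta(Z_n)\to0$, a contradiction. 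The quantitative role of the exponents there is to compare the drift/bias scale $\delta_n\sim p_n\sigma_{n+1}^2+\nu_n+\gamma_{n+1}$ with the noise scale $\sqrt{\gamma_n}\,\sigma_n$ and the tail sum $u_q\sim q^{1-2(\beta+s)}$ (this is where $r>\beta/2+s$ and $s\le(1-\beta)/2$ enter), which matches only loosely the roles you assign to these conditions.

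However, your proposal has a genuine gap exactly at its crux. First, the $L^2$-martingale argument gives at best that $V_\infty$ has positive variance, i.e.\ $\PP(V_\infty\neq0)>0$; the theorem requires $\PP(\theta_n\to\theta^{\star})=0$, i.e.\ that $V_\infty\neq0$ almost surely \emph{on the convergence event}, and you cannot get this by conditioning on $\{\theta_n\to\theta^{\star}\}$ since conditioning destroys the martingale property and the increments' conditional laws are state-dependent (they depend on $w_n$, $\theta_n$ and on the vanishing factors $\sigma_{n+1}$), so no Kakutani/product-measure absolute-continuity argument applies off the shelf. This non-degeneracy step is precisely what the paper's Propositions~\ref{prop:repulsion}--\ref{prop:S_infini} are engineered to replace, using conditional second-moment lower bounds ($\PE[\|\pi_+(\Delta M_{n+1})\|\,\vert\,\F_n]\ge\sigma_{n+1}/2$ from the ellipticity assumption), optional stopping, and the threshold sequence $\sqrt{bu_q}$; deferring it to ``a Kakutani-type argument'' leaves the hardest part of the theorem unproved. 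Second, your recursion for $U_n$ and all the remainder bounds ($w_n=\mathcal{O}(p_n\sigma_n^2)$, the bootstrap $\|\theta_n-\theta^{\star}\|^2=\mathcal{O}(\gamma_n)$) are only valid while the iterate stays in a neighborhood of $\theta^{\star}$, hence only on an event; to use them inside $L^2$ computations you must localize with an exit time (as the paper does with $T_{n_0}$ in \eqref{def:T}) and redefine the process after exit, which your proposal omits. Moreover the bootstrap rate $\|\theta_n-\theta^{\star}\|^2=\mathcal{O}(\gamma_n)$ near an \emph{unstable} equilibrium is asserted, not proved, and is delicate precisely because convergence can only occur along the stable set; the paper avoids needing any such rate by confining all estimates to a compact neighborhood and using the inequality $\eta(z+u)\ge\eta(z)+\langle\nabla\eta(z),u\rangle-\k\|u\|^2$.
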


Several remarks are necessary after our last theorem, that identifies not only the limit points as the critical points of $f$ but as local minimizers. Hence, our contribution should be understood as a new example of stochastic method that avoids local traps, and then compared to \cite{Pemantle,Brandiere,GPS2018,BarakatBianchi2}. 

Moreover, we point out that our result holds for every initialization point  and  do not use any integration over $(\theta_0,w_0)$.
Hence, the nature of our result is different from the ones obtained in recent contributions in the field of machine learning (see \textit{e.g.} \cite{Lee2016,Lee2017}): we establish that our \textit{stochastic algorithm} converges with probability 1 to a minimizer, which is different from proving that a \textit{deterministic or randomized algorithm} (gradient descent in \cite{Lee2017} for example) randomly initialized converges to a local minimum with probability 1. 
\begin{figure}
\begin{center}
\begin{tikzpicture}[scale=4.5]
\draw[->] (-0.25,0) -- (1.25,0);
\draw (1.25,0) node[right] {$s$};
\draw [->] (0,-0.25) -- (0,4/5);
\draw (0,4/5) node[above] {$r$};
\draw [dashed] (1,0.025)-- (1,0) node[below] {$1$};
\fill[color=gray!20] 
 (0,5/12)  -- (5/24,0)
-- (1.2,0)
-- (1.2,7/12)
-- (0,7/12) -- cycle;

\fill[color=red!20] 
 (0,5/12)  -- (3/24,5/12)
-- (5/24,1/2)
-- (5/24,7/12)
-- (0,7/12) -- cycle;

\draw [dashed] (1,0.015)-- (1,0) node[below] {$1$};
\draw [dashed] (7/12,0.015)-- (7/12,0) node[below] {$\beta$};
\draw [dashed] (1/2,0.015)-- (1/2,0) node[below] {$\frac{1}{2}$};
\draw [dashed] (0.025,5/12)-- (0,5/12) node[left] {$1-\beta$};
\draw [dashed] (0.025,7/12)-- (0,7/12) node[left] {$\beta$};
\draw [dashed] (3/24+0.01,0.015)-- (3/24+0.01,0) node[below] {$1-\frac{3\beta}{2}$};
\draw [dashed] (5/24+0.01,0.015)-- (5/24+0.01,0) node[above] {$\frac{1-\beta}{2}$};
\draw (-0.02,-0.0) node[left,below] {$0$};

\node (A) at (0.7,0.85)[color=red]{Local minimizers};
\node (C) at (0.6,1/3){Critical points};

\draw[<-,color=red] (0.05,0.5) -- (A) ;
\end{tikzpicture}
\begin{tikzpicture}[scale=4.5]
\draw[->] (-0.25,0) -- (1.25,0);
\draw (1.25,0) node[right] {$s$};
\draw [->] (0,-0.25) -- (0,4/5);
\draw (0,4/5) node[above] {$r$};
\fill[color=gray!20] 
 (0,1/4)  -- (1/8,0)
-- (1.2,0)
-- (1.2,3/4)
-- (0,3/4) -- cycle;

\fill[color=red!20] 
 (0,3/8)  -- (1/8,1/2)
-- (1/8,3/4)
-- (0,3/4) -- cycle;

\draw [dashed] (1,0.015)-- (1,0) node[below] {$1$};
\draw [dashed] (3/4,0.015)-- (3/4,0) node[below] {$\beta$};
\draw [dashed] (1/2,0.015)-- (1/2,0) node[below] {$\frac{1}{2}$};
\draw [dashed] (1/8,0.015)-- (1/8,0) node[below] {$\frac{1-\beta}{2}$};
\draw (-0.02,-0.0) node[left,below] {$0$};
\draw [dashed] (0,1/4) -- (1,1/4);
\draw [dashed] (0,3/4) -- (1,3/4);
\draw (0,1/4) node[left] {$1-\beta$} [dashed] (0,1/4)-- (-0.025,1/4);
\draw (0,3/8) node[left] {$\beta/2$} [dashed] (0,3/8)-- (-0.025,3/8);
\draw (0,3/4) node[left] {$\beta$} [dashed] (0,3/4)-- (-0.025,3/4);

\node (A) at (0.7,0.85)[color=red]{Local minimizers};
\node (C) at (0.6,1/3){Critical points};

\draw[<-,color=red] (0.05,0.5) -- (A) ;
\end{tikzpicture}
\end{center}
\caption{Nature of our convergence results when $(s,r)$ are chosen according to the statements of Theorem \ref{thm: cv ps} and Theorem \ref{thm:conv min} when 
$\gamma_{n} = \gamma_1 n^{-\beta}$ and $\beta \ge 1/2$. Left: $\beta \le 2/3$. Right: $\beta \ge 2/3$. The mini-batch size is $\sigma_n^{-2} \propto n^{2s}$.}
\end{figure}
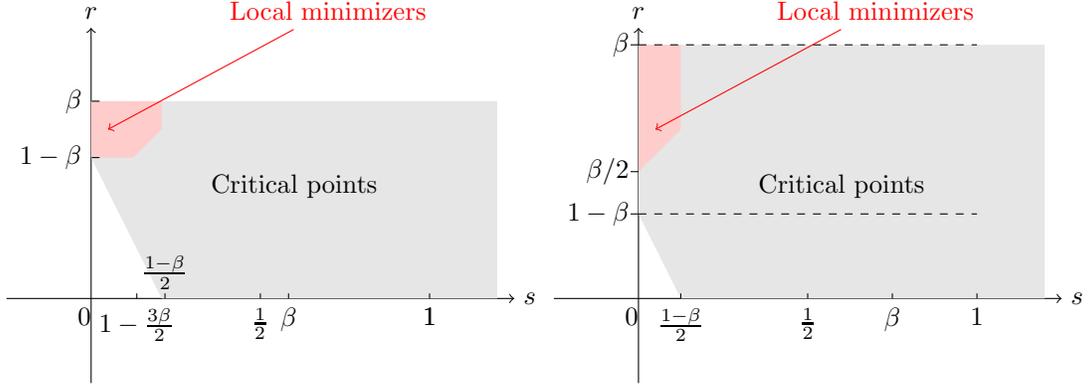

	When the variance of the noise sequence is kept fixed all along the iterations (no mini-batch is used, so that $s=0$), the previous conditions on the parameters can be summarized as: $\beta \in (1/2,1)$; $r\in (1-\beta,\beta)$, when $\beta<2/3$ and $r\in (\beta/2,\beta)$ when $\beta\ge 2/3$. 

Finally, we should emphasize that this last result is rather difficult to obtain when the mini-batch parameter $s$ is chosen strictly greater than $0$ since it translates a possible vanishing level of noise when the number of iterations is increasing. Our assumption shows that the size of the mini-batch should not grow to fast (induced by the condition $s \leq (1-\beta)/2$) to obtain the convergence towards a local minimizer of $f$. Up to our knowledge, a such explicit phenomenon is new in the stochastic algorithm community. It would deserve further numerical or theoretical  investigations to identify whether this condition is necessary to convert an almost sure convergence result towards critical points into a convergence result towards a stable point of the differential system. The limiting condition seems to be
$\beta=1, s \le 1/2$ and $r \in [1/2,1]$. As a really ambitious question, we leave this problem for future developments and up to our knowledge, the maximal size of the mini-batch that guarantees convergence to local minimizer is still an unresolved question even for the SGD.

\subsection{Organization of the paper}
The rest of the paper consists in showing the proofs of the previous results.

Theorems \ref{thm: cv ps} and \ref{thm: cv quanti} are proven in Section \ref{sec:as}. In particular, Proposition \ref{prop:rec_termes} studies the average one-step evolution of the algorithm through several key functions. This proposition permits to derive a Lyapunov function in Section \ref{sec:proof_thm2} that translates both the average quantitative result of Theorem \ref{thm: cv quanti} and the asymptotic convergence result of Theorem \ref{thm: cv ps}. The main difficulties in these two results is to derive a mean reverting effect in terms of $\|\nabla f(\theta_n)\|^2$ without using some extra boundedness assumption, and to assess the influence of $d$ on the quantitative result.\\

Theorem \ref{thm:conv min} is a typical result of stochastic algorithms, and is inspired from the seminal contributions of \cite{Pemantle} and \cite{benaim1995dynamics}. The proof is detailed in Section \ref{sec:instable} and the cornerstone of this proof is the use of the stable/unstable manifold Lemma that provides an ad-hoc Lyapunov function of the dynamical system, denoted by $\eta$ in Proposition \ref{prop:instable}. We also refer to the recent contribution of \cite{BarakatBianchi2} for another typical application to stochastic algorithms. The main novelty brought in our  proof is to use the mini-batch low noise level and keep the a.s. escape of local maximum true. In particular, from a technical point of view, we take advantage of the boundedness series of Proposition \ref{prop:estimation_series}, which is a key ingredient in the proof of Proposition \ref{prop:S_infini}.

\section{Almost sure convergence to the set of critical points\label{sec:as}}

The purpose of this section is to prove Theorem \ref{thm: cv ps} and Theorem \ref{thm: cv quanti}. In particular, we will obtain Theorem \ref{thm: cv quanti} 
 during the proof of the almost sure convergence result as a specific point of Proposition \ref{prop:estimation_series}, $iii)$. The basic ingredient of our proof relies on the Robbins-Siegmund Theorem \cite{robbins1971convergence} that will be applied with the help of an ad-hoc Lyapunov function on $(\theta_n,w_n)_{n \ge 1}$.

\subsection{Preliminary computations\label{sec:debut_lyap}}
 Below, we will pay a specific attention to the dimension dependency in the inequalities we will obtain.
We first state the following proposition that will create a mean reverting effect from iteration $n$ to iteration $n+1$ on the pair $(\tn,\wn)$.

\begin{proposition}\label{prop:rec_termes}
	Assume that $ \|\gamma_n q_n\|_{\infty} < 1$, that $\|p_n\|_{\infty}<+\infty $,  that $\Hf$ and $\mathbf{H}_{\sigma}^p$ hold for $p=4$.
	\begin{itemize}
		\item[$i)$] For any $n \ge 1$, one has:
		\begin{align*}		
		\PE[\|\sqrt{w_{n+1}+\varepsilon}\|^2|\mathcal{F}_n]&\le  \|\sqrt{w_{n}+\varepsilon}\|^2 -q_n \gamma_{n+1} \|\sqrt{w_{n}}\|^2  \\
		& + \gamma_{n+1} p_n \| \nabla f(\theta_n) \|^2  + \gamma_{n+1} \sigma_{n+1}^2p_n(\textcolor{black}{d}+f(\theta_{n})).
		\end{align*}
		\item[$ii)$] A constant $\cu$ independent from $n$ and $\textcolor{black}{d}$ exists such that for any $n \ge 1$:
		
		$$
		\mathbb{E}[ f(\tnp) \, \vert| \mathcal{F}_n] \leq f(\tn) (1+ \cu \gamma_{n+1}^2)+ \cu \textcolor{black}{d} \gamma_{n+1}^2 \sigma_{n+1}^2-  \gamma_{n+1} \left|\left|\dfrac{\nabla f (\theta_n)}{(w_n+\varepsilon)^{1/4}}\right|\right|^2.
		$$
		\item[$iii)$]
%
		If we define 
		$$s_n:= \gamma_{n+1}^2(1+ \sigma_{n+1}^2d+\gamma_{n+1}^2\sigma_{n+1}^4d^2),$$	
		then a constant $\cd$ independent from $n$ and $d$ exists such that:
		\begin{align*}
		\PE[f(\tn)^2| \F_n]&\le f(\tn)^2(1+\cd s_n)- \textcolor{black}{2}\gamma_{n+1} f(\tn)\left\|\dfrac{\nabla f (\theta_n)}{(w_n+\varepsilon)^{1/4}}\right\|^2 +
		\cd s_n. 
		\end{align*}
		\item[$iv)$] 
		If we define 
		
		\begin{equation*}\label{def:t_n}
		t_n:=\gamma_{n+1}p_n(\gamma_{n+1}^2+2\sigma_{n+1}^2d+2\gamma_{n+1}^2\sigma_{n+1}^4d^2)
				\end{equation*}

%
		
		then a constant $\ct$ independent from $n$ and $d$ exists such that:
		
		
			\begin{align*}
		\PE&[f(\tnp)  \|(\wnp+\varepsilon)^{1/4}\|^2 \, \vert \Fn] \leq 
		f(\tn)  \|(\wn+\varepsilon)^{1/4}\|^2 \left(1+\ct \gamma_{n+1}^2 (1+d\sigma_{n+1}^2)\right) \\
		& - \gamma_{n+1} 	\left\|\sqrt{|\nabla f(\tn)|}\right\|^4 +  \gamma_{n+1} p_n f(\tn)\left\|\frac{\nabla f(\tn)}{(w_n+\varepsilon)^{1/4}}\right\|^2 \\
		& + \ct\left[ t_n f^2(\tn)+t_n \large\right].
		\end{align*}
	\end{itemize}
\end{proposition}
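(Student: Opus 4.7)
The proof is a chain of conditional expectation estimates sharing two toolkit items: the Lipschitz-gradient descent inequality \eqref{eq:descent} and the noise moment bound \eqref{Hp} with $p=4$. I would treat the four parts in the stated order, since $(iv)$ reuses $(i)$--$(iii)$. For $(i)$, the update gives coordinatewise $w_{n+1}^i = (1-\gamma_{n+1}q_n)w_n^i + \gamma_{n+1}p_n(g_{n+1}^i)^2$ and, using $\|\sqrt{w+\varepsilon}\|^2 = d\varepsilon + \sum_i w^i$, summation yields $\|\sqrt{w_{n+1}+\varepsilon}\|^2 = \|\sqrt{w_n+\varepsilon}\|^2 - \gamma_{n+1}q_n\|\sqrt{w_n}\|^2 + \gamma_{n+1}p_n\|g_{n+1}\|^2$ before expectation. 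Conditioning on $\Fn$ together with $\mathbb{E}[\|g_{n+1}\|^2|\Fn]\le\|\nabla f(\theta_n)\|^2+\sigma_{n+1}^2 c^{1/2}(d+f(\theta_n))$ (Jensen on the $p=4$ bound) delivers the claim. For $(ii)$, apply \eqref{eq:descent} with $h=-\gamma_{n+1}g_{n+1}/\sqrt{w_n+\varepsilon}$: the linear term conditionally equals $-\gamma_{n+1}\|\nabla f(\theta_n)/(w_n+\varepsilon)^{1/4}\|^2$, while the quadratic one is bounded using $w_n^i\ge 0$ by $\frac{L\gamma_{n+1}^2}{2\varepsilon}\mathbb{E}[\|g_{n+1}\|^2|\Fn]$; injecting $\|\nabla f\|^2\le c_f f$ from $(\Hf-3)$ partitions the remainder into an $f(\theta_n)(1+c_{1,\varepsilon}\gamma_{n+1}^2)$ multiplicative piece plus the additive $c_{1,\varepsilon}d\gamma_{n+1}^2\sigma_{n+1}^2$. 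For $(iii)$, square the pointwise descent inequality before taking conditional expectation: the cross-term $2f(\theta_n)\langle h,\nabla f(\theta_n)\rangle$ supplies the leading $-2\gamma_{n+1} f(\theta_n)\|\nabla f/(w_n+\varepsilon)^{1/4}\|^2$, while $\|h\|^2$ and $\|h\|^4$ require the full $p=4$ moment $\mathbb{E}[\|g_{n+1}\|^4|\Fn]\le C(\|\nabla f\|^4+\sigma_{n+1}^4(d+f(\theta_n))^2)$; Young's inequality then repackages all remainders into $f(\theta_n)^2 s_n + s_n$.

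Part $(iv)$ is the delicate one. Setting $\Phi_n:=\|(w_n+\varepsilon)^{1/4}\|^2=\sum_i\sqrt{w_n^i+\varepsilon}$, I would decompose
\[\mathbb{E}[f(\theta_{n+1})\Phi_{n+1}|\Fn]= f(\theta_n)\Phi_n + \Phi_n\mathbb{E}[f(\theta_{n+1})-f(\theta_n)|\Fn] + f(\theta_n)\mathbb{E}[\Phi_{n+1}-\Phi_n|\Fn] + \mathbb{E}[(f(\theta_{n+1})-f(\theta_n))(\Phi_{n+1}-\Phi_n)|\Fn].\]
Coordinatewise concavity of $x\mapsto\sqrt{x+\varepsilon}$ furnishes $\sqrt{w_{n+1}^i+\varepsilon}\le\sqrt{w_n^i+\varepsilon}+\frac{\gamma_{n+1}(p_n(g_{n+1}^i)^2-q_n w_n^i)}{2\sqrt{w_n^i+\varepsilon}}$, so $\mathbb{E}[\Phi_{n+1}-\Phi_n|\Fn]\le\frac{\gamma_{n+1}p_n}{2}\|\nabla f/(w_n+\varepsilon)^{1/4}\|^2+O_\varepsilon(\gamma_{n+1}p_n\sigma_{n+1}^2(d+f(\theta_n)))$ up to a nonpositive $q_n$-piece. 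Combining with $(ii)$'s linear term, the second summand contributes $-\gamma_{n+1}\Phi_n\|\nabla f/(w_n+\varepsilon)^{1/4}\|^2$, and here the key Cauchy--Schwarz inequality
\[\|\sqrt{|\nabla f(\theta_n)|}\|^4 = \Bigl(\sum_i |\nabla_i f(\theta_n)|\Bigr)^2 \le \Bigl(\sum_i\sqrt{w_n^i+\varepsilon}\Bigr)\Bigl(\sum_i\tfrac{(\nabla_i f(\theta_n))^2}{\sqrt{w_n^i+\varepsilon}}\Bigr) = \Phi_n\Bigl\|\tfrac{\nabla f(\theta_n)}{(w_n+\varepsilon)^{1/4}}\Bigr\|^2\]
converts it into the target $-\gamma_{n+1}\|\sqrt{|\nabla f(\theta_n)|}\|^4$. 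The $+\gamma_{n+1}p_n f(\theta_n)\|\nabla f/(w_n+\varepsilon)^{1/4}\|^2$ contribution arises directly from the $p_n$-driven part of the third summand, and the last product summand is a second-order residual controlled by the already-computed bounds on $\mathbb{E}[\|h\|^2|\Fn]$ and on the quadratic increment of $\Phi$.

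The main obstacle will be the bookkeeping in $(iv)$: dispatching all mixed cross-terms of the form $\gamma_{n+1}^a p_n^b \sigma_{n+1}^c d^e f(\theta_n)^j$ among the three explicit signal contributions $\Psi_n(1+c_{3,\varepsilon}\gamma_{n+1}^2(1+d\sigma_{n+1}^2))$, $-\gamma_{n+1}\|\sqrt{|\nabla f|}\|^4$, $+\gamma_{n+1}p_n f\|\nabla f/(w_n+\varepsilon)^{1/4}\|^2$, and the residual $t_n f^2+t_n$. This requires repeated use of Young's inequality $ab\le\tfrac12\alpha a^2+\tfrac1{2\alpha}b^2$ with tailored weights $\alpha$, splitting $d\cdot f$-type products into a dimension-free $\gamma_{n+1}p_n\sigma_{n+1}^2 f^2$ summand (absorbed into $t_n f^2$ since $d\ge 1$) and a $\gamma_{n+1}^3 p_n\sigma_{n+1}^4 d^2$-compatible residual matching the definition of $t_n$. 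The boundedness of $\sigma_n$ inherited from $(\Hg-2)$ is what keeps $c_{1,\varepsilon}, c_{2,\varepsilon}, c_{3,\varepsilon}$ dimension-free.
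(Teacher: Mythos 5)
Parts $(i)$--$(iii)$ of your plan coincide with the paper's proof (coordinatewise expansion of $\|\sqrt{w_{n+1}+\varepsilon}\|^2$ with the centered noise, the descent lemma applied to $h=-\gamma_{n+1}g_{n+1}/\sqrt{w_n+\varepsilon}$, then squaring and the $p=4$ moment bound), and for $(iv)$ you have correctly identified the two decisive ingredients: the linearization $\sqrt{1+a}\le 1+a/2$ for the increment of $\Phi_n=\|(w_n+\varepsilon)^{1/4}\|^2$ and the Cauchy--Schwarz inequality $\|\sqrt{|\nabla f(\tn)|}\|^4\le\Phi_n\left\|\nabla f(\tn)/(w_n+\varepsilon)^{1/4}\right\|^2$, which is exactly the paper's mechanism; whether one expands the exact identity $f(\tnp)\Phi_{n+1}=f(\tn)\Phi_n+\Phi_n\Delta f+f(\tn)\Delta\Phi+\Delta f\,\Delta\Phi$ or, as the paper does, multiplies the two one-step upper bounds, is only a cosmetic difference.

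The genuine gap is in your treatment of the cross term $\PE[(f(\tnp)-f(\tn))(\Phi_{n+1}-\Phi_n)\,\vert\,\Fn]$, which you dismiss as ``a second-order residual controlled by the already-computed bounds on $\PE[\|h\|^2\,\vert\,\Fn]$ and on the quadratic increment of $\Phi$''. It is not second order: the dominant part of $f(\tnp)-f(\tn)$ is the first-order term $-\gamma_{n+1}\langle\nabla f(\tn),g_{n+1}/\sqrt{w_n+\varepsilon}\rangle$, of size $\gamma_{n+1}$, while $\Phi_{n+1}-\Phi_n$ is of size $\gamma_{n+1}p_n\|g_{n+1}\|^2$ plus a $\gamma_{n+1}q_n\Phi_n$ piece. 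If you control this product by absolute values or by Cauchy--Schwarz on the two increments, you generate a remainder of order $\gamma_{n+1}^2p_n(1+f(\tn))^2$ carrying neither a factor $\sigma_{n+1}^2d$ nor a third power of $\gamma_{n+1}$; such a term is not $O(t_n(1+f^2(\tn)))$, since $t_n/(\gamma_{n+1}^2p_n)=\gamma_{n+1}+\sigma_{n+1}^2d/\gamma_{n+1}+\gamma_{n+1}\sigma_{n+1}^4d^2$ may tend to $0$ (take for instance $\gamma_n= n^{-3/4}$ and $\sigma_n^2d= n^{-1}$), and it cannot be hidden in the allowed $+\gamma_{n+1}p_nf(\tn)\left\|\nabla f(\tn)/(w_n+\varepsilon)^{1/4}\right\|^2$ either, because $w_n$ is not bounded at this stage and $\nabla f(\tn)$ may be small while $f(\tn)$ is not. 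The paper avoids this precisely by never taking absolute values there: it splits $g_{n+1}=\nabla f(\tn)+\sigma_{n+1}\zeta_{n+1}$ inside the first-order term, so that the potentially large product $-\gamma_{n+1}\left\|\nabla f(\tn)/(w_n+\varepsilon)^{1/4}\right\|^2\times\gamma_{n+1}p_n(\cdot)$ is negative and simply discarded, the $\xi_{n+1}$-linear term against $\Fn$-measurable factors vanishes in conditional expectation, and every surviving mixed term carries a power of $\sigma_{n+1}$ that Young's inequality pairs into $t_n$ (e.g. $\gamma_{n+1}\sigma_{n+1}^3d^{3/2}\le\frac12(\sigma_{n+1}^2d+\gamma_{n+1}^2\sigma_{n+1}^4d^2)$); one also needs $\min f>0$ from $(\Hf-1)$ to absorb remainders proportional to $\Phi_n$ alone (the $d\gamma_{n+1}^2\sigma_{n+1}^2\Phi_n$-type terms) into $f(\tn)\Phi_n\,\gamma_{n+1}^2(1+d\sigma_{n+1}^2)$, a point your sketch does not mention. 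Your decomposition can be repaired along these lines, but as written the cross-term step would not deliver the stated bound with the claimed $t_n$.
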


\begin{proof} We consider each point separately.
	
	\noindent $\bullet$ \underline{\textit{Proof of $i)$:}}
	We observe that:
	\begin{align*}
	\mathbb{E}[\|\sqrt{w_{n+1}+\varepsilon}\|^2 \, \vert| \mathcal{F}_n] &= 
	\mathbb{E}\left[\left\|\sqrt{w_n+\gamma_{n+1}[\oo{p_n g_{n+1}}-q_n w_n]+\varepsilon}\right\|^2 \, \vert \Fn\right]\\
	& = \PE \left[ \sum_{i=1}^d w_n^i+\varepsilon + \gamma_{n+1}[p_n \{g_{n+1}^i\}^2-q_n w_n^i]\, \vert \Fn \right] \\
	& \leq   \|\sqrt{w_{n}+\varepsilon}\|^2 - q_n \gamma_{n+1} \|\sqrt{w_{n}}\|^2 + \gamma_{n+1} p_n
	\| \nabla f(\theta_n) \|^2 + \gamma_{n+1} p_n \sigma_{n+1}^2(d+f(\theta_{n})),
	\end{align*}
	where the last line comes from the definition of $\xi_{n+1} = \sigma_{n+1} \zeta_{n+1}$ and the fact that:
	$$\PE[\zeta_{n+1} \, \vert  \Fn] = 0 \quad \text{and} \quad \PE[\|\zeta_{n+1}\|^2 \, \vert  \Fn] \led (\textcolor{black}{d}+f(\theta_{n})).$$
	This concludes the proof. \hfill $\diamond$
	
	\noindent $\bullet$ \underline{\textit{Proof of $ii)$:}}
	We develop $f(\tnp)$: we use the descent inequality \eqref{eq:descent}: 
	\begin{align}
	f&(\tnp)  = f\left(\tn - \gamma_{n+1}\frac{g_{n+1}}{\sqrt{w_n+\varepsilon}}\right) \nonumber\\
	& \leq f(\tn) - \gamma_{n+1} \langle \nabla f(\tn),\frac{g_{n+1}}{\sqrt{w_n+\varepsilon}} \rangle
	+ \frac{L^2}{2}\gamma_{n+1}^2 \left\| \frac{g_{n+1}}{\sqrt{w_n+\varepsilon}}\right\|^2 := m^+_n
	\label{eq:majoration}\\
	& \leq f(\tn) -  \gamma_{n+1} \left|\left|\dfrac{\nabla f (\theta_n)}{(w_n+\varepsilon)^{1/4}}\right|\right|^2 - \gamma_{n+1} \langle \nabla f(\tn),\frac{\xi_{n+1}}{\sqrt{w_n+\varepsilon}}\rangle \nonumber\\
	& +  L^2  \varepsilon^{-1} \gamma_{n+1}^2   \left\|\nabla f (\theta_n)\right\|^2 +  L^2 \varepsilon^{-1}  \gamma_{n+1}^2  \sigma_{n+1}^2 
	\left\| \zeta_{n+1}\right\|^2,\nonumber 
	\end{align}
	where in the last line we used a rough upper bound on
	$1/\sqrt{w_n+\varepsilon} $ and $|a+b|^2 \leq 2 a^2+2b^2$.
	It is also possible to use \eqref{eq:descent} and prove the lower bound:
	\begin{align}
	f(\tnp) & \ge f(\tn) - \gamma_{n+1} \langle \nabla f(\tn),\frac{g_{n+1}}{\sqrt{w_n+\varepsilon}} \rangle
	- \frac{L^2}{2}\gamma_{n+1}^2 \left\| \frac{g_{n+1}}{\sqrt{w_n+\varepsilon}}\right\|^2 := m^-_n.
	\label{eq:minoration}
	\end{align}
	
	
	To deduce $ii)$, we then consider the conditional expectation in \eqref{eq:majoration}, use Equation \eqref{eq:comparaison} and assumption $\mathbf{H}_{\sigma}^p$ for $p=2$ to
	obtain that a constant $c_{\epsilon}=L^2(1\vee c_f)/\varepsilon $ exists such that:
	$$
	\mathbb{E}[ f(\tnp) \, \vert| \mathcal{F}_n] \leq f(\tn) (1+ c_\epsilon  \gamma_{n+1}^2)+ c_{\epsilon} \textcolor{black}{d} \gamma_{n+1}^2 \sigma_{n+1}^2 -  \gamma_{n+1} \left|\left|\dfrac{\nabla f (\theta_n)}{(w_n+\varepsilon)^{1/4}}\right|\right|^2.
	$$
	
	We emphasize that at this stage, this last inequality does not create any repelling effect on the position $(\tn)_{n \ge 1}$ and we need to deal with the denominator $(w_n+\varepsilon)$, which is the purpose of $iii)$. \hfill $\diamond$
	
	\noindent $\bullet$ \underline{\textit{Proof of $iii)$:}}
	We  consider the evolution of $f^2$ from $\tn$ to $\tnp$. Using \eqref{eq:majoration} and \eqref{eq:minoration} obtained in $ii)$, we deduce that:
	\begin{align*}
	f(\tnp)^2 & \le \{m^-_n\}^2 \vee \{m^+_n\}^2 .
	\end{align*}
	We now expand $\{m^-_n\}^2$ and  $\{m^+_n\}^2$, using that $a \vee -a = |a|$, we get:
	\begin{align}
	\{m^-_n\}^2& \vee \{m^+_n\}^2  \le f(\tn)^2 + \gamma_{n+1}^2 
	\langle \nabla f(\tn),\frac{g_{n+1}}{\sqrt{w_n+\varepsilon}} \rangle^2+
	\frac{L^4}{4}\gamma_{n+1}^4 \left\| \frac{g_{n+1}}{\sqrt{w_n+\varepsilon}}\right\|^4
	\nonumber\\
	&- 2  \gamma_{n+1} f(\tn) \langle \nabla f(\tn),\frac{g_{n+1}}{\sqrt{w_n+\varepsilon}} \rangle + L^2 \gamma_{n+1}^2 f(\tn) \left\| \frac{g_{n+1}}{\sqrt{w_n+\varepsilon}}\right\|^2\nonumber\\
	& + L^2 \gamma_{n+1}^3  \left\| \frac{g_{n+1}}{\sqrt{w_n+\varepsilon}}\right\|^2 
	\left|  \langle \nabla f(\tn),\frac{g_{n+1}}{\sqrt{w_n+\varepsilon}} \rangle\right| \nonumber\\
	\label{eq:intermediaire_horrible}
	\end{align}
	When taking the conditional expectation in the previous inequality we treat some of these terms separately.	
	First, using the centering of $\xi_{n+1}$ we have that :
	\begin{align*}
	\gamma_{n+1}^2\PE[\langle \nabla f(\tn),\frac{g_{n+1}}{\sqrt{w_n+\varepsilon}} \rangle^2|\mathcal{F}_n]&\le \gamma_{n+1}^2 \left(\left\| \frac{\nabla f(\tn)}{\sqrt{w_n+\varepsilon}}\right\|^4 + \PE[\langle \xi_{n+1},\frac{\nabla f(\tn)}{\sqrt{w_n+\varepsilon}} \rangle^2|\mathcal{F}_n] \right)\\
	&\le\gamma_{n+1}^2 \left(\varepsilon^{-2} \left\| \nabla f(\tn)\right\|^4 + \left\| \frac{\nabla f(\tn)}{\sqrt{w_n+\varepsilon}}\right\|^2\sigma_{n+1}^2(d+f(\theta_{n}))\right)\\
	&\led \gamma_{n+1}^2 (1+\sigma_{n+1}^2d)(1+f(\theta_{n}))^2,
	\end{align*}
	where in the second line we used Cauchy-Schwarz inequality and assumption  $\Hp$ for $p=2$	whereas the last line comes from sub-quadratic growth assumption on the function $f$ given by \eqref{eq:comparaison}. 
	This last bound can also be used to control  the term $L^2 \gamma_{n+1}^2 f(\tn) \left\| \frac{g_{n+1}}{\sqrt{w_n+\varepsilon}}\right\|^2$ since:
	$$\left\| \frac{g_{n+1}}{\sqrt{w_n+\varepsilon}}\right\|^2\led \|\xi_{n+1}\|^2+\|\nabla f(\tn))\|^2.$$
	Now, in similar manner, using repeatitly these last two assumptions ($\Hp$ also with $p=4$ ) :
	\begin{align*}
	\frac{L^4}{4}\gamma_{n+1}^4 \PE \left[ \left\| \frac{g_{n+1}}{\sqrt{w_n+\varepsilon}}\right\|^4| \Fn\right]\le & \frac{L^4}{4\varepsilon^2} \gamma_{n+1}^4 \left( \| \nabla f(\theta_{n})\|^4+6 \| \nabla f(\theta_{n})\|^2\PE[ \|\xi_{n+1}\|^2| \Fn]\right.\\
	&\left. +4\| \nabla f(\theta_{n})\|\PE[ \|\xi_{n+1}\|^3| \Fn] +\PE[ \|\xi_{n+1}\|^4| \Fn] \right) \\
	& \led \gamma_{n+1}^4 (1+\sigma_{n+1}\sqrt{d})^4 (1+f(\theta_{n}))^2\\
	\end{align*}
	Starting again with the Cauchy-Schwarz inequality for the last term we obtain :
	
	\begin{align*}
	L^2 \gamma_{n+1}^3 \PE\left[ \left\| \frac{g_{n+1}}{\sqrt{w_n+\varepsilon}}\right\|^2 
	\left|  \langle \nabla f(\tn),\frac{g_{n+1}}{\sqrt{w_n+\varepsilon}} \rangle\right| | \Fn\right] & \led \gamma_{n+1}^3  \PE\left[ \left\| g_{n+1}\right\|^3 
	\| \nabla f(\tn)\| | \Fn\right]\\
	&\led \gamma_{n+1}^3 \|\nabla f(\tn)\| \PE[( \|\nabla f(\tn)\|+ \|\xi_{n+1}\|)^3|\Fn ]\\
	&	\led \gamma_{n+1}^3  \sqrt{f(\tn)} \PE[( \sqrt{f(\tn)}+ \|\xi_{n+1}\|)^3|\Fn ]	\\
	& \led \gamma_{n+1}^3 (1+f(\theta_{n}))^2(1+\sigma_{n+1}\sqrt{d})^3.
	\end{align*}
	
	We can now regroup the previous bounds, use equation $\eqref{eq:intermediaire_horrible}$, the fact that $\gamma_{n}<1$ and that $\sqrt{ab}\le 1/2(a+b)$ to conclude that a constant $\cd$, independent of $d$ and $n$ exists such that :
	%
	%
	
	\begin{align*}
	\PE[f(\tn)^2| \F_n]&\le f(\tn)^2(1+\cd s_n)- \textcolor{black}{2}\gamma_{n+1} f(\tn)\left\|\dfrac{\nabla f (\theta_n)}{(w_n+\varepsilon)^{1/4}}\right\|^2 +
	\cd s_n. \qquad \diamond 
	\end{align*}
	\noindent $\bullet$ \underline{\textit{Proof of $iv)$:}}
	We observe that:
	\begin{align*}
	\|(\wnp+\varepsilon)^{1/4}\|^2 &= \sum_{i=1}^d \sqrt{\wn^i+\varepsilon +\gamma_{n+1} [p_n \{g_{n+1}^i\}^2-q_n w_n^i]}\\
	& = \sum_{i=1}^d \sqrt{(\wn^i+\varepsilon)\left(1+\gamma_{n+1}\frac{p_n \{g_{n+1}^i\}^2-q_n w_n^i}{\wn^i+\varepsilon}\right)}\\
	&  = \sum_{i=1}^d \sqrt{\wn^i+\varepsilon}\left(1+\gamma_{n+1}\frac{p_n \{g_{n+1}^i\}^2-q_n w_n^i}{\wn^i+\varepsilon}\right)^{1/2},
	\end{align*}
	where the last line comes from the fact that $w_n^{i}/(w_n^i+\varepsilon)<1$, which implies that the last term of the right hand side exists. 
	
	Using $\sqrt{1+a} \leq 1+a/2$ for any $a>-1$, we deduce that:
	\begin{align*}
	\|(\wnp+\varepsilon)^{1/4}\|^2& \leq \sum_{i=1}^d \sqrt{\wn^i+\varepsilon} \left(1+\frac{\gamma_{n+1}}{2} \frac{p_n \{g_{n+1}^i\}^2-q_n w_n^i}{\wn^i+\varepsilon}\right)\\
	& \leq \|(\wn+\varepsilon)^{1/4}\|^2 + \frac{\gamma_{n+1}}{2} \langle \frac{1}{\sqrt{\wn+\varepsilon}},p_n \oo{g_{n+1}}-q_n\wn\rangle.
	\end{align*}
	Now observe that the second term can easily be bounded by : 
	\begin{align*}
	\langle \frac{1}{\sqrt{\wn+\varepsilon}},p_n \oo{g_{n+1}}-q_n\wn\rangle &=  \sum_{i=1}^d  \frac{p_n \{g_{n+1}^i\}^2}{\sqrt{w_n^i+\varepsilon}} - \underbrace{ \langle \frac{1}{\sqrt{\wn+\varepsilon}},q_n \wn\rangle}_{ \text{positive term}}
	\\
	& \leq  \sum_{i=1}^d  \frac{p_n \{g_{n+1}^i\}^2}{\sqrt{w_n^i+\varepsilon}}\\
	&\le 2 p_n  \left(\left\|\frac{\nabla f(\tn)}{(w_n+\varepsilon)^{1/4}}\right\|^2 + \varepsilon^{-\textcolor{black}{1/2}} \sigma_{n+1}^2 \|\zeta_{n+1}\|^2]\right)\\
	\end{align*}
	We use this last inequality and $f(\tnp) \leq m_n^{+}$ to conclude that:
	$$
	f(\tnp) \|(\wnp+\varepsilon)^{1/4}\|^2 \leq m_n^{+} \left(\|(\wn+\varepsilon)^{1/4}\|^2 + \gamma_{n+1}p_n   \left(\left\|\frac{\nabla f(\tn)}{(w_n+\varepsilon)^{1/4}}\right\|^2 + \varepsilon^{-\textcolor{black}{1/2}} \sigma_{n+1}^2 \|\zeta_{n+1}\|^2]\right)\right).
	$$
	We then develop and obtain that:
	\begin{align*}
	f&(\tnp) \|(\wnp+\varepsilon)^{1/4}\|^2  \leq \left( f(\tn) -  \gamma_{n+1} \left|\left|\dfrac{\nabla f (\theta_n)}{(w_n+\varepsilon)^{1/4}}\right|\right|^2 - \gamma_{n+1} \langle \nabla f(\tn),\frac{\xi_{n+1}}{\sqrt{w_n+\varepsilon}}\rangle \right. 
	+  L^2  \varepsilon^{-1} \gamma_{n+1}^2   \left\|\nabla f (\theta_n)\right\|^2  \nonumber\\
	& \left. + L^2 \varepsilon^{-1}  \gamma_{n+1}^2  \sigma_{n+1}^2 \|\zeta_{n+1}\|^2 \right)
	\left(\|(\wn+\varepsilon)^{1/4}\|^2 + \gamma_{n+1}p_n   \left(\left\|\frac{\nabla f(\tn)}{(w_n+\varepsilon)^{1/4}}\right\|^2 + \varepsilon^{\textcolor{black}{-1/2}} \sigma_{n+1}^2 \|\zeta_{n+1}\|^2]\right)\right)
	\end{align*}
	The baseline remark is that thanks to the Cauchy-Schwarz  inequality, we have:
	\begin{align}
	\left\|\sqrt{|\nabla f(\tn)|}\right\|^4 &= \left(\sum_{i=1}^d |\partial_i f(\tn)|\right)^2 \nonumber\\
	&\leq \sum_{i=1}^d \frac{\{\partial_i f(\tn)\}^2}{\sqrt{w_n^i+\varepsilon}} \sum_{i=1}^d \sqrt{w_n^i+\varepsilon}\nonumber\\
	& = \left|\left|\dfrac{\nabla f (\theta_n)}{(w_n+\varepsilon)^{1/4}}\right|\right|^2 \|(\wn+\varepsilon)^{1/4}\|^2. \label{eq:cs_malin}
	\end{align}
	
	We then compute the conditional expectation of the previous terms with respect to $\Fn$, using the previous inequality, the centering of $\xi_{n+1}$, 
	and obtain that a constant $\ct$ independent from $d$ exists such that:
	%

	\begin{align*}
	\PE&[f(\tnp)  \|(\wnp+\varepsilon)^{1/4}\|^2 \, \vert \Fn]\\
	& \leq 
	f(\tn)\|(\wn+\varepsilon)^{1/4}\|^2 + \gamma_{n+1} p_n f(\tn)\left\|\frac{\nabla f(\tn)}{(w_n+\varepsilon)^{1/4}}\right\|^2 - \gamma_{n+1} 	\left\|\sqrt{|\nabla f(\tn)|}\right\|^4 \\
	& + \ct\PE\left(\underbrace{ \gamma_{n+1} \sigma_{n+1}^2  p_n f(\tn) \|\zeta_{n+1}\|^2}_{:=\textcircled{1}}   
	+  \underbrace{ \gamma_{n+1}^2 p_n \sigma_{n+1}^3   \left| \langle \nabla f(\tn),\frac{\zeta_{n+1}}{\sqrt{w_n+\varepsilon}}\rangle \right| \|\zeta_{n+1}\|^2 }_{:=\textcircled{2}}   \right. \\
	&+   \underbrace{\gamma_{n+1}^2 \|\nabla f(\tn)\|^2\|(\wn+\varepsilon)^{1/4}\|^2}_{:=\textcircled{3}} + \underbrace{ \gamma_{n+1}^3\textcolor{black}{p_n}  \|\nabla f(\tn)\|^2\left\|\frac{\nabla f(\tn)}{(w_n+\varepsilon)^{1/4}}\right\|^2}_{:=\textcircled{4}} + \underbrace{ \gamma_{n+1}^{\textcolor{black}{3}}\textcolor{black}{p_n} \sigma_{n+1}^2\|\nabla f(\tn)\|^2 \|\zeta_{n+1}\|^2}_{:=\textcircled{5}} \\
	& \left. +\underbrace{ \gamma_{n+1}^2  \sigma_{n+1}^2  \|\zeta_{n+1}\|^2\|(\wn+\varepsilon)^{1/4}\|^2}_{:=\textcircled{6}}+  \underbrace{ p_n \gamma_{n+1}^3  \sigma_{n+1}^2  \|\zeta_{n+1}\|^2\left\|\frac{\nabla f(\tn)}{(w_n+\varepsilon)^{1/4}}\right\|^2}_{:=\textcircled{7}}  +   \underbrace{\gamma_{n+1}^{\textcolor{black}{3}}\textcolor{black}{p_n} \sigma_{n+1}^4  \|\zeta_{n+1}\|^4 \, \big|}_{:=\textcircled{8}} \Fn \right)
	\end{align*}
	We then study each terms in the large bracket separately. Using $f\le 1+f^2$ and $\Hp$, we have:
	\begin{align*}
	\PE[
	\textcircled{1}\big| \Fn ]&=\gamma_{n+1} \sigma_{n+1}^2  p_n  f(\tn)  \PE\left( \|\zeta_{n+1}\|^2  \big| \Fn \right)\\
	&  \leq \gamma_{n+1} \sigma_{n+1}^2  p_n  f(\tn) (d+f(\tn)) \\
	&  \textcolor{black}{\le \gamma_{n+1} \sigma_{n+1}^2  p_n  d (1+f(\tn))^2}.
	\end{align*}
	The second term is handled with the help of  $\Hp$ (for $p=3$) and the Cauchy-Schwarz inequality:
	\begin{align*}
	\PE[
	\textcircled{2}\big| \Fn ]&= \gamma_{n+1}^2 p_n \sigma_{n+1}^3 \|\nabla f(\tn)\| \PE[\|\zeta_{n+1}\|^3\big| \Fn ]\\
	& \le \gamma_{n+1}^2 p_n \sigma_{n+1}^3 \|\nabla f(\tn)\| (d+f(\tn))^{3/2}\\
	&\textcolor{black}{\le  \gamma_{n+1}^2 p_n \sigma_{n+1}^3 \sqrt{c_f(1+f(\theta_{n}))} d^{3/2} (1+f(\tn))^{3/2}}\\
	& \textcolor{black}{\led  \gamma_{n+1}^2 p_n \sigma_{n+1}^3  d^{3/2} (1+f(\tn))^{2}}.
	\end{align*}
	\textcircled{3} is $\Fn$ measurable and the subquadratic growth assumption given by \eqref{eq:comparaison} ensures that :
	$$
	\textcircled{3} \led \gamma_{n+1}^2f(\tn)\|(\wn+\varepsilon)^{1/4}\|^2.
	$$
%
	The term \textcircled{4} is $\Fn$ measurable and we use once more that $\|\nabla f\|^2 \led f$ to obtain that:
	\begin{align*}
	\textcircled{4}  & \led \gamma_{n+1}^3\textcolor{black}{p_n} f(\tn) \sum_{i=1}^d \frac{\{\partial_i f(\tn)\}^2}{\sqrt{w_n^i+\varepsilon}} \leq \gamma_{n+1}^3\textcolor{black}{p_n} f(\tn)\varepsilon^{-1/2} \sum_{i=1}^d \{\partial_i f(\tn)\}^2
	\\
	& \led \gamma_{n+1}^3\textcolor{black}{p_n} f(\tn)^2.
	\end{align*}
	For \textcircled{5} we use Assumption $\Hp$ with $p=2$ and obtain that:
	\begin{align*}
	\PE[
	\textcircled{5}\big| \Fn ] &= \gamma_{n+1}^{\textcolor{black}{3}}\textcolor{black}{p_n} \sigma_{n+1}^2 \|\nabla f(\tn)\|^2 \PE[ \|\zeta_{n+1}\|^2 \big| \Fn ]  \\
	& \led \gamma_{n+1}^{\textcolor{black}{3}}\textcolor{black}{p_n} \sigma_{n+1}^2 f(\tn) (d+f(\tn))\\
	& \led  \gamma_{n+1}^{\textcolor{black}{3}}\textcolor{black}{p_n} \sigma_{n+1}^2 d (1+f(\tn))^2.
	\end{align*}
	Since $\gamma_{n+1}^2\le 1$, we can conclude that $\PE[
	\textcircled{5}\big| \Fn ]\led \PE[
	\textcircled{1}\big| \Fn ]$.

	\textcircled{6} is close to \textcircled{3},  we use $\Hp$ with $p=2$ and obtain that:

	\begin{align*}
		\PE[
	\textcircled{6}\big| \Fn ]& \leq \gamma_{n+1}^2 \sigma_{n+1}^2 \|(\wn+\varepsilon)^{1/4}\|^2 \PE[\|\zeta_{n+1}\|^2 \, \vert \Fn] \led \gamma_{n+1}^2 \sigma_{n+1}^2 (d+f(\tn)) 
	\|(\wn+\varepsilon)^{1/4}\|^2\\
	&\textcolor{black}{\led d\gamma_{n+1}^2 \sigma_{n+1}^2 f(\tn)
	\|(\wn+\varepsilon)^{1/4}\|^2 }.
	\end{align*}
The last line comes from the fact that as soon as $f$ is uniformly lower bounded by a positive constant, then $\|(\wn+\varepsilon)^{1/4}\|^2 \led f(\tn)\|(\wn+\varepsilon)^{1/4}\|^2$.

	The term \textcircled{7} is close to \textcolor{black}{ \textcircled{5}} and $\Hp$ yields:
	\begin{align*}
	\PE[
	\textcircled{7}\big| \Fn ]& \leq p_n \gamma_{n+1}^3 \sigma_{n+1}^2 \varepsilon^{-1/2} \|\nabla f(\tn)\|^2 \PE[ \|\zeta_{n+1}\|^2\big| \Fn] \\
	& \led p_n \gamma_{n+1}^3 \sigma_{n+1}^2 f(\tn)(d+f(\tn)) \\ 
	& \led \PE[ \textcircled{5}\big| \Fn ].
	\end{align*}
	For \textcircled{8} Assumption $\Hp$ with $p=4$ implies that:
	\begin{align*}
	\PE[
	\textcircled{8}\big| \Fn ] \led\gamma_{n+1}^{\textcolor{black}{3}}\textcolor{black}{p_n} \sigma_{n+1}^4 (d^2+f^2(\tn)).
	\end{align*}
	
	Our bounds on $\textcircled{1}-\textcircled{8}$ and the fact that $(1+f(\tn))^2\le 2(1+f(\tn)^2)$ ensure that a constant $\ct$ exists independent from $d$ such that:
	\begin{align*}
		\PE&[f(\tnp)  \|(\wnp+\varepsilon)^{1/4}\|^2 \, \vert \Fn] \leq 
		f(\tn)  \|(\wn+\varepsilon)^{1/4}\|^2 \left(1+\ct  \gamma_{n+1}^2 (1+d\sigma_{n+1}^2)\right)\\
		& - \gamma_{n+1} 	\left\|\sqrt{|\nabla f(\tn)|}\right\|^4 
		 +  \gamma_{n+1} p_n f(\tn)\left\|\frac{\nabla f(\tn)}{(w_n+\varepsilon)^{1/4}}\right\|^2 \\
		& + \ct\left[ \gamma_{n+1}p_n(\gamma_{n+1}^2+\sigma_{n+1}^2d +\gamma_{n+1}\sigma_{n+1}^3d^{3/2}+\gamma_{n+1}^2\sigma_{n+1}^4d^2)(1+ f^2(\tn))\right].
		\end{align*}
	Since $ \gamma_{n+1}\sigma_{n+1}^3d^{3/2}  \le 1/2(\sigma_{n+1}^2d+\gamma_{n+1}^2\sigma_{n+1}^4d^2),$
  using the definition of $t_n$,  we deduce that:
	
	\begin{align*}
	\PE&[f(\tnp)  \|(\wnp+\varepsilon)^{1/4}\|^2 \, \vert \Fn] \leq 
	f(\tn)  \|(\wn+\varepsilon)^{1/4}\|^2 \left(1+\ct \gamma_{n+1}^2 (1+d\sigma_{n+1}^2)\right) \\
	& - \gamma_{n+1} 	\left\|\sqrt{|\nabla f(\tn)|}\right\|^4 +  \gamma_{n+1} p_n f(\tn)\left\|\frac{\nabla f(\tn)}{(w_n+\varepsilon)^{1/4}}\right\|^2 \\
	& + \ct\left[ t_n f^2(\tn)+t_n \large\right]. \qquad \diamond
	\end{align*}

\end{proof}

\begin{remark} Proposition \ref{prop:rec_termes} will permit do derive a Lyapunov function on $(\tn,\wn)_{n \ge 1}$ (see the next result) which implies the convergence of 
	$$
	\PE\left[ \sum_{n \ge 1} \gamma_{n+1} \left\| \nabla f (\theta_n) \right\|^2\right] < + \infty.
	$$
	This kind of bound has also been obtained in \cite{Adam_Bach} (Theorem 4) with the help of a somewhat artificial boundedness assumption of the noisy gradients, which is not used in our work.
	We also point out that \cite{Adam_Zou} propose another function that generates a mean reverting term:
	$$
	\sum_{n \ge 1} \gamma_{n+1} \PE[ \|\nabla f(\tn)\|^{4/3}] < \infty, 
	$$
	and the major difference with our result is the weaker $4/3$ instead of $2$ in the series. In particular, a such $4/3$ will not allow to prove the a.s. asymptotic pseudo-trajectory result, and consequently the a.s. convergence of the trajectory towards a critical point of $f$.
\end{remark}
%

\subsection{Proof of Theorem \ref{thm: cv quanti}\label{sec:proof_thm2}}

Using Proposition \ref{prop:rec_termes}, we are now ready to state the next
important result, which will be key for the almost sure convergence of $(\tn)_{n \ge 1}$.
\begin{proposition}\label{prop:estimation_series} Under the assumptions of Proposition \ref{prop:rec_termes}, then:
	\begin{itemize}
		\item[$i)$] 
		Two constants $c(\theta_0,w_0)$ and $\kappa$  exist such that, for all $n\ge1$
	
		\begin{align*}
		\PE\left(\sum_{k = 1}^n \gamma_{k+1} \left[q_k \|\sqrt{w_k}\|^2+  \|\sqrt{|\nabla f(\theta_k)|}\|^4\right]\right) &\leq c(\theta_0,w_0)\exp\left(\kappa \sum_{k=1}^{n}  (s_k+t_k)  \right)
+\kappa \sum_{k=1}^{n} (s_k+t_k)
		\end{align*}
		
			\textcolor{black}{where
			$(t_n)_{n\ge 0}$ and $(s_n)_{n\ge 0}$ are the auxiliary sequences defined in Proposition \ref{prop:rec_termes}.}
		
%
		\item[$ii)$] If $\sum_{n \ge 1} (\gamma_{n+1}^2 + \gamma_{n+1} \sigma_{n+1}^2p_n) < + \infty$ , then almost surely:
		\begin{equation}\label{eq:series_ps}
		\sum_{n \ge 1} \gamma_{n+1} \left[q_n \|\sqrt{\wn}\|^2+  \|\sqrt{|\nabla f(\tn)|}\|^4\right]< + \infty \, a.s.
		\end{equation}
	\end{itemize}

\end{proposition}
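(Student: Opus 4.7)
The plan is to combine the four one-step bounds of Proposition \ref{prop:rec_termes} into a single Lyapunov inequality, then to invoke a discrete Gronwall lemma for assertion $(i)$ and the Robbins--Siegmund theorem for assertion $(ii)$. The Lyapunov function of choice is
$$
V_n := f(\theta_n)\|(w_n+\varepsilon)^{1/4}\|^2 + \mu\, f(\theta_n)^2 + \lambda\, \|\sqrt{w_n+\varepsilon}\|^2,
$$
with $\mu,\lambda > 0$ chosen so that $2\mu \ge \sup_n p_n$: this choice makes the unwanted positive cross term $+\gamma_{n+1}p_n f(\theta_n)\,\|\nabla f(\theta_n)/(w_n+\varepsilon)^{1/4}\|^2$ appearing in assertion $(iv)$ be dominated by the mean-reverting contribution $-2\mu\gamma_{n+1}f(\theta_n)\,\|\nabla f(\theta_n)/(w_n+\varepsilon)^{1/4}\|^2$ produced by $\mu$ times assertion $(iii)$. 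Adding $\lambda$ times assertion $(i)$ then supplies the additional dissipation $-\lambda\gamma_{n+1}q_n\|\sqrt{w_n}\|^2$ required by the statement, while assertion $(iv)$ provides the second dissipative term $-\gamma_{n+1}\|\sqrt{|\nabla f(\theta_n)|}\|^4$.

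After summing these three weighted inequalities, the residual leftovers $\lambda\gamma_{n+1}p_n\|\nabla f\|^2$ and $\lambda\gamma_{n+1}\sigma_{n+1}^2 p_n(d+f(\theta_n))$ arising from $(i)$ must be controlled. I would use the sub-quadratic growth $\|\nabla f\|^2\le c_f f$ of $(\Hf-3)$ together with the uniform lower bound $\min f>0$ of $(\Hf-1)$: any linear function of $f(\theta_n)$ is absorbed into $V_n$ via $f(\theta_n)\le f(\theta_n)^2/\min f \le V_n/(\mu\min f)$, while $\gamma_{n+1}p_n\sigma_{n+1}^2 d$ is directly dominated by $t_n/2$ (by the very definition of $t_n$) and $\gamma_{n+1}^2$ by $s_n$. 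One therefore obtains a clean recursion
$$
\PE[V_{n+1}\mid\Fn] + \gamma_{n+1}\bigl(\lambda q_n\|\sqrt{w_n}\|^2+\|\sqrt{|\nabla f(\theta_n)|}\|^4\bigr) \le V_n\bigl(1+\kappa_1(s_n+t_n)\bigr)+\kappa_1(s_n+t_n),
$$
for some constant $\kappa_1$ independent of $n$. Taking expectations, iterating, using $\prod(1+x_k)\le\exp(\sum x_k)$, and telescoping the contributions $\gamma_{k+1}(q_k\|\sqrt{w_k}\|^2+\|\sqrt{|\nabla f(\theta_k)|}\|^4)$ yields assertion $(i)$ with $c(\theta_0,w_0):=V_0$ (up to a multiplicative constant) and $\kappa:=\kappa_1\vee 1$.

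For assertion $(ii)$, the hypothesis $\sum_n(\gamma_{n+1}^2+\gamma_{n+1}\sigma_{n+1}^2 p_n)<+\infty$ immediately implies $\sum_n(s_n+t_n)<+\infty$ by inspection of the definitions, since each summand of $s_n$ is dominated by $\gamma_{n+1}^2$ and each summand of $t_n$ by $\gamma_{n+1}p_n\sigma_{n+1}^2$, up to fixed powers of $d$. The Robbins--Siegmund theorem applied to the non-negative sequence $(V_n,\Fn)$ with the clean recursion above then delivers simultaneously the almost sure convergence of $V_n$ and the almost sure finiteness of $\sum_n\gamma_{n+1}\bigl(q_n\|\sqrt{w_n}\|^2+\|\sqrt{|\nabla f(\theta_n)|}\|^4\bigr)$, which is precisely \eqref{eq:series_ps}. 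The main technical obstacle is the bookkeeping of noise: each of the mixed moments arising in the proof of $(iv)$, together with the analogous ones from $(i)$ and $(iii)$, has to be verified to be of order $(s_n+t_n)(1+V_n)$, so that the negative $-\gamma_{n+1}\|\sqrt{|\nabla f(\theta_n)|}\|^4$ term — the engine of the whole argument — is not spoiled by the multiplicative structure of $V_n$.
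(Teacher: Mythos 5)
Your overall architecture (the Lyapunov combination of points $i)$, $iii)$, $iv)$ of Proposition~\ref{prop:rec_termes}, a Gronwall-type recursion for $i)$ and Robbins--Siegmund for $ii)$) is exactly the paper's, and your treatment of the cross term $\gamma_{n+1}p_n f(\theta_n)\|\nabla f(\theta_n)/(w_n+\varepsilon)^{1/4}\|^2$ via $2\mu\ge\sup_n p_n$ matches the paper's choice $2a\ge bp_n$. However, there is a genuine gap in how you dispose of the leftover $\lambda\gamma_{n+1}p_n\|\nabla f(\theta_n)\|^2$ coming from point $i)$. You propose to bound it by $c_f\lambda\gamma_{n+1}p_n f(\theta_n)\lesssim \gamma_{n+1}p_n V_n$ and then fold it into the multiplicative term $V_n(1+\kappa_1(s_n+t_n))$; but $\gamma_{n+1}p_n$ is \emph{not} dominated by $s_n+t_n$ (by definition $t_n=\gamma_{n+1}p_n(\gamma_{n+1}^2+2\sigma_{n+1}^2d+2\gamma_{n+1}^2\sigma_{n+1}^4d^2)$, whose bracket can vanish), so your ``clean recursion'' does not follow from this absorption. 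Concretely, under the hypothesis of $ii)$ with $p_n\equiv 1$, $\gamma_n=\gamma_1 n^{-\beta}$, $\beta\in(1/2,1]$, and $\sigma_n$ chosen so that $\sum\gamma_{n+1}\sigma_{n+1}^2<\infty$, one has $\sum\gamma_{n+1}p_n=\infty$: the multiplicative coefficient you would feed to Robbins--Siegmund is not summable, so $ii)$ fails, and for $i)$ the Gronwall bound would involve $\exp(\kappa\sum\gamma_{k+1}p_k)$ rather than the stated $\exp(\kappa\sum(s_k+t_k))$, ruining in particular the regime $p_n=q_n=1$, $\gamma=N^{-1/2}$ of Theorem~\ref{thm: cv quanti}.

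The fix is the paper's second scaling trick, which you used for the cross term but not here: since $\|\sqrt{|\nabla f(\theta_n)|}\|^4\ge\|\nabla f(\theta_n)\|^2$, the positive term $\lambda\gamma_{n+1}p_n\|\nabla f(\theta_n)\|^2$ can be absorbed into the quartic dissipation $-\gamma_{n+1}\|\sqrt{|\nabla f(\theta_n)|}\|^4$ produced by point $iv)$, provided the coefficient of $\|\sqrt{w+\varepsilon}\|^2$ is taken small relative to the coefficient of the cross term (in the paper's normalization, $b>2\sup_n p_n$ with the $w$-term having coefficient $1$; in yours, $\lambda<(2\sup_n p_n)^{-1}$). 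This leaves $-\tfrac{1}{2}\gamma_{n+1}\|\sqrt{|\nabla f(\theta_n)|}\|^4$ intact as dissipation, and then the remaining noise terms $\lambda\gamma_{n+1}p_n\sigma_{n+1}^2(d+f(\theta_n))$ are indeed of order $(s_n+t_n)(1+V_n)$ as you claim, so the rest of your argument (iteration with $\prod(1+x_k)\le\exp(\sum x_k)$ for $i)$, Robbins--Siegmund for $ii)$) goes through exactly as in the paper.
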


\begin{proof}
	\underline{$\bullet$ Proof of $i)$.}
	Our proof relies on a Lyapunov function defined by:
	$$
	V_{a,b}(\theta,w):= \|\sqrt{w+\varepsilon}\|^2 + a f^2(\theta)+ b f(\theta)\|(w+\varepsilon)^{1/4}\|^2,$$
	with a careful tuning of $a$ and $b$. 
	
	Using $i)$, $iii)$ and $iv)$ of Proposition \ref{prop:rec_termes} and the fact that $f(\theta_{n})\le 1+f(\theta_{n})^2$, 
we deduce that a constant $\kappa$ that depends on $\cu,\cd,\ct$ and of the next choice of $a$ and $b$ exists such that:

\begin{align*}
\PE\left[V_{a,b}(\tnp,\wnp) \, \vert \Fn \right] &\leq V_{a,b}(\tn,\wn)\left[1+\kappa (\textcolor{black}{s_n} +t_n) \right] + \kappa (\textcolor{black}{s_n} +t_n) \\
&- q_n \gamma_{n+1} \|\sqrt{\wn}\|^2 \\
&+ \gamma_{n+1} \left[p_n \|\nabla f(\tn)\|^2-b \|\sqrt{|\nabla f(\tn)|}\|^4\right]\\
& + \gamma_{n+1} [b p_n - 2a ]  f(\tn)\left\|\dfrac{\nabla f (\theta_n)}{(w_n+\varepsilon)^{1/4}}\right\|^2.
\end{align*}
	We observe that for any vector $u$, we have $\|\sqrt{|u|}\|^4 \ge \|u\|^2$ and
	  that $(p_n)_{n \ge 1}$ is a bounded sequence, so that we can find $b$ large enough to have $b>2\sup_{n \ge 1}p_n$, and $2a \ge b p_n$ such that $\forall n \ge 1$: 
	\begin{align*}
	\PE\left[V_{a,b}(\tnp,\wnp) \, \vert \Fn \right] &\leq 
	V_{a,b}(\tn,\wn)\left[1+\kappa (t_n+s_n) \right] + \kappa (t_n+s_n)\\
	& - \gamma_{n+1}\left[ q_n \|\sqrt{\wn}\|^2 + \frac{b}{2}  \|\sqrt{|\nabla f(\tn)|}\|^4\right].
	\end{align*}
	We then conclude using a straightforward recursion. \hfill $\diamond$
	
	\underline{$\bullet$ Proof of $ii)$.}
	This point  proceeds with standard arguments: we use the Robbins Siegmund Lemma (see \cite{robbins1971convergence}): the series $\sum \gamma_{n+1}^2$ and $\sum\gamma_{n+1} \sigma_{n+1}^2p_n$ are convergent and obtain that:
	\begin{enumerate} 
		\item $V_{a,b}(\tn,\wn) \to V_{\infty}  $ a.s. (and in $L^1$) and $\sup_n \mathbb{E}[V_{a,b}(\tn,\wn)]<+\infty$ 
		\item More importantly, the next series are convergent:
		\begin{equation}\label{serie grad}
		\sum_{n\ge 0} \gamma_{n+1}\| \nabla f(\theta_n)\|^2 < \sum_{n\ge 0} \gamma_{n+1}\| \sqrt{|\nabla f(\theta_n)|}\|^4  <+\infty\ a.s. 
		\end{equation}
		and 
		\begin{equation}\label{serie w}
		\sum_{n\ge 0} \gamma_{n+1}q_n\| \sqrt{w_n}\|^2 <+\infty\ a.s.
		\end{equation}  
	\end{enumerate}	
	This ends the proof of $ii)$ \hfill $\diamond$
%
\end{proof}

We emphasize that $i),ii)$ are standard consequences of the Robbins-Siegmund approach on stochastic algorithms. In particular, the use of $i)$ with the calibrations of $(\gamma_n)_{1 \leq n \leq N},
(\sigma_n)_{1 \leq n \leq N}$ and $(p_n)_{1 \leq n \leq N}$ 
 that are given in the statement of Theorem \ref{thm: cv quanti} instantaneously lead to the conclusion of the proof.
We also point out that the basic fact to obtain this result is a tuning of the parameters thats leads to
$$
\sum_{n=1}^N \left( s_n \vee t_n\right) = \mathcal{O}_d(1).
$$ 
Considering constant step-size sequences, it entails the following constraints on $(p,\gamma,\sigma)$:
$$
 \gamma^2 \vee  \gamma p \sigma^2 d \vee  \gamma^2 \sigma^2 d = \mathcal{O}_d(N^{-1}),
$$
whereas the size of $N$ needed to obtain a $\delta$ approximation should verify that:
$$
\PE\left[\frac{1}{N} \sum_{k=1}^N \| \sqrt{|\nabla f(\theta_k)|}\|^4 \right] \leq \delta \Longleftrightarrow \frac{1}{N \gamma } \PE\left[\sum_{k=1}^N \gamma \| \sqrt{|\nabla f(\theta_k)|}\|^4\right] \leq \delta \Longleftarrow N \ge (\delta \gamma)^{-1}.
$$
The computational cost associated to a such $N$ is $N \sigma^{-2}$.
Defining $\gamma  = \tilde{\gamma} N^{-1/2}$, we then observe that $N$ should be chosen larger than $(\tilde{\gamma} \delta)^{-2}$, which entails a computational cost of $(\tilde{\gamma} \sigma \delta)^{-2}$. With this setting, the constraints on the sequences become:
$$
\tilde{\gamma}^2 \sigma ^2 \led \frac{1}{d} \quad \text{and} \quad \tilde{\gamma} \sigma^2 p \led \frac{1}{d \sqrt{N}}.
$$
Hence, the computational cost is lower bounded by $d \delta^{-2}$ and this bound may be achieved as soon as 
$$\tilde{\gamma}^2 \sigma^2 = \frac{1}{d} \quad \text{and} \quad p = \frac{\tilde{\gamma}}{\sqrt{N}},$$
which corresponds to the tuning described in $iii)$ and $iv)$ of Theorem \ref{thm: cv quanti}.


\subsection{Proof of Theorem \ref{thm: cv ps}}
\paragraph{Asymptotic pseudo-trajectory}
For the sake of convenience, from now on we denote $V_n = V_{a,b}(\tn,\wn)$. Since we are now interested in purely asymptotic result, we omit the dependency with $d$ in the bounds we obtain hereafter.
The main difficulty here is to convert the result of Proposition \ref{prop:estimation_series} $ii)$ into an a.s. convergence result on $(\tn)_{n \ge 1}$.
We remind the standard definition of asymptotic pseudo-trajectories of a semiflow $\Phi$.
\begin{definition}[Pseudo-trajectory]
	A continuous trajectory $Z$ is a pseudo-trajectory of $\Phi$ if for any finite time horizon $T>0$
	$$
	\lim_{t \longrightarrow + \infty} \sup_{0<u<T} |Z_{t+u}-\Phi_u(Z_t)| = 0.
	$$
\end{definition}
We refer to \cite{benaim1996asymptotic,benaim1999dynamics}  for further details.
We will use in particular the cornerstone result  which is reminded below:
\begin{theorem}[Theorem 3.2 of \cite{benaim1999dynamics}]\label{theo:pseudo_traj} 
	If $Z$ is an asymptotic pseudo-trajectory of $\Phi$ with a compact closure in $\rset^d$, then every limit point of $Z$ is a fixed point of $\Phi$.
\end{theorem}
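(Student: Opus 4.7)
The statement is a classical theorem on asymptotic pseudo-trajectories, and I would prove it in two blocks: first, I would show that the $\omega$-limit set of $Z$ is a nonempty, compact, forward invariant subset of $\rset^d$ for the semiflow $\Phi$; then I would use the gradient-like structure of the O.D.E.\ driving $\Phi$ to identify this invariant set with the fixed points of $\Phi$.

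For the first block, introduce the $\omega$-limit set
$$
L(Z) = \bigcap_{t \ge 0} \overline{\{Z_s : s \ge t\}}.
$$
Compactness of the closure of the trajectory in $\rset^d$ makes $L(Z)$ nonempty and compact by the finite intersection property, and every limit point of $Z$ belongs to $L(Z)$. Given $x \in L(Z)$ and a sequence $t_n \uparrow +\infty$ with $Z_{t_n} \to x$, the APT definition gives $\sup_{0 \le u \le T} |Z_{t_n+u} - \Phi_u(Z_{t_n})| \to 0$ for every horizon $T>0$, while continuity of $\Phi$ at $x$ yields $\Phi_u(Z_{t_n}) \to \Phi_u(x)$ uniformly on $[0,T]$. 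Therefore $Z_{t_n+u} \to \Phi_u(x)$, so $\Phi_u(x) \in L(Z)$ for every $u \ge 0$, that is, $L(Z)$ is forward invariant under $\Phi$.

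For the second block, I would invoke the explicit form of $\Phi$ as the limiting O.D.E.\ of \eqref{def:adam}, namely $\dot\theta_t = -\nabla f(\theta_t)/\sqrt{w_t+\varepsilon}$ and $\dot w_t = p_\infty \oo{\nabla f(\theta_t)} - q_\infty w_t$. A direct computation gives
$$
\frac{d}{dt} f(\theta_t) = -\sum_{i=1}^d \frac{(\partial_i f(\theta_t))^2}{\sqrt{w_t^i+\varepsilon}} \le 0,
$$
with equality iff $\nabla f(\theta_t)=0$, so $f$ is a strict Lyapunov function for the $\theta$-component of $\Phi$. Forward invariance and compactness of $L(Z)$ then force $f$ to be constant on every $\Phi$-orbit contained in $L(Z)$, which by the strict descent above yields $\nabla f \equiv 0$ on $L(Z)$; the $w$-equation then pins down $w$ (via $w=p_\infty q_\infty^{-1}\oo{\nabla f(\theta)} = 0$ when $q_\infty>0$). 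Assumption $(\Hf-4)$ rules out continua of critical points sharing the same value of $f$, so each connected component of $L(Z)$ is a single equilibrium of $\Phi$.

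The first block is essentially formal given the APT definition and continuity of $\Phi$. The main obstacle lies in the second block: the coupled dynamics on $(\theta,w)$ is not a pure gradient flow, so one must verify that, on invariant compact sets, the $w$-component is slaved to the $\theta$-component through the equilibrium identity $p_\infty \oo{\nabla f(\theta)} = q_\infty w$. Combining this slaving with $(\Hf-4)$ and the connectedness of orbit closures closes the identification of limit points with fixed points.
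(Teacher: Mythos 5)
The paper does not prove this statement: it is imported verbatim as Theorem 3.2 of \cite{benaim1999dynamics} and used as a black box, so there is no in-paper proof to compare with; I can only judge your argument on its own terms. Your first block is fine and standard: compactness of the closure makes $L(Z)$ nonempty and compact, and the APT property plus continuity of the semiflow gives its forward invariance.

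The second block, however, does not prove the quoted theorem and contains a genuine gap. First, the statement is about an arbitrary semiflow $\Phi$, whereas you immediately specialize to the flow generated by $H$ in \eqref{def:H} and lean on the Lyapunov structure of $f$; a proof of the cited result cannot use $f$ at all, and what Benaïm's theory actually supplies at this level of generality is that the limit set of a precompact asymptotic pseudo-trajectory is \emph{internally chain transitive} (invariant, connected, without proper attractors), after which a strict Lyapunov function together with a smallness condition on its values at equilibria (here the role of $(\Hf-4)$) identifies the limit set with equilibria. Second, even for the specific flow $H$, the key step ``forward invariance and compactness of $L(Z)$ force $f$ to be constant on every $\Phi$-orbit contained in $L(Z)$'' is not valid: a heteroclinic orbit together with its two endpoints is a compact invariant set on which $f$ strictly decreases along the nontrivial orbit, so invariance and compactness alone never yield constancy; excluding such configurations is precisely what chain transitivity (no proper attractor inside $L(Z)$) is for. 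Relatedly, your ``slaving'' of $w$ needs two-sided invariance of $L(Z)$ (which Benaïm establishes) rather than the forward invariance you proved: if $\nabla f\equiv 0$ on $L(Z)$, a point $(\theta,w)\in L(Z)$ with $w\neq 0$ is not excluded by forward invariance, since its forward orbit $(\theta, w e^{-q_\infty t})$ may remain in the set; only backward invariance forces $w=0$. So the missing idea is the chain-recurrence characterization of APT limit sets, which is the actual content of the cited theorem.
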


In what follows we use the results of \cite{benaim1999dynamics} to show that a linear interpolation of the sequence $(Z_n)_{n\ge}$ is an asymptotic pseudo-trajectory of the flow induced by
the vector field $H:\mathbb{R}^{2d}\to \mathbb{R}^{2d}$ of our adaptive algorithm defined by:

\begin{equation}\label{def:H}
H(\theta,w)=\begin{pmatrix}
-\dfrac{\nabla f(\theta)}{\sqrt{w+\varepsilon}}\\p_{\infty}\oo{\nabla f(\theta)}-q_{\infty}w
\end{pmatrix}.
\end{equation}

Denote $\tau_0=0$, $\tau_n=\sum_{k=1}^{n} \gamma_k$ and consider  $(\bar{Z}_t)_{t\ge0}$ the continuous time process corresponding to a linear interpolation of $(Z_n)_{n\ge 0}$, given by :
$$\bar{Z}(\tau_n+s)= Z_n+s\dfrac{Z_{n+1}-Z_n}{\gamma_{n+1}}; \quad \forall n\in\mathbb{N};\ \forall \ 0\le s\le \gamma_{n+1}.$$

The Robbins-Siegmund Lemma ensures that the sequence $(V_n)_{n \ge 1}$ converges a.s. to a finite random variable $V_{\infty}$. Since all the terms of $V_n$ are positive, $(f(\theta_{n}))_{n\ge 0}$ and $(\|w_n\|)_{n\ge0}$ are a.s. bounded as well. The coercivity of $f$ implies thus that $(Z_n)_{n\ge 0}$ is a.s. bounded.  \\
The evolution of the sequence $(Z_n)_{n\ge0}$ can be written as:
$$Z_{n+1}=Z_n+\gamma_{n+1}(H(Z_n)+\dn),$$
where 
$\dn$ is a rest term:
\begin{equation}
\label{def:en}\dn=\begin{pmatrix}
\dfrac{\nabla f(\theta_n)-g_{n+1}}{\sqrt{w_n+\varepsilon}}\\p_n \oo{g}_{n+1}-p_{\infty}\oo{\nabla f(\theta_n)}-(q_n-q_{\infty})w_n
\end{pmatrix}.
\end{equation}
The next result allows us to control the rest term and ensures that the assumptions of Proposition 4.1 of \cite{benaim1999dynamics} are fulfilled, which in turn implies that $(\bar{Z}_t)_{t\ge0}$ is indeed an asymptotic pseudo trajectory of the flow induced by $H$.
\begin{lemma} Under the assumptions of Proposition \ref{prop:rec_termes}
	\label{lemme: Pseudo-trajectoire} and
 if $\sum\gamma_{k+1}\sigma_{k+1}^2 p_k < + \infty$ and $(p_n,q_n)\longrightarrow (p_\infty,q_\infty)$.
	Let $N(n,t)=\sup_{k\ge0} \{ t +\tau_n\ge \tau_k\}$ and $(\dn)_{n\ge1}$ defined in Equation \eqref{def:en}. For all $T>0$:
	\begin{equation}
	\limsup_{n\to \infty}\sup_{t\in[0,T]} \left\| \sum_{k=n+1}^{N(n,t)+1} \gamma_k e_k\right\|=0 \ a.s.
	\end{equation}
\end{lemma}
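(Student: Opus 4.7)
The plan is the classical decomposition $e_n = \Delta M_{n+1} + r_n$ with $\Delta M_{n+1} := e_n - \mathbb{E}[e_n\mid\mathcal F_n]$ a martingale increment and $r_n := \mathbb{E}[e_n\mid\mathcal F_n]$ an $\mathcal F_n$-measurable bias, and then control each piece on the window $[n+1,N(n,t)+1]$, whose total $\tau$-length is bounded by $T+\gamma_{N(n,t)+1}$. Expanding the square $\oo{g_{n+1}} = \oo{\nabla f(\theta_n)} + 2\nabla f(\theta_n)\cdot\xi_{n+1} + \oo{\xi_{n+1}}$ and using $\mathbb{E}[\xi_{n+1}\mid\mathcal F_n]=0$, the martingale increment has first coordinate $-\xi_{n+1}/\sqrt{w_n+\varepsilon}$ and second coordinate $2p_n\nabla f(\theta_n)\cdot\xi_{n+1} + p_n\bigl(\oo{\xi_{n+1}}-\mathbb{E}[\oo{\xi_{n+1}}\mid\mathcal F_n]\bigr)$, while $r_n$ has vanishing first coordinate and $(p_n-p_\infty)\oo{\nabla f(\theta_n)} - (q_n-q_\infty)w_n + p_n\mathbb{E}[\oo{\xi_{n+1}}\mid\mathcal F_n]$ on the second.

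\textbf{A.s.\ boundedness from Proposition \ref{prop:estimation_series}.} A Robbins--Siegmund argument along the lines of the proof of Proposition \ref{prop:estimation_series} yields that $V_n=V_{a,b}(\theta_n,w_n)$ converges a.s., so a random constant $C_\omega<+\infty$ bounds $(f(\theta_n))$ and $(\|w_n\|)$ simultaneously. Coercivity of $f$ then bounds $(\theta_n)$, and the Lipschitz condition on $\nabla f$ gives a bound on $(\nabla f(\theta_n))$. From now on I treat these three sequences as a.s.\ bounded random constants.

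\textbf{Martingale increment.} Setting $S_n:=\sum_{k=1}^n \gamma_{k+1}\Delta M_{k+1}$, Assumption $\mathbf H_\sigma^p$ with $p=4$ controls the conditional second moment by
$$\mathbb{E}\bigl[\|\Delta M_{k+1}\|^2\mid\mathcal F_k\bigr] \;\lesssim\; \sigma_{k+1}^2\bigl(1+f(\theta_k)\bigr) + p_k^2\sigma_{k+1}^2\|\nabla f(\theta_k)\|^2\bigl(1+f(\theta_k)\bigr) + p_k^2\sigma_{k+1}^4\bigl(1+f(\theta_k)\bigr)^2.$$
Boundedness of $(f(\theta_k))$, $(p_k)$ and $(\sigma_k)$ together with $\sum \gamma_k^2<+\infty$ give $\sum\gamma_{k+1}^2\mathbb{E}[\|\Delta M_{k+1}\|^2\mid\mathcal F_k]<+\infty$ a.s., so Doob's $L^2$ martingale convergence theorem ensures $S_n$ converges a.s. Hence by the Cauchy criterion $\sup_{t\in[0,T]}\|S_{N(n,t)+1}-S_n\|\to0$ a.s.

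\textbf{Bias term, and main obstacle.} The two ``drift'' pieces $(p_n-p_\infty)\oo{\nabla f(\theta_n)}$ and $(q_n-q_\infty)w_n$ tend to $0$ a.s.\ by the assumed convergences $p_n\to p_\infty$, $q_n\to q_\infty$ combined with the boundedness above, so their contribution is bounded by $(T+\gamma_{N(n,t)+1})\,\sup_{k\ge n}\|\cdot\|$ and vanishes. The delicate piece is $p_n\mathbb{E}[\oo{\xi_{n+1}}\mid\mathcal F_n]$, which need \emph{not} tend to $0$ (e.g.\ when $p_\infty>0$ and $\sigma_n$ is constant): the uniform ``$\|r_k\|\to0$'' argument fails and one must instead exploit the summability hypothesis directly. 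Using $\|\mathbb{E}[\oo{\xi_{n+1}}\mid\mathcal F_n]\|\lesssim\sigma_{n+1}^2(d+f(\theta_n))\le C_\omega\sigma_{n+1}^2$, the contribution is controlled by the tail $\sum_{k\ge n+1}\gamma_{k+1}p_k\sigma_{k+1}^2$, which goes to $0$ by the assumption $\sum\gamma_{k+1}p_k\sigma_{k+1}^2<+\infty$. Combining the three estimates yields the claim. The only genuine technical obstacle is verifying the square-integrability of the second-coordinate martingale increment, which is precisely why $\mathbf H_\sigma^p$ is invoked with $p=4$ to handle the centered fourth-order term $p_n\oo{\xi_{n+1}}-p_n\mathbb{E}[\oo{\xi_{n+1}}\mid\mathcal F_n]$.
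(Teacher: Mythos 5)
Your proof is correct and follows essentially the same route as the paper's: the same decomposition of $e_k$ into martingale increments plus an $\mathcal{F}_k$-measurable drift, the Robbins--Siegmund almost sure boundedness of $f(\theta_n)$ and $w_n$, the summability $\sum_k\gamma_{k+1}p_k\sigma_{k+1}^2<+\infty$ to kill the conditional-variance term $p_k\mathbb{E}[\oo{\xi_{k+1}}\mid\mathcal{F}_k]$, and an almost sure martingale convergence argument for the noise part. The only (harmless) deviations are that you bound $(p_k-p_\infty)\oo{\nabla f(\theta_k)}-(q_k-q_\infty)w_k$ by its sup norm times the bounded $\tau$-length of the window, whereas the paper uses the convergent series \eqref{serie grad} and \eqref{serie w}, and that for the martingale part you should cite the convergence theorem on the event $\left\{\sum_k\gamma_{k+1}^2\mathbb{E}\left[\|\Delta M_{k+1}\|^2\mid\mathcal{F}_k\right]<\infty\right\}$ (or, as the paper does, Corollary 11 of \cite{metivier1987theoremes} together with $\sup_k\mathbb{E}[V_k]<+\infty$ from Robbins--Siegmund) rather than the plain Doob $L^2$ theorem, since your second-moment bounds are only almost sure random constants.
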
 
\begin{proof}
	We consider a finite horizon $T>0$.
	In order to deal with the previous sum, we write $\dn$ as $a_n+\Delta M_{n+1}+c_n$, with  $a_n= \begin{pmatrix}
	0\\(p_n-p_{\infty})\oo{\nabla f(\theta_n)}-(q_n-q_{\infty})w_n
	\end{pmatrix}$, $\Delta M_{n+1}=\begin{pmatrix} \dfrac{\nabla f(\theta_n)-g_{n+1}}{\sqrt{w_n+\varepsilon}}\\ 0 \end{pmatrix}$  and $c_{n}=\begin{pmatrix} 0\\ p_n(\oo{\nabla f(\theta_n)}- \oo{g}_{n+1}) \end{pmatrix}.$	We use the fact that:  $$\left\| \sum_{k=n+1}^{N(n,t)+1} \gamma_k e_k\right\|\le  \left\| \sum_{k=n+1}^{N(n,t)+1} \gamma_k a_k\right\|+\left\| \sum_{k=n+1}^{N(n,t)+1} \gamma_k \Delta M_{k+1}\right\|+\left\| \sum_{k=n+1}^{N(n,t)+1} \gamma_k c_k\right\|,$$
	and proceed to upper bound each term on the right hand side. 
	
	\noindent $\bullet$  The convergence of the first term is mainly a consequence of Equations \eqref{serie grad}, \eqref{serie w} (convergence of the Robbins-Siegmund series), of the convergence of $(q_n)_{n \ge 1}$ towards and $q_{\infty}$ and the fact that $(p_n- p_{\infty})_{n \ge 1}$ is a bounded sequence :
	
	\begin{align*}
	\left\| \sum_{k=n+1}^{N(n,t)+1} \gamma_k a_k\right\|&\le  \left\| \sum_{k=n+1}^{N(n,t)+1} \gamma_k \left( (p_k-p_{\infty})\nabla f(\theta_k)^2-(q_k-q_{\infty})w_k \right)\right\|\\
	& \le \left\| \sum_{k=n+1}^{N(n,t)+1} \gamma_k (p_k-p_{\infty})\nabla f(\theta_k)^2\right\| + \left\| \sum_{k=n+1}^{N(n,t)+1} \gamma_k (q_k-q_{\infty}) w_k\right\|\\
	& \le \sum_{k=n+1}^{N(n,t)+1} \gamma_k |p_k-p_{\infty}| \left\|  \nabla f(\theta_k)^2 \right\|  +  \sum_{k=n+1}^{N(n,t)+1} \gamma_k |q_k-q_{\infty}|\left\| w_k\right\|
	\end{align*}
	
	Assumption $(\Hg-1)$ ensures that a constant $P$ exists such that $|p_n-p_{\infty}|<P$ and that for $n$ large enough $|q_n-q_{\infty}| \le q_n$.  Moreover, $\forall a,b\in\mathbb{R}_+$, $\sqrt{a+b}\le \sqrt{a}+\sqrt{b}$ thus $\| \oo{\nabla f(\theta_n)}\| \le \|\nabla f(\theta_n)\|^2$ and $\|w_n\| \le \| \sqrt{w_n}\|^2$. Inserting these bounds in the previous inequality gives:
	\begin{align*}
	\left\| \sum_{k=n+1}^{N(n,t)+1} \gamma_k a_k\right\|	
	&\le \sum_{k=n+1}^{N(n,t)+1} P \gamma_k  \left\|\nabla f(\theta_k)\right\|^2+ \sum_{k=n+1}^{N(n,t)+1} \gamma_k q_k \| \sqrt{w_k}\|^2.
	\end{align*}
	Using the convergence of the series \eqref{serie grad} and \eqref{serie w} we conclude that, $\forall t >0$\footnote{This last limit holds regardless $t<T$}:
	\begin{equation}
	\limsup_{n\to \infty}\left\| \sum_{k=n+1}^{N(n,t)+1} \gamma_k a_k\right\| =0.
	\end{equation} 
	\medskip
	
	\noindent $\bullet$ To control the second term we observe that $(\Delta M_{k+1})_{k\ge 0}$ is a sequence of martingale increments with a bounded second order moment since  
	$$ \forall k \ge 1 \qquad \mathbb{E}\left[ \frac{\xi_{k+1}}{\sqrt{w_k+\varepsilon}} \, | \mathcal{F}_k\right]=0,$$  and 
	\begin{align*}
	\sup_{k\ge0}\mathbb{E}\left[\left\| \frac{\xi_{k+1}}{\sqrt{w_k+\varepsilon}}\right\|^2\right]&\le \sup_{k\ge0} \frac{1}{\varepsilon}\mathbb{E}\left[\mathbb{E}\left[\left\| \xi_{k+1}\right\|^2 | \mathcal{F}_k\right]\right]\le \sup_{k\ge0} \frac{1}{\varepsilon} \sigma_{k+1}^2(d+ \PE[f(\theta_{k})]) <+\infty.
	\end{align*}

	The last inequality is implied by the fact that  $f(\theta_{k})\le 1+ f(\tn)^2\lesssim 1 +V_k$  and the Robbins Siegmund lemma guarantees that $\sup_{k\ge0} \PE[V_k]<+\infty$.
	Thus Corollary 11 of \cite{metivier1987theoremes} applies (see also Proposition 4.2 of \cite{benaim1999dynamics}) and we obtain that:  
	\begin{equation}
	\forall t>0 \qquad \limsup_{n\to \infty}\left\| \sum_{k=n+1}^{N(n,t)+1} \gamma_k \Delta M_{k+1}\right\| =0.
	\end{equation} 
	
	\noindent $\bullet$ The assumptions made on the noise sequence and the fact that $(\mathbb{E}[V_n])_{n \ge 1}$ is uniformly bounded ($V_n \longrightarrow V_{\infty}$ in $L^1$) implies that the last term can be handled using the same type of arguments.
	
	We start by decomposing $c_k$ as its expected value plus a martingale increment:
	\begin{align*}
	\left\| \sum_{k=n+1}^{N(n,t)+1} \gamma_k c_k\right\|
	&\le  \left\|\sum_{k=n+1}^{N(n,t)+1}\gamma_{k+1}p_k \mathbb{E}[ \oo{\xi_{k+1}}\, | \mathcal{F}_k] \right\| \\
	&+  \left\| \sum_{k=n+1}^{N(n,t)+1}\gamma_{k+1}p_k \left(2\xi_{k+1}\cdot \nabla f(\theta_k) + \oo{\xi_{k+1}}-\mathbb{E}[ \oo{\xi_{k+1}}| \mathcal{F}_k]\right)\right\|.
	\end{align*}
	Since $\sum_{n\ge 0}\gamma_{n+1}p_n\sigma_{n+1}^2<+\infty$ the first sum convergences to $0$ when $n$ goes to infinity. \\
	The terms of the second sum are martingale increments:  
	$$\mathbb{E}[2\xi_{k+1}\cdot\nabla f(\theta_k) + \oo{\xi_{k+1}}-\mathbb{E}[ \oo{\xi_{k+1}} \, \vert \mathcal{F}_k] \, \vert \mathcal{F}_k]=0.$$
	Using the fact that $(a+b)^2\le 2(a^2+b^2)$,  we get a first bound on their second order moments: 
	\begin{align*}
	\mathbb{E}[\|2\xi_{k+1}\cdot\nabla f(\theta_k) + \oo{\xi_{k+1}}-\mathbb{E}[ \oo{\xi_{k+1}}| \mathcal{F}_k] \|^2]
	&\le 8\left( \mathbb{E}[\|\xi_{k+1}\cdot \nabla f(\theta_k)\|^2]+ \mathbb{E}[\|\oo{\xi_{k+1}}\|^2] \right).
	\end{align*}
	Now using $\Hp$ for $p=2$ and Inequality $\eqref{eq:comparaison}$:
	\begin{align*}
	\mathbb{E}[\|\xi_{k+1}\cdot\nabla f(\theta_k)\|^2]&=\mathbb{E}\left[\sum_{i=1}^d \partial_i f(\theta_k)^2\mathbb{E}[\xi_{k+1,i}^2| \mathcal{F}_k] \right]\le \sigma_{k+1}^2\mathbb{E}[(d+f(\theta_{k}))\|\nabla f(\theta_k)\|^2]\\
	&\le c_f\sigma_{k+1}^2\mathbb{E}[f(\theta_k)(d+f(\theta_{k}))]\\
	&\lesssim c_f\sigma_{k+1}^2(1+ \mathbb{E}[V_k]).
	\end{align*}
	The second term can be dealt with in a similar manner, using the assumption $\Hp$ for $p=4$: 
	$$\mathbb{E}[\|\oo{\xi_{k+1}}\|^2]\le \sigma_{k+1}^4\mathbb{E}[\|\zeta_{k+1}\|^4]\le \sigma_{k+1}^4 \PE[(d+f(\theta_{k}))^2]\lesssim \sigma_{k+1}^4(1+ \PE[V_k]). $$
	The sequence $(\sigma_{n})_{n\ge1 }$ is bounded and according to the Robbins-Siegmund Theorem, this is also true for $(\mathbb{E}[V_n])_{n\ge1}$
	Consequently, we have:
	$$\sup_{k\ge0}\mathbb{E}\left[\left\|2\xi_{k+1}\cdot \nabla f(\theta_k) + \oo{\xi_{k+1}}-\mathbb{E}[ \oo{\xi_{k+1}}| \mathcal{F}_k] \right\|^2\right]<+\infty,$$
	and we can apply once more Corollary 11 of [MP] to conclude that:
	\begin{equation}
	\limsup_{n\to \infty}\left\| \sum_{k=n+1}^{N(n,t)+1} \gamma_k c_k\right\| =0, 
	\end{equation} 
	which ends  the proof.
\end{proof}

\paragraph{Proof of the almost sure convergence towards a critical point}

We now give the proof of the leading result of Section \ref{sec:as}.

\begin{proof}[Proof of Theorem \ref{thm: cv ps}] The proof is divided into two steps.
	
	\noindent
	\underline{$\bullet$ Identification of the possible limit points.} 
	The a.s. boundness of $(Z_n)_{n\ge 0}$ and Lemma \ref{lemme: Pseudo-trajectoire} show that the assumptions of Proposition 4.1 of \cite{benaim1999dynamics} hold, which implies that $(\bar{Z}_t)_{t\ge 0}$ is an asymptotic pseudo-trajectory of the differential flow induced by $H$ almost surely. It implies in particular that $(\wn)_{n \ge 1}$ is a.s. bounded.
	
	We shall deduct from Theorem \ref{theo:pseudo_traj} that all limit points $Z_{\infty}=(\theta_{\infty},w_{\infty})$ of $\bar{Z}_t$ are stationary points for the differential equation $\dot{z}=H(z)$ and thus that $H(Z_{\infty})=0$. Since $\|w_n\|$ is a.s. bounded, it implies that $\nabla f(\theta_{\infty})=0$.
	
	In the meantime, since $\nabla f(\theta_{n}) \to 0$ when $n\to \infty$, we observe that for any limit point $Z_{\infty}$,
	$H(Z_{\infty})=0$ also implies that $w_{\infty}=0$.
	
	\noindent
	\underline{$\bullet$ Convergence of $(\tn)_{n \ge 1}$.} 
	Thus (since $w_n$ is bounded) we have that $w_n$ converges to $0$. Hence:
	\begin{align*}
	\lim_{n\to\infty}V_{a,b}(\tn,\wn)&= \lim_{n\to\infty} \|\sqrt{\wn+\varepsilon}\|^2+af^2(\tn)+ b f(\tn)\|(\wn+\varepsilon)^{1/4}\|^2\\
	&=\lim_{n\to\infty} a f^2(\tn) + bd \sqrt{\varepsilon} f(\theta_n) +d\varepsilon =V_{\infty}.
	\end{align*}
	Since $a$ and $b$ are non-negative and $f(\tn)$ positive, 
	the last equality implies that the sequence $(f(\theta_{n}))_{n\ge 0}$ is a.s. convergent: 
	$$
	\lim_{n \longrightarrow + \infty} f(\tn) = f_{\infty} \quad a.s.
	$$
	
	Now, $(\theta_{n})_{n\ge 0}$ is an a.s. bounded sequence and the set of possible limit points for its sub-sequences is connected since:
	\begin{align*}
	\|\theta_{n+1}-\theta_{n}\| & = \left\|-\gamma_{n+1}\frac{g_{n+1}}{\sqrt{w_n+\varepsilon}} \right\|\\
	&\le  \frac{1}{\sqrt{\varepsilon}} \gamma_{n+1} \| g_{n+1}\|\\
	&\le \frac{1}{\sqrt{p_n\varepsilon}}\gamma_{n+1} \left\| \sqrt{ \frac{w_{n+1}-w_n}{\gamma_{n+1}}+q_n w_n} \right\|\\
	&\le \frac{1}{\sqrt{p_n\varepsilon}}\sqrt{\gamma_{n+1}} \| \sqrt{w_{n+1}-w_n+  \gamma_{n+1}q_n w_n}\| \\ 
	&\le  \frac{K}{\sqrt{\varepsilon}}\sqrt{\frac{\gamma_{n+1}}{p_n}} \longrightarrow 0 \quad \text{as} \quad n\longrightarrow + \infty.
	\end{align*}
	
	Since $ \{ \theta :  f(\theta)=f_{\infty}\}\cap\{ \theta : \nabla f (\theta)=0\}$ is locally finite, we can conclude that $(\tn)_{n \ge 1}$ has a unique adherence point and is a convergent sequence:
	$$
	\lim_{n \longrightarrow + \infty} \tn= \theta_{\infty} \qquad \text{with} \qquad \nabla f(\theta_{\infty})=0.
	$$
\end{proof}
 \section{Almost sure convergence to a local minimum\label{sec:instable}}
In this paragraph, we prove that the sequence $(\theta_n)_{n \ge 1}$ almost surely converges towards a local minimum of $f$ and cannot converge towards an unstable (hyperbolic) point of the dynamical system, \textit{e.g.} cannot converge towards a saddle point or a local maximum of $f$. 
%

\subsection{Unstable equilibria}

We begin with a simple statement that identifies the unstable points of the dynamical system 
\begin{equation}\label{def:dynamic}
(\dot{\theta}_t,\dot{w}_t)  = H(\theta_t,w_t) \quad \text{with} \quad
H(\theta,w)=\left(-\frac{\nabla f(\theta_t)}{\sqrt{w_t+\varepsilon}},p_{\infty} [\oo{\nabla f(\theta_t)}]-q_{\infty} w_t\right).
\end{equation}
These equilibria may correspond to repulsive equilibria or hyperbolic points (saddle points). The next proposition makes this last sentence more precise and introduce the cornerstone function $\eta$ that measures in each neighborhood of an unstable (or saddle) point the distance of any point $x$ to the local stable manifold in the expanding direction.
\begin{proposition}\label{prop:instable}
 Consider the dynamical system \eqref{def:dynamic},  and assume that $f$ is twice differentiable, then:

\begin{enumerate}[label=(\roman*)]
\item The equilibria are $(t,0)$ where $t$ is a critical point of $f$.
\item
If $t$ is a local maximum of $f$, the dynamical system is unstable near $(t,0)$.
\item
If $t$ is a local minimum of $f$, the dynamical system is stable near $(t,0)$.
\item \label{item: unstable} If $t$ is an  unstable equilibria, a compact neighborhood $\mathcal{N}$ of $(t,0)$ and a function $\eta$ exist such that:
\begin{enumerate}
\item $ \forall z \in \mathcal{N} \quad \forall u \in \rset^d \times\rset^d \quad \eta(z+u) \ge \eta(z) + \langle \nabla \eta(z),u\rangle - \k \|u\|^2$.
\item If $E_+$ is the eigenspace associated to the negative eigenvalues of  $D^2f(t)$, then:
$$
 \forall z \in \mathcal{N} \quad \forall u=(u_1,u_2) \in \rset^d \times\rset^d \quad 
\lfloor 
 \langle \nabla \eta(z),u\rangle\rfloor_+ \ge c_1 \|\pi_+(u_1)\|.$$
 \item \label{item: lyapunov_inverse} A constant $\kappa >0$ exists such that:
 \begin{equation}\label{eq:lyapunov_inverse}
 \forall z \in \mathcal{N} \qquad 
 \langle \nabla \eta(z),H(z)\rangle \ge \kappa \eta(z).
 \end{equation}
\end{enumerate}
\end{enumerate}
\end{proposition}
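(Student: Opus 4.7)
The plan is to dispatch items (i)--(iii) by a short linearization argument and to build the Lyapunov function of item (iv) via the stable/unstable manifold theorem.

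For (i), setting $H(\theta,w)=0$, the first block forces $\nabla f(\theta)=0$ since $w+\varepsilon$ is componentwise strictly positive, and the second block then reduces to $-q_\infty w = 0$, so $w=0$ because $q_\infty>0$ from $(\Hg-1)$. For (ii)--(iii), the Jacobian at $(t,0)$ is block diagonal since $\nabla f(t)=0$ kills the cross-derivatives:
\[
DH(t,0)=\begin{pmatrix} -\varepsilon^{-1/2} D^2 f(t) & 0 \\ 0 & -q_\infty I_d\end{pmatrix}.
\]
The $w$-block is always Hurwitz, and the spectrum of $-\varepsilon^{-1/2} D^2 f(t)$ inherits the sign of $-D^2 f(t)$: if $t$ is a strict local minimum the whole Jacobian is Hurwitz (asymptotic stability), while if $t$ is a local maximum (or a saddle) the Jacobian has a strictly positive eigenvalue whose eigenvector lies in $E_+\times\{0\}$ (instability).

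For (iv), in the unstable case $f\in C^2$ gives $H\in C^1$ with a hyperbolic linearization, so the stable/unstable manifold theorem provides $C^1$ local invariant manifolds $W^u_{\mathrm{loc}}$ and $W^s_{\mathrm{loc}}$, tangent respectively to $E_+\times\{0\}$ and to its orthogonal complement in $\rset^d\times\rset^d$. A local straightening chart $\Phi:z\mapsto(x_+,x_-,y)$ conjugates the flow to an approximately linear one that expands at rate $\Lambda_+:=-\varepsilon^{-1/2}(D^2 f(t))_{|E_+}$ on the $x_+$-coordinate and contracts on $(x_-,y)$. I then set
\[
\eta(z)=\tfrac12\|x_+(z)\|^2,
\]
and shrink the compact neighborhood $\mathcal{N}$ of $(t,0)$ so that the nonlinear remainder in $\Phi$ is dominated by the linear part. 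Property (iv)(c) then follows from $\langle \nabla\eta, H\rangle = \langle x_+,\Lambda_+ x_+\rangle + O(\|x_+\|^{3})\ge 2\kappa\,\eta$ on $\mathcal{N}$, with $\kappa$ slightly smaller than the smallest eigenvalue of $\Lambda_+$. Property (iv)(a) is automatic since $\eta$ is $C^2$ on the compact $\mathcal{N}$ (Hessian bounded below). Property (iv)(b) comes from $\nabla\eta(z)=D\Phi(z)^\top(x_+(z),0,0)^\top$ keeping a definite alignment with $E_+$ on $\mathcal{N}$; the positive part $\lfloor\cdot\rfloor_+$ appears after passing to $\sqrt{\eta}$ (or a $C^2$ smoothing thereof) so that the gradient has unit-order norm away from the stable manifold $\{x_+=0\}$.

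The main obstacle I expect is the simultaneous calibration of $\mathcal{N}$ so that all three properties hold with a single triple $(\k,c_1,\kappa)$ uniform on $\mathcal{N}$: (c) wants $\mathcal{N}$ small enough for the linear part to dominate, (b) wants $D\Phi(z)$ uniformly close to $D\Phi(t,0)$, and (a) is free from the $C^2$ regularity of $\eta$ (modulo the smoothing at $\{x_+=0\}$, which is the delicate point). A standard bootstrap---first pick $\mathcal{N}$ for (c), then possibly shrink it for (b), then design the smoothing to secure (a) at the threshold---reconciles them. This mirrors the Pemantle--Brandi\`ere construction of \cite{Pemantle,Brandiere}, recently adapted to adaptive schemes in \cite{BarakatBianchi2}; the additional $y$-coordinate is harmless here since it lies entirely in the stable block and does not enter $\eta$.
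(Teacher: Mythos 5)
Items (i)--(iii) of your proposal coincide with the paper's argument: solve $H(\theta,w)=0$ using $q_\infty>0$, then linearize at $(t,0)$, obtain the block-diagonal Jacobian $\mathrm{diag}(-\varepsilon^{-1/2}D^2f(t),\,-q_\infty I_d)$ and read off stability from the spectrum of $D^2f(t)$. Nothing to object to there.

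For item (iv), however, the paper does not construct $\eta$ at all: it invokes Proposition 3.1 of \cite{benaim1995dynamics} (see also Proposition 9.5 of \cite{benaim1999dynamics}), the only added observation being that the $w$-block of the linearization is $-q_\infty I_d$, hence contracting, so that $E_+$ sits in the $\theta$-coordinates and $\pi_+(u)=\pi_+(u_1)$ (a point you do cover). Your attempted reconstruction has a genuine gap exactly at the step you yourself flag as delicate. With $\eta(z)=\tfrac12\|x_+(z)\|^2$, properties (a) and (c) hold but (b) fails: $\nabla\eta$ vanishes on the local stable manifold $\{x_+=0\}\subset\mathcal{N}$, while the bound $\pos{\langle\nabla\eta(z),u\rangle}\ge c_1\|\pi_+(u_1)\|$ must be uniform over all $z\in\mathcal{N}$, including points on and arbitrarily near that manifold. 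Passing to $\sqrt{\eta}=\|x_+\|$ restores a gradient of unit order away from $\{x_+=0\}$ but is not differentiable on it, so (a) no longer even makes sense there; and any $C^2$ smoothing near $\{x_+=0\}$ reintroduces a vanishing gradient, so (a) and (b) pull against each other precisely where the whole difficulty of the Pemantle/Bena\"im--Hirsch construction is concentrated. Saying that a ``standard bootstrap'' reconciles the three constants is therefore not a proof: that reconciliation \emph{is} the content of the cited propositions (see \cite{Pemantle,Brandiere,benaim1995dynamics}). To close the argument you should either reproduce that construction in full detail, or do what the paper does and quote Proposition 3.1 of \cite{benaim1995dynamics}, checking its hypotheses from your items (i)--(ii) and adding the remark that the $w$-directions are attracting so that $\pi_+(u)=\pi_+(u_1)$.
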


\begin{proof}
\underline{$\bullet$ Proof of $i)$.}
The proof of $i)$ is immediate by observing that $H(\theta,w)=0$ and $q_{\infty} \neq 0$ implies that $\nabla f(\theta)=0$ and $w=0$.

\noindent \underline{$\bullet$ Proof of $ii)$ and $iii)$.}
We use a linearization of the drift around an equilibria $(t,0)$. Since $t$ is a critical point of $f$, we observe that:
$$
\nabla f(t+h) = D^2 f(t) h+o(\|h\|).
$$
Consequently, we observe that
$(\nabla f(t+h))^2=(D^2 f(t)h)^2 = \mathcal{O}\left(\|h\|^2\right)$, which entails:
\begin{align*}
H(t+h,\omega)&= \left( - \frac{D^2 f(t) h}{\sqrt{\varepsilon}} + \mathcal{O}\left(\|h\| \|\omega\|\right), p_{\infty} \mathcal{O}(\|h\|^2) - q_{\infty} \omega \right)\\
& = \left( \begin{matrix}
- \frac{D^2 f(t)}{\sqrt{\varepsilon}} & 0 \\ 0 & - q_{\infty} I_d
\end{matrix}\right) \left( \begin{matrix}
h \\ \omega
\end{matrix} \right) + o\left(\|h\|+\|\omega\|\right).
\end{align*}
The conclusion follows from the spectral decomposition of $D^2 f(t)$.

\noindent \underline{$\bullet$ Proof of $iv)$.} The last point is a consequence of Proposition 3.1 of \cite{benaim1995dynamics} (see also Proposition 9.5 of \cite{benaim1999dynamics}). We only need to observe that the coordinates in  $u_2$  always correspond to the attractive manifold so that $\pi_+(u)= \pi_+(u_1)$.
\end{proof}
\begin{remark}
	This proposition holds for any $p_{\infty}\in \mathbb{R}_+$. In particular it does so for $p_{\infty}=0$. 
\end{remark}
\subsection{Preliminary estimates}

For a given integer $n_0$ when $(\theta_{n_0},w_{n_0})$ is in a neighborhood $\mathcal{N}$ (given by \ref{item: unstable} of Proposition \ref{prop:instable} ) of an unstable point, we introduce the exit time of $\mathcal{N}$ defined by:
\begin{equation}\label{def:T}
T_{n_0} := \inf \left\{ n \ge n_0 \, : \, (\theta_n,w_n) \notin \mathcal{N}\right\}.
\end{equation}
We shall observe that if
$\mathbb{P}\left(T_{n_0}<+ \infty \right)=1$, then $(\theta_n,w_n)$ cannot converge almost surely to the unstable point $(t,0)$ located in $\mathcal{N}$ since the exit time is almost surely finite.
For this purpose, we introduce the sequence of random variables $(X_n)_{n \ge n_0+1}$ defined by:
\begin{equation}\label{def:X}
\forall n \ge n_0 \qquad X_{n+1} :=[\eta(\tnp,\wnp)-\eta(\tn,\wn)] \mathbf{1}_{n < T_{n_0}} + \alpha_{n+1}\mathbf{1}_{n \ge  T_{n_0}}
\end{equation}
where $(\alpha_{n+1})_{n \ge 1}$ will be made explicit later on, and the associated cumulative sum:
\begin{equation}\label{def:S}
\forall n \ge n_0+1 \qquad S_n :=
\eta(\theta_{n_0},w_{n_0}) + \sum_{k=n_0+1}^n X_k
\end{equation}
We prove the following upper bound on the second order moment of $(X_n)_{n \ge n_0+1}$.

\begin{proposition}\label{prop:Xn2}
A constant $c>0$ exists such that:
\begin{equation*}\label{eq:esperance_Xn2}
\forall n \ge n_0 \qquad \PE[X_{n+1}^2 \, \vert \, \Fn] \leq c (\gamma_{n+1}^2 \vee \alpha_{n+1}^2).
\end{equation*}
\end{proposition}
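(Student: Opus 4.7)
The plan is to treat the two pieces of $X_{n+1}$ separately. On the event $\{n\ge T_{n_0}\}$ we simply have $X_{n+1}^2=\alpha_{n+1}^2$, which is $\mathcal{F}_n$-measurable and already contributes the $\alpha_{n+1}^2$ term on the right-hand side. All the work is therefore to show that on $\{n<T_{n_0}\}$, where $(\theta_n,w_n)\in\mathcal{N}$, we have
$$\mathbb{E}\!\left[\,(\eta(\theta_{n+1},w_{n+1})-\eta(\theta_n,w_n))^2\,\mathbf{1}_{n<T_{n_0}}\;\big|\;\mathcal{F}_n\right]\;\lesssim\;\gamma_{n+1}^2.$$

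First I would turn the one-sided quasi-quadratic estimate of Proposition~\ref{prop:instable}\ref{item: unstable}(a) into a two-sided local Lipschitz bound on $\eta$. Applying that inequality at $(\theta_n,w_n)$ with $u=(\theta_{n+1}-\theta_n,w_{n+1}-w_n)$, and again at $(\theta_{n+1},w_{n+1})$ with $-u$, one obtains
$$\bigl|\eta(\theta_{n+1},w_{n+1})-\eta(\theta_n,w_n)\bigr|\;\le\;\bigl(\|\nabla\eta\|_{\infty,\widetilde{\mathcal{N}}}\bigr)\,\|u\|+\Gamma\,\|u\|^{2},$$
valid as long as $(\theta_{n+1},w_{n+1})$ stays in a slightly enlarged compact set $\widetilde{\mathcal{N}}\supset\mathcal{N}$ (which is true up to null events, since the single-step increment is $\mathcal{O}(\gamma_{n+1})$). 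Squaring yields $X_{n+1}^2\lesssim \|u\|^2+\|u\|^4$ on $\{n<T_{n_0}\}$.

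Next I would bound $\|u\|^2$ and $\|u\|^4$ in conditional expectation. Since $(\theta_n,w_n)\in\mathcal{N}$ is a compact set, $\nabla f(\theta_n)$ and $w_n$ are bounded, and $w_n+\varepsilon\ge\varepsilon>0$. From \eqref{def:adam},
$$\|\theta_{n+1}-\theta_n\|^2\;\le\;\varepsilon^{-1}\gamma_{n+1}^2\bigl(\|\nabla f(\theta_n)\|^2+\|\xi_{n+1}\|^2\bigr),$$
$$\|w_{n+1}-w_n\|^2\;\lesssim\;\gamma_{n+1}^2\bigl(\|g_{n+1}\|^4+\|w_n\|^2\bigr),$$
and the fourth-power analogues are bounded similarly. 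Using assumption $\mathbf{H}_{\sigma}^\infty$, which gives $\mathbb{E}[\|\zeta_{n+1}\|^{2}\,|\,\mathcal{F}_n]\le 1$ and $\mathbb{E}[\|\zeta_{n+1}\|^{4}\,|\,\mathcal{F}_n]\le C$, together with the boundedness of $(\sigma_n)$, a direct computation produces $\mathbb{E}[\|u\|^2\,|\,\mathcal{F}_n]\lesssim \gamma_{n+1}^2$ and $\mathbb{E}[\|u\|^4\,|\,\mathcal{F}_n]\lesssim \gamma_{n+1}^4\le\gamma_{n+1}^2$. Putting the two regimes together gives $\mathbb{E}[X_{n+1}^2\,|\,\mathcal{F}_n]\le c(\gamma_{n+1}^2\vee\alpha_{n+1}^2)$.

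I expect the only delicate point to be the upgrade from the one-sided estimate in Proposition~\ref{prop:instable} to a genuine two-sided increment bound: one must check that with probability one the next iterate stays in an enlarged neighborhood on which the quasi-quadratic inequality still applies, or equivalently that $\eta$ is locally Lipschitz on $\widetilde{\mathcal{N}}$. Once this regularity is in hand, the noise estimates are routine consequences of $\mathbf{H}_{\sigma}^\infty$ and of the compactness of $\mathcal{N}$, and the conclusion follows without further subtlety.
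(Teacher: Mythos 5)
Your overall skeleton matches the paper's proof (split on $\{n\ge T_{n_0}\}$ versus $\{n<T_{n_0}\}$, then use compactness of $\mathcal{N}$ and the moment bounds of $\mathbf{H}_{\sigma}^\infty$), but the specific way you bound the increment of $\eta$ creates two genuine problems. First, your two-sided bound is obtained by applying the quasi-quadratic inequality of Proposition \ref{prop:instable} at the point $(\theta_{n+1},w_{n+1})$, and you justify its membership in an enlarged compact $\widetilde{\mathcal{N}}$ by claiming the one-step increment is $\mathcal{O}(\gamma_{n+1})$ almost surely. That claim is false under the paper's assumptions: the increment contains $\sigma_{n+1}\zeta_{n+1}$, and $\zeta_{n+1}$ is only controlled in $L^2$ and $L^4$, not in $L^\infty$, so conditionally on $\Fn$ the next iterate can land arbitrarily far from $\mathcal{N}$ with positive probability. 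Second, and independently, the extra term $\Gamma\|u\|^2$ in your two-sided estimate forces you to control $\PE[\|u\|^4\mid\Fn]$. The $w$-component of $u$ is $\gamma_{n+1}(p_n\oo{g_{n+1}}-q_nw_n)$, and $\|\oo{g_{n+1}}\|^4\le\|g_{n+1}\|^8$ involves $\|\zeta_{n+1}\|^8$; eighth moments of the noise are nowhere assumed ($\mathbf{H}_{\sigma}^\infty$ stops at order four), so the step ``$\PE[\|u\|^4\mid\Fn]\lesssim\gamma_{n+1}^4$ by a direct computation'' does not go through.

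The paper sidesteps both issues by using only a first-order (Lipschitz) bound on $\eta$: with $m=\sup_{z\in\mathcal{N}}\|\nabla\eta(z)\|$ (the Lipschitz constant furnished by the Bena\"{\i}m--Hirsch construction of $\eta$), one writes directly $|\eta(Z_{n+1})-\eta(Z_n)|\le m\,\|Z_{n+1}-Z_n\|$ on $\{n<T_{n_0}\}$, so that squaring only requires $\PE[\|Z_{n+1}-Z_n\|^2\mid\Fn]$, i.e.\ $\PE[\|g_{n+1}\|^2\mid\Fn]$ and $\PE[\|\oo{g_{n+1}}\|^2\mid\Fn]$, which are exactly fourth-moment quantities covered by $\mathbf{H}_{\sigma}^\infty$ together with the boundedness of $\nabla f$ and $w$ on the compact $\mathcal{N}$. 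If you replace your two-sided quasi-quadratic estimate by this Lipschitz bound (dropping the $\Gamma\|u\|^2$ correction, which is only needed for the lower-bound arguments of Propositions \ref{prop:repulsion} and \ref{prop:Diff_Sn2}), your computation of $\PE[\|u\|^2\mid\Fn]\lesssim\gamma_{n+1}^2$ is correct and the proof closes as in the paper.
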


\begin{proof}

We decompose $X_{n+1}$ according to the position of $n$ with respect to $T_{n_0}$:
$$X_{n+1} = X_{n+1} \mathbf{1}_{n < T_{n_0}} + X_{n+1} \mathbf{1}_{n \ge T_{n_0}} = 
 X_{n+1} \mathbf{1}_{n < T_{n_0}} + \alpha_{n+1} \mathbf{1}_{n \ge T_{n_0}} 
$$
If $n\ge T_{n_0}$, there is nothing to prove.

We then consider the case when $n < T_{n_0}$ and define  $m=\sup_{z \in \mathcal{N}}\|\nabla \eta(z)\|$. A first order Taylor expansion yields:
\begin{align*}
X_{n+1}^2  \mathbf{1}_{n < T_{n_0}} &=(\eta( \tnp,\wnp)-\eta(\tn,\wn))^2  \mathbf{1}_{n < T_{n_0}} \\
&\le m^2 [\|\tnp-\tn\|^2+\|\wnp-\wn\|^2] \mathbf{1}_{n < T_{n_0}} \\
&\le \gamma_{n+1}^2 m^2 \left(\left\|\frac{g_{n+1}}{\sqrt{w_n+\varepsilon}}\right\|^2+\|p_n\oo{g}_{n+1}-q_nw_n\|^2   \right)\mathbf{1}_{n < T_{n_0}} \\
&\le \gamma_{n+1}^2 \frac{m^2}{ \varepsilon} \left(\| g_{n+1}\|^2+2(p_n\|\oo{g}_{n+1}\|^2 + q_n\|w_n\|^2 ) \right) \mathbf{1}_{n < T_{n_0}}.
\end{align*}
When $n<T_{n_0}$, the process $Z_n=(\theta_n,w_n)\in \mathcal{N}$ so that $\|\wn\|^2$ is bounded. It remains to study the terms that involve $\|g_{n+1}\|^2$ and  $\|\oo{g}_{n+1}\|^2$. We shall observe that:
\begin{align*}
\mathbf{1}_{n < T_{n_0}} \PE[\|g_{n+1}\|^2\| \ \Fn]&= \mathbf{1}_{n < T_{n_0}} \left(\|\nabla f(\theta_n)\|^2+ \PE[\|\xi_{n+1}\|^2 | \ \Fn]\right)\\
&\le \mathbf{1}_{n < T_{n_0}} \left( \sup_{z \in \mathcal{N}} (\|\nabla f(\theta_n)\|^2+ \sigma_{n+1}^2 \right) \\
&\le K'  \mathbf{1}_{n < T_{n_0}}.
\end{align*}
because $(\sigma_{n})_{n\ge 1}$ is bounded by definition and $\nabla f$ is continuous and  $\mathcal{N}$ is compact.\\

We then study $\mathbf{1}_{n < T_{n_0}}\PE[\|\oo{g}_{n+1}\|^2\| \ \Fn]$ and observe that:
$$
\|\oo{g}_{n+1}\|^2 = \|\oo{(\nabla f(\tn)+\xi_n)}\|^2 \leq 4 [\|\oo{\nabla f(\theta_n)}\|^2+\|\oo{\xi_{n+1}}\|^2] = 4 [\|\oo{\nabla f(\theta_n)}\|^2+\sigma_{n+1}^4 \|\oo{\zeta_{n+1}}\|^2].
$$
Thus from the assumption $\mathbf{H}_{\sigma}^{\infty}$ it follows that:

$$\mathbf{1}_{n < T_{n_0}} \PE[\|X_{n+1}\|^2\| \ \Fn]\le  K \gamma_{n+1}^2\mathbf{1}_{n < T_{n_0}},$$ where $K$ is a constant that depends on $\nabla \eta$, $m$, $\varepsilon$, $\|p\|_{\infty}$ and $\|q\|_{\infty}$. This ends the proof.
\end{proof}

%
%

Below, we will use the consequence of $iv)-(a)$ of Proposition \ref{prop:instable}: if $z=(\theta,w)$ is a point in $\mathcal{N}$, then a constant $\k$ exists such that:
\begin{equation}\label{Ineg: alpha}
\forall (z,u)\in \mathcal{N}\times\rset^d: \quad \eta(z+u)-\eta(z)\ge \langle \nabla \eta(z),u\rangle - \k  \|u\|^{2}
\end{equation}

We write the joint evolution of the algorithm $(Z_n)_{n\ge1}$ as:
\begin{equation*}
Z_{n+1}=Z_n+ \gamma_{n+1} \left(H(Z_n)+ \Delta_{n+1}  + U_{n+1}\right),
\end{equation*}
where $(\Delta_{n+1})_{n \ge 1}$ is a sequence of martingale increment defined by:
$$
\Delta_{n+1} :=
\begin{pmatrix}
\Delta M_{n+1}\\ \Delta N_{n+1}
\end{pmatrix}
$$ with
\begin{itemize}
	\item $\Delta M_{n+1}=-\frac{\xi_{n+1}}{\sqrt{w_n+\varepsilon}}$
	\item $\Delta N_{n+1}= p_n(\oo{\xi_{n+1}}+2\nabla f(\theta_n)\cdot \xi_{n+1}-\mathbb{E}[\oo{\xi_{n+1}}|\mathcal{F}_n])$
\end{itemize}
and  $$U_{n+1}=
\begin{pmatrix}	
	0 \\
	(p_n-p_{\infty})\oo{\nabla f(\theta_n)}-(q_n-q_{\infty})w_n+p_n\mathbb{E}[\oo{\xi_{n+1}}|\Fn]
	\end{pmatrix}.$$ 

We introduce below the sequence $(\nu_n)_{n \ge 1}$ that stands for the convergence rate of  $(p_n)_{n \ge 1}$ towards  $p_{\infty}$ and $(q_n)_{n \ge 1}$ towards  $q_{\infty}$:
\begin{equation}\label{def:nu_n}
\nu_n := |p_n-p_{\infty}| \vee |q_n-q_{\infty}|.
\end{equation}
%

\begin{proposition}\label{prop:repulsion}
For a large enough non negative constant $c$ the sequence $(\delta_n)_{n \ge 1}$ defined by
$\delta_n \eqdef  c_{\delta} (\nu_n+p_n\sigma_{n+1}^2+\gamma_{n+1})$ is such that:
\begin{equation*}\label{eq:repulsion}
\mathbf{1}_{S_n\ge \delta_n} \mathbb{E}[X_{n+1} | \ \Fn]\ge 0,
\end{equation*}
\end{proposition}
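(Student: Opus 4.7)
The plan is to split the argument by the sign of $n-T_{n_0}$. When $n\ge T_{n_0}$, the definition \eqref{def:X} gives $X_{n+1}=\alpha_{n+1}$, so the inequality reduces to a prescription on the auxiliary sequence (we simply take the yet-unspecified $(\alpha_k)_{k\ge1}$ to be non-negative). The entire work concentrates on the case $n<T_{n_0}$, where by telescoping \eqref{def:S} one has $S_n=\eta(\tn,\wn)$, so the indicator $\mathbf{1}_{S_n\ge \delta_n}$ forces both $Z_n\in\mathcal{N}$ and $\eta(Z_n)\ge \delta_n$.

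On this event I would apply the semi-concavity estimate \eqref{Ineg: alpha} (item (iv)-(a) of Proposition~\ref{prop:instable}) with $z=Z_n$ and $u=Z_{n+1}-Z_n=\gamma_{n+1}\bigl(H(Z_n)+\Delta_{n+1}+U_{n+1}\bigr)$, yielding
\[
X_{n+1}\mathbf{1}_{n<T_{n_0}}\ge \gamma_{n+1}\langle \nabla \eta(Z_n),H(Z_n)+\Delta_{n+1}+U_{n+1}\rangle - \k \gamma_{n+1}^2\|H(Z_n)+\Delta_{n+1}+U_{n+1}\|^2.
\]
Taking $\mathbb{E}[\cdot\,|\Fn]$ kills the inner product with $\Delta_{n+1}$ (it is an $\Fn$-martingale increment), and the Lyapunov-inverse property \eqref{eq:lyapunov_inverse} converts $\gamma_{n+1}\langle \nabla \eta(Z_n),H(Z_n)\rangle$ into $\gamma_{n+1}\kappa \eta(Z_n)$, which supplies the mean-repulsion away from the unstable equilibrium.

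What remains is to show that the perturbation terms are dominated by this drift on $\{\eta(Z_n)\ge\delta_n\}$. The compactness of $\mathcal{N}$ and continuity of $\nabla \eta, H, \nabla f$ bound all deterministic factors appearing in $U_{n+1}$, and assumption $(\mathbf{H}_\sigma^\infty-1)$ gives $\mathbb{E}[\|\zeta_{n+1}\|^2|\Fn]\le 1$, whence $\|U_{n+1}\|\lesssim \nu_n + p_n \sigma_{n+1}^2$; the quadratic remainder is bounded by $\gamma_{n+1}^2 \mathbb{E}[\|H(Z_n)+\Delta_{n+1}+U_{n+1}\|^2|\Fn]\lesssim \gamma_{n+1}^2$, exactly as already packaged for Proposition~\ref{prop:Xn2}. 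Combining these estimates yields
\[
\mathbf{1}_{n<T_{n_0}}\mathbb{E}[X_{n+1}|\Fn]\ge \gamma_{n+1}\bigl(\kappa\eta(Z_n) - C(\nu_n + p_n\sigma_{n+1}^2 +\gamma_{n+1})\bigr),
\]
and choosing $c_\delta\ge C/\kappa$ makes the right-hand side non-negative on $\{\eta(Z_n)\ge \delta_n\}$. The only delicate step is handling the bias $p_n\mathbb{E}[\oo{\xi_{n+1}}|\Fn]$ hidden inside $U_{n+1}$: it must be shown to be $\mathcal{O}(p_n\sigma_{n+1}^2)$, which is precisely where the uniform second-moment bound of $(\mathbf{H}_\sigma^\infty-1)$ is indispensable---without it this term could not be absorbed into $\delta_n$ and the whole repulsion mechanism would collapse.
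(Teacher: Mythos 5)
Your proposal is correct and follows essentially the same route as the paper: split on $\{n<T_{n_0}\}$ where $S_n=\eta(Z_n)$, apply the semi-concavity bound \eqref{Ineg: alpha} to $Z_{n+1}-Z_n$, use the martingale property of $\Delta_{n+1}$ and the Lyapunov-inverse inequality \eqref{eq:lyapunov_inverse} for the drift, and absorb $\|U_{n+1}\|\lesssim \nu_n+p_n\sigma_{n+1}^2$ (including the bias $p_n\PE[\oo{\xi_{n+1}}\,\vert\,\Fn]\le p_n\sigma_{n+1}^2$) and the $\mathcal{O}(\gamma_{n+1}^2)$ quadratic remainder into $\delta_n$ by taking $c_\delta$ large. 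The paper's proof differs only in bookkeeping details (it re-derives the second-moment bound on $H(Z_n)+\Delta_{n+1}+U_{n+1}$ in place rather than citing Proposition~\ref{prop:Xn2}, and tracks explicit constants $k'm$, $K_3$), so no substantive gap remains.
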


\begin{proof}

We start once more from the decomposition of $(X_n)_{n \ge n_0+1}$ before or after $T_{n_0}$
 and observe that:
\begin{align}\label{eq:X_apres_T}
\PE[\1_{n > T_{n_0}} X_{n+1} \, \vert \Fn] = \alpha_{n+1} \1_{n > T_{n_0}} \ge 0.
\end{align}
The definition of the stopping time $T_{n_0}$ ensures that when $n < T_{n_0}$, $Z_n$ belongs to the neighborhood $ \mathcal{N}$ so we can apply Equation $\eqref{Ineg: alpha}$ and obtain the following lower bound:
\begin{align*}
 \PE[ \mathbf{1}_{n < T_{n_0}} X_{n+1}| \ \F_n]&=\mathbf{1}_{n < T_{n_0}}\PE[\eta(Z_{n+1})-\eta(Z_n)|\ \Fn]\\
  &\ge  \1_{n<T_{n_0}} \bigg( \PE\left[ \langle \nabla \eta(Z_n),\gamma_{n+1}(H(Z_n)+\begin{pmatrix}
 \Delta M_{n+1}\\\Delta N_{n+1}
  \end{pmatrix}
 +\begin{pmatrix}
0\\U_{n+1}
 \end{pmatrix}\rangle\ |\Fn \right]\\
  &-\k  \gamma_{n+1}^2\PE\left[\left\| H(Z_n)+\begin{pmatrix}
  \Delta M_{n+1}\\\Delta N_{n+1}
  \end{pmatrix}
  +\begin{pmatrix}
  0\\U_{n+1}
  \end{pmatrix}\right\|^2\ |\, \Fn\right] \bigg).
\end{align*}
We treat the two terms separately:
 \begin{align*}
 \1_{n<T_{n_0}} &\PE\left[ \langle \nabla \eta(Z_n),\gamma_{n+1}(H(Z_n)+\begin{pmatrix}
 \Delta M_{n+1}\\\Delta N_{n+1}
 \end{pmatrix}
 +\begin{pmatrix}
 0\\U_{n+1}
 \end{pmatrix}\rangle\ |\Fn \right]\\
 &=\1_{n<T_{n_0}}\gamma_{n+1} \bigg(\left\langle \nabla \eta(Z_n),H(Z_n)\right\rangle
 +\PE\left[ \langle \nabla \eta(Z_n),\begin{pmatrix}
 \Delta M_{n+1}\\\Delta N_{n+1}
 \end{pmatrix}\rangle| \F_n\right]
 + \langle \nabla \eta(Z_n),
 \begin{pmatrix}
 0\\U_{n+1}
 \end{pmatrix}\rangle \bigg)\\
  &\ge \1_{n<T_{n_0}}\gamma_{n+1} \left(k\eta(Z_n)
 + \langle \nabla \eta(Z_n),
 \begin{pmatrix}
 0\\U_{n+1}
 \end{pmatrix}\rangle \right),
 \end{align*}
 where the last inequality comes from 
  \ref{item: lyapunov_inverse} of Proposition \ref{prop:instable} (Equation \eqref{eq:lyapunov_inverse}). Using that $\|\nabla \eta\|$ is upper bounded on  $\mathcal{N}$ by $m$, the definition of  $T_{n_0}$ leads to:
\begin{equation}\label{eq:norme_nabla_eta}
\1_{n < T_{n_0}} \|\nabla \eta(Z_n)\| \le m.
\end{equation}
This inequality associated with the Cauchy-Schwarz inequality implies that:
\begin{align*}
\1_{n<T_{n_0}}\left|\langle \nabla \eta(Z_n),
\begin{pmatrix}
0\\U_{n+1}
\end{pmatrix}\rangle\right|&\le \1_{n<T_{n_0}} \| \nabla \eta(Z_n)\| \left| \left|\begin{pmatrix}
0\\U_{n+1}
\end{pmatrix}\right|\right|\\
&\le m \1_{n<T_{n_0}} \| U_{n+1}
\|\\
&\le m\1_{n<T_{n_0}} \| \nu_n \|\oo{\nabla f(\theta_n)}\|+\nu_n\|w_n\|+p_n\PE[\oo{\xi_{n+1}}| \F_n]\|\\
& \le m \1_{n<T_{n_0}} \left(\nu_n \sup_{z \in \mathcal{N}}\|\oo{\nabla f(\theta_n)}\|+\nu_n \sup_{z \in \mathcal{N}}\|w_n\|+p_n\PE[\|\oo{\xi_{n+1}}\|\  | \F_n]\right).
\end{align*}
Observing that $\|\oo{\xi_{n+1}}\|=\sqrt{\sum_{i=1}^d \xi_{n+1,i}^4}\le\sum_{i=1}^d \xi_{n+1,i}^2=\|\xi_{n+1}\|^2 $, we deduce that:
$$\PE[\|\oo{\xi_{n+1}}\|\  | \F_n]\le \sigma_{n+1}^2.$$ We then define  $k'=\sup_{(\theta,w) \in \mathcal{N}}\|\oo{\nabla f(\theta)}\| +\|w\|< + \infty$ (because  $\mathcal{N}$ is compact and $\nabla f$ is continuous). We deduce that:
\begin{equation*}
\1_{n<T_{n_0}}\left|\langle \nabla \eta(Z_n),
\begin{pmatrix}
0\\U_{n+1}
\end{pmatrix}\rangle\right| \le k' m \1_{n<T_{n_0}}\left(2 \nu_n +p_n\sigma_{n+1}^2\right).
\end{equation*}
Concerning the last term, it is sufficient to show that the expected value is bounded. We start by using the fact that $(a+b+c)^2 \leq 4 (a^2+b^2+c^2)$ to split the squared norm and to obtain that:
\begin{align*}
\PE[\| H(Z_n)+&\begin{pmatrix}
\Delta M_{n+1}\\\Delta N_{n+1}
\end{pmatrix}
+\begin{pmatrix}
0\\U_{n+1}
\end{pmatrix}\|^2\ | \Fn ]\\
&\le  4\left(\left\| H(Z_n)\right\|^2+ \PE\left[\left\|\begin{pmatrix}
\Delta M_{n+1}\\\Delta N_{n+1}
\end{pmatrix}\right\|^2 | \Fn\right]
+\left\|U_{n+1}\right\|^{2}\ \right).
\end{align*}
Since $H$ is continuous, $(\|H(Z_n)\|^2)_{n_0\le n<T_{n_0} }$ is a bounded sequence. Moreover, we have seen that when $n<T_{n_0}$:
 $$\|U_{n+1}\| \le k' (2\nu_n +\sigma_{n+1}^2p_n),$$
thus we only left to study $ \PE\left[\left\|\begin{pmatrix}
\Delta M_{n+1}\\\Delta N_{n+1}
\end{pmatrix}\right\|^2 | \, \Fn\right]$.
\begin{align*}
 \PE\left[\left\|\begin{pmatrix}
\Delta M_{n+1}\\\Delta N_{n+1}
\end{pmatrix}\right\|^2 | \Fn\right]& = \PE\left[\|
\Delta M_{n+1}\|^2+\|\Delta N_{n+1}\|^2 | \Fn\right]\\
&\le \frac{\sigma_{n+1}^2}{\varepsilon}+p_n^2\PE [\|\oo{\xi_{n+1}}+2\nabla f(\theta_n)\cdot\xi_{n+1}-\mathbb{E}[\oo{\xi_{n+1}}|\mathcal{F}_n]\|^2| \ \Fn]\\
&\le \frac{\sigma_{n+1}^2}{\varepsilon}+4p_n^2\left(  \PE [\|\oo{\xi_{n+1}}\|^2+2\|\nabla f(\theta_n)\cdot\xi_{n+1}\|^2| \ \Fn]+\mathbb{E}[\|\oo{\xi_{n+1}}\|^2|\mathcal{F}_n] \right)\\
&\le \frac{\sigma_{n+1}^2}{\varepsilon}+4p_n^2\left(  2\PE [\|\xi_{n+1}\|^4| \ \Fn]+2\PE [\|\nabla f(\theta_n)\cdot\xi_{n+1}\|^2| \ \Fn] \right).
\end{align*}
When $n< T_{n_0}$, we observe that:
  $$\PE [\|\nabla f(\theta_n)\cdot \xi_{n+1}\|^2| \ \Fn]\le \sup_{z \in \mathcal{N}} \|\nabla f (\theta)^2\|_{\infty}\PE [\|\xi_{n+1}\|^2| \ \Fn],$$ and $$\PE [\|\xi_{n+1}\|^4| \ \Fn]=\sigma_{n+1}^4\PE [\|\zeta_{n+1}\|^4| \ \Fn].$$ Again, using the compactness of  $\mathcal{N}$, the continuity of  $\nabla f$ and the assumption $\mathbf{H}_{\sigma}^\infty$ on the noise $(\zeta_n)_{n\ge1}$, we deduct that $K_2>0$ exists such that:
$$\PE\left[\|
\Delta M_{n+1}\|^2+\|\Delta N_{n+1}\|^2 | \Fn\right] \le K_2(\sigma_{n+1}^2+\sigma_{n+1}^4).$$
We now gather all the terms and use the fact that the sequences $(\sigma_{n})_{n \ge 0}$, $(p_n)_{n\ge0}$ and $(\nu_n)_{n\ge 0}$  are all bounded, to conclude that a constant  $K_3$ exists such that:
\begin{align}\label{eq:X_avant_T}
\PE[ \mathbf{1}_{n < T_{n_0}} X_{n+1}| \ \F_n]&\ge\mathbf{1}_{n < T_{n_0}}
\gamma_{n+1}\Bigg( k\eta(Z_n)-k'm (2\nu_n+p_n\sigma_{n+1}^2)-K_3 \gamma_{n+1} \Bigg).
\end{align}
Finally, as long as $n<T_{n_0}$, $S_n=\eta(Z_n)$ and thus setting $c_{\delta}> \max(2k'm,K_3)$ and $(\delta_n)_{n \ge 1}$ as defined in the statement ends the proof.
\end{proof}

We now study the evolution of $S_n^2$.
\begin{proposition}\label{prop:Diff_Sn2}
Assume that  
$\delta_n=  c (\nu_n+p_n\sigma_{n+1}^2+\gamma_{n+1})$ is such that $\gamma_{n+1}\delta_n^2=o(\gamma_{n+1}^2\sigma_{n+1}^2\wedge \alpha_n^2)$ 
 then:
\begin{equation*}\label{eq:diff_sn2}
\mathbb{E}[S_{n+1}^2-S_n^2 \, \vert \, \Fn] \gtrsim \{\gamma_{n+1}^2 \sigma_{n+1}^2\} \wedge \alpha_{n+1}^2.
\end{equation*}
\end{proposition}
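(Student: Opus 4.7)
The plan is to decompose
$$
\PE[S_{n+1}^2 - S_n^2 \, \vert \, \Fn] = 2 S_n \PE[X_{n+1}\,\vert\, \Fn] + \PE[X_{n+1}^2 \,\vert\, \Fn],
$$
and treat separately the three regimes $\{n \ge T_{n_0}\}$, $\{n < T_{n_0},\, S_n \ge \delta_n\}$ and $\{n < T_{n_0},\, S_n < \delta_n\}$. In the first two, the cross term is harmless: on $\{n \ge T_{n_0}\}$ one has $X_{n+1} = \alpha_{n+1} \ge 0$ and an easy induction (initialized by $\eta \ge 0$ on $\mathcal{N}$) gives $S_n \ge 0$, whereas on $\{n < T_{n_0},\, S_n \ge \delta_n\}$ Proposition \ref{prop:repulsion} directly yields $\PE[X_{n+1}\,\vert\,\Fn] \ge 0$ together with $S_n \ge \delta_n \ge 0$. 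In both cases $2 S_n \PE[X_{n+1}\,\vert\,\Fn] \ge 0$.

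The remaining regime $\{n < T_{n_0},\, 0 \le S_n < \delta_n\}$ is handled by revisiting equation \eqref{eq:X_avant_T} in the proof of Proposition \ref{prop:repulsion}, which in fact provides the sharper lower bound $\PE[X_{n+1}\,\vert\,\Fn] \ge -C \gamma_{n+1} \delta_n$ (the negative contribution is proportional to $\delta_n$, not to a generic constant). Combined with $|S_n| \le \delta_n$, this yields $2 S_n \PE[X_{n+1}\,\vert\,\Fn] \ge -2 C \gamma_{n+1} \delta_n^2$, a quantity which by the standing hypothesis $\gamma_{n+1} \delta_n^2 = o(\gamma_{n+1}^2 \sigma_{n+1}^2 \wedge \alpha_n^2)$ is negligible compared to the second-moment bound established next.

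The core step is thus the variance lower bound $\PE[X_{n+1}^2\,\vert\,\Fn] \gtrsim \gamma_{n+1}^2 \sigma_{n+1}^2$ on $\{n < T_{n_0}\}$. Writing $Z_{n+1} - Z_n = h_n + \gamma_{n+1}\Delta_{n+1}$ with $h_n = \gamma_{n+1}(H(Z_n) + U_{n+1})$ being $\Fn$-measurable, and using Proposition \ref{prop:instable} $iv)$-$(a)$ (together with its symmetric upper counterpart, valid here since $\eta$ is $C^{1,1}$ on the compact $\mathcal{N}$) around $Z_n + h_n$, one gets
$$
\eta(Z_{n+1}) = \eta(Z_n+h_n) + \gamma_{n+1}\langle \nabla\eta(Z_n+h_n),\Delta_{n+1}\rangle + R_{n+1}, \quad |R_{n+1}| \le \k \gamma_{n+1}^2 \|\Delta_{n+1}\|^2.
$$
The first summand is $\Fn$-measurable and the second is a centered martingale increment; the elementary bound $\mathrm{Var}(A+B\,\vert\,\Fn) \ge \tfrac12 \mathrm{Var}(A\,\vert\,\Fn) - \mathrm{Var}(B\,\vert\,\Fn)$ then yields
$$
\PE[X_{n+1}^2\,\vert\,\Fn] \ge \mathrm{Var}(\eta(Z_{n+1})\,\vert\,\Fn) \ge \tfrac12 \gamma_{n+1}^2 \mathrm{Var}\bigl(\langle \nabla\eta(Z_n+h_n),\Delta_{n+1}\rangle \,\vert\,\Fn\bigr) - \mathrm{Var}(R_{n+1}\,\vert\,\Fn).
$$
Proposition \ref{prop:instable} $iv)$-$(b)$ forces the projection of $\nabla_\theta \eta(Z_n + h_n)$ on the expanding eigenspace $E_+$ to have norm $\ge c_1 > 0$; boundedness of $w_n$ on $\mathcal{N}$ controls the coordinate-wise division by $\sqrt{w_n + \varepsilon}$; and the elliptic noise assumption $\mathbf{H}_\sigma^\infty$-2 then forces the dominant term to be $\gtrsim \gamma_{n+1}^2 \sigma_{n+1}^2$. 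The remainder satisfies $\mathrm{Var}(R_{n+1}\,\vert\,\Fn) \lesssim \gamma_{n+1}^4 \PE[\|\Delta_{n+1}\|^4 \,\vert\, \Fn] \lesssim \gamma_{n+1}^4 \sigma_{n+1}^4$ by $\mathbf{H}_\sigma^\infty$-1, hence is $o(\gamma_{n+1}^2 \sigma_{n+1}^2)$ as $\gamma_n \to 0$. Combined with $\PE[X_{n+1}^2\,\vert\,\Fn] = \alpha_{n+1}^2$ on $\{n \ge T_{n_0}\}$, this gives the claimed lower bound $\gamma_{n+1}^2 \sigma_{n+1}^2 \wedge \alpha_{n+1}^2$ in every regime.

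The hardest part of the proof is this variance lower bound: one must simultaneously exploit the geometric nondegeneracy of $\nabla \eta$ along the unstable direction and the ellipticity of the noise to extract a variance of order $\gamma_{n+1}^2 \sigma_{n+1}^2$ that survives even when the mini-batch sequence drives $\sigma_{n+1}$ to zero, while carefully absorbing the quadratic Taylor remainder via the fourth-moment control of $\mathbf{H}_\sigma^\infty$-1.
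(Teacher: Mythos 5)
Your overall skeleton is the same as the paper's: the decomposition $\PE[S_{n+1}^2-S_n^2\,\vert\,\Fn]=\PE[X_{n+1}^2\,\vert\,\Fn]+2S_n\PE[X_{n+1}\,\vert\,\Fn]$, the sign of the cross term on $\{n\ge T_{n_0}\}$ and on $\{S_n\ge\delta_n\}$ via Proposition \ref{prop:repulsion}, the bound $2S_n\1_{S_n<\delta_n}\PE[X_{n+1}\,\vert\,\Fn]\gtrsim-\gamma_{n+1}\delta_n^2$ absorbed by the hypothesis, and the key step $\PE[X_{n+1}^2\,\vert\,\Fn]\gtrsim\gamma_{n+1}^2\sigma_{n+1}^2$ on $\{n<T_{n_0}\}$ via the geometry of $\eta$ and the ellipticity of the noise. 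The problem is in how you obtain that last bound. Your argument rests on the two-sided expansion $\eta(Z_{n+1})=\eta(Z_n+h_n)+\gamma_{n+1}\langle\nabla\eta(Z_n+h_n),\Delta_{n+1}\rangle+R_{n+1}$ with $|R_{n+1}|\le\k\gamma_{n+1}^2\|\Delta_{n+1}\|^2$, which you justify by asserting that $\eta$ is $C^{1,1}$ on $\mathcal{N}$. This is not granted: Proposition \ref{prop:instable} $iv)$-$(a)$ (Equation \eqref{Ineg: alpha}) only provides the \emph{one-sided} quadratic lower bound, and the function $\eta$ produced by the stable/unstable manifold construction of Bena\"im--Hirsch is in general only Lipschitz with exactly this one-sided control (it behaves like the norm of the unstable component, hence is not semiconcave near the stable manifold, i.e.\ precisely in the region where the iterates may sit). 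Without the upper counterpart, the decomposition of $\eta(Z_{n+1})$ into an $\Fn$-measurable part, a centered term and a remainder of size $\gamma_{n+1}^2\|\Delta_{n+1}\|^2$ fails, and your variance lower bound does not follow as written. The paper avoids this entirely by staying with the one-sided bound: it lower bounds $\1_{n<T_{n_0}}X_{n+1}$ by $\gamma_{n+1}\langle\nabla\eta(Z_n),\Delta_{n+1}\rangle$ minus small deterministic terms, takes positive parts, applies $iv)$-$(b)$ to get $\PE[\pos{X_{n+1}}\,\vert\,\Fn]\gtrsim\gamma_{n+1}(\sigma_{n+1}/2-C\delta_n)$, and then uses $\PE[X_{n+1}^2\,\vert\,\Fn]\ge\PE[\pos{X_{n+1}}\,\vert\,\Fn]^2$ together with $\delta_n=o(\sigma_n)$; only first and second conditional moments of $\Delta_{n+1}$ are needed there.

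Two further points in your route would also need repair even if $\eta$ were smooth. First, your remainder estimate $\mathrm{Var}(R_{n+1}\,\vert\,\Fn)\lesssim\gamma_{n+1}^4\PE[\|\Delta_{n+1}\|^4\,\vert\,\Fn]\lesssim\gamma_{n+1}^4\sigma_{n+1}^4$ requires a fourth moment of $\Delta N_{n+1}$, i.e.\ an eighth moment of $\zeta_{n+1}$, which is not contained in $\mathbf{H}_{\sigma}^\infty$ (only second and fourth moments are assumed); the paper's positive-part argument never needs more than what Proposition \ref{prop:Xn2} already uses. Second, you evaluate $\nabla\eta$ at $Z_n+h_n$, a point which may lie slightly outside the compact $\mathcal{N}$ where the properties of $\eta$ are stated; this is a minor wrinkle (the shift is $O(\gamma_{n+1})$), but it is one more place where your variant departs from what Proposition \ref{prop:instable} actually provides, whereas the paper only ever evaluates $\nabla\eta$ at $Z_n\in\mathcal{N}$. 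In short: the structure is right, but the central variance lower bound should be re-derived via positive parts and Jensen from the one-sided inequality, as in the paper, rather than via a two-sided Taylor expansion.
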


\begin{proof}
Our starting point is:
\begin{align*}
S_{n+1}^2-S_n^2& = (S_n+X_{n+1})^2-S_n^2\\
& = X_{n+1}^2+2 S_n X_{n+1}\\
&= X_{n+1}^2 + 2 S_n (\1_{S_n \ge \delta_n} X_{n+1}+
\1_{S_n < \delta_n} X_{n+1}).
\end{align*}
We then use Proposition \ref{prop:repulsion} and get:
\begin{align}\label{ineg: Sn 1}
\PE[S_{n+1}^2-S_n^2 \, \vert \, \Fn] & \ge \PE[X_{n+1}^2  \, \vert \, \Fn] + 2 S_n \1_{S_n < \delta_n} \PE [X_{n+1} \, \vert \Fn].
\end{align}
To derive a lower bound of $2 S_n \1_{S_n < \delta_n} \PE [X_{n+1} \, \vert \Fn]$, we observe that $(S_n)_{n \ge 0}$ and $\eta$ are positive, so that if we use Equations \eqref{eq:X_avant_T} and \eqref{eq:X_apres_T}, we obtain:

\begin{align*}
2 S_n \1_{S_n < \delta_n} \PE [X_{n+1} \, \vert \Fn]& \ge - 2 S_n \1_{S_n < \delta_n} \PE [ \1_{n<T_{n_0}}|X_{n+1}| \, \vert \Fn] \\
& \ge - 2 \delta_n  \gamma_{n+1}[ k' m (2 \nu_n+p_n\sigma_{n+1}^2)+K_3 \gamma_{n+1}]\\
& \ge  - 2 \delta_n^2 \gamma_{n+1}.
\end{align*}

We are led to analyze $\PE[X_{n+1}^2]$.
According to the definition of $(X_n)_{n \ge n_0}$, to the definition of the hitting time  $T_{n_0}$ and to the construction of $\eta$, we observe that from Equation \eqref{Ineg: alpha}:
\begin{align*}
\1_{n < T_{n_0}} X_{n+1} & = \1_{n<T_{n_0}}[\eta( Z_{n+1})-\eta(Z_n)]\\
& = \1_{n<T_{n_0}}[\eta[ Z_{n}+(Z_{n+1}-Z_n)]-\eta(Z_n)]\\
& \ge \1_{n<T_{n_0}} \left[\langle \nabla \eta(Z_n),Z_{n+1}-Z_n\rangle - k \|Z_{n+1}-Z_n\|^2 \right]\\
& \ge \1_{n<T_{n_0}} \gamma_{n+1} \langle \nabla \eta(Z_n), H(Z_n) \rangle 
+ \1_{n<T_{n_0}} \gamma_{n+1} \langle \nabla \eta(Z_n),\Delta_{n+1}+U_n\rangle \\
&- k \1_{n<T_{n_0}} \gamma_{n+1}^2 \|H(Z_n)+\Delta_{n+1}+U_n\|^2 \\
& \ge  \kappa \gamma_{n+1} \eta(Z_n) \1_{n<T_{n_0}} + \1_{n<T_{n_0}} \gamma_{n+1} \langle \nabla \eta(Z_n),\Delta_{n+1}\rangle \\
& - \gamma_{n+1} \1_{n<T_{n_0}} \|U_n\| \|\nabla \eta(Z_n)\| 
- k \1_{n<T_{n_0}} \gamma_{n+1}^2 \|H(Z_n)+\Delta_{n+1}+U_n\|^2,
\end{align*}
where in the last line we used the reverting effect translated in Equation \eqref{eq:lyapunov_inverse} and the Cauchy-Schwarz inequality.
Since $\eta$ is positive, we then obtain that:
\begin{align}
\1_{n < T_{n_0}} X_{n+1} & \ge 
\1_{n<T_{n_0}}  \gamma_{n+1} \langle \nabla \eta(Z_n),\Delta_{n+1}\rangle \nonumber\\
& -\1_{n<T_{n_0}} \left[
 \gamma_{n+1} \|U_n\| \|\nabla \eta(Z_n)\| 
+ k   \gamma_{n+1}^2 \|H(Z_n)+\Delta_{n+1}+U_n\|^2\right].\label{eq:minoration_Xn_inter}
\end{align}

We denote the positive part of any real value $a$ by $\pos{a}$ and we use that $a \ge b \Longrightarrow \pos{a} \ge \pos{b}$ and the inequality
$$
\pos{a-|b|} \ge \pos{a} - |b|.
$$
Considering Equation \eqref{eq:minoration_Xn_inter}, we then observe that:
\begin{align*}
\pos{\1_{n < T_{n_0}} X_{n+1}} \ge& \1_{n < T_{n_0}} \gamma_{n+1} \left[ \pos{ \langle \nabla \eta(Z_n),\Delta_{n+1}\rangle} \right. \\
& \left.- 
\|U_n\| \|\nabla \eta(Z_n)\| 
- k \gamma_{n+1} \|H(Z_n)+\Delta_{n+1}+U_n\|^2\right]
\end{align*}
Once more the regularity of $\nabla \eta$, the compactness of $\mathcal{N}$ and the definition of $U_n$, guarantee that a constant $\kappa>0$ exists such that:
$$
\1_{n < T_{n_0}}\|U_n\| \|\nabla \eta(Z_n)\|  \leq \kappa (\nu_n +p_n\sigma_{n+1}^2).
$$
Computing the conditional expectation and using the arguments of \eqref{eq:X_avant_T}, we have
$$
\1_{n < T_{n_0}} \PE[ \|H(Z_n)+\Delta_{n+1}+U_n\|^2 \, \vert \Fn] < K_3,
$$
so that:
\begin{align*}
\PE[ \pos{\1_{n < T_{n_0}} X_{n+1}}\, \vert \Fn] \ge & 
\1_{n < T_{n_0}} \gamma_{n+1} \PE[ \pos{ \langle \nabla \eta(Z_n),\Delta_{n+1}\rangle} \, \vert \Fn] \\
& - \1_{n < T_{n_0}} \left[\kappa (\nu_n+p_n\sigma_{n+1}^2)\gamma_{n+1} + k K_3 \gamma_{n+1}^2\right].
\end{align*}
Using that when $n < T_{n_0}$, $Z_n \in \mathcal{N}$, we can apply $iv)-(b)$ of Proposition \ref{prop:instable} so that:
$$
\PE[ \pos{\1_{n < T_{n_0}} X_{n+1}}\, \vert \Fn] \ge 
\1_{n < T_{n_0}}\left( \gamma_{n+1} [\PE \|\pi_{+}(\Delta M_{n+1})\| \, \vert \Fn]
- \left[\kappa (\nu_n+p_n\sigma_{n+1}^2)\gamma_{n+1} + k K_3 \gamma_{n+1}^2\right]\right),
$$
where $\pi_+$ is the orthogonal projection on $E_+$, the eigenspace associated to the negative eigenvalues of $D^2 f(t)$.
For $n$ large enough, the almost sure convergence of $(w_n)_{n\ge 0}$ to $0$ and our elliptic assumption $\mathbf{H}_{\sigma}^\infty-ii)$ on the sequence $(\xi_{n+1})_{n \ge 0}$  yield:
$$
\PE[\|\pi_{+}(\Delta M_{n+1})\| \, \vert \Fn] \ge \frac{\sigma_{n+1}}{2},
$$
which entails:
\begin{equation}\label{eq:minoration_Xn}
\PE[ \pos{\1_{n < T_{n_0}} X_{n+1}}\, \vert \Fn] \ge 
\1_{n < T_{n_0}} \gamma_{n+1} \left[ \frac{\sigma_{n+1}}{2}- C (\nu_n+p_n\sigma_{n+1}^2+\gamma_{n+1}^2)\right].
\end{equation}
Decomposing now $X_{n+1}=\pos{X_{n+1}}-\pos{-X_{n+1}}$, we then deduce that:
\begin{align*}
\PE[ \1_{n < T_{n_0}} X_{n+1}^2\, \vert \Fn]& = \PE[ \1_{n < T_{n_0}} \pos{X_{n+1}}^2\, \vert \Fn] + \PE[ \1_{n < T_{n_0}} \pos{-X_{n+1}}^2\, \vert \Fn] \\
& \ge \PE[ \1_{n < T_{n_0}} \pos{X_{n+1}}^2\, \vert \Fn] \\
& \ge \PE[ \1_{n < T_{n_0}} \pos{X_{n+1}}\, \vert \Fn]^2 \\
& \ge \1_{n < T_{n_0}} \gamma_{n+1}^2 \left[ \frac{\sigma_{n+1}}{2}- C (\nu_n+p_n\sigma_{n+1}^2+\gamma_{n+1}^2)\right]^2.\\
&\gtrsim  \1_{n < T_{n_0}} \gamma_{n+1}^2 \sigma_{n+1}^2.
\end{align*}
The last line is justified by the assumption $\gamma_{n+1}\delta_n^2 =o(\gamma_{n+1}^2\sigma_{n+1}^2)$ which ensures that $\delta_n=o(\sigma_n)$.
We shall also observe that:
$$
\PE[ \1_{n \ge T_{n_0}} X_{n+1}^2\, \vert \Fn] = \alpha_{n+1}^2.
$$
We then deduce that:
$$
\PE[X_{n+1}^2 \, \vert \Fn] \gtrsim \{\gamma_{n+1}^2 \sigma_{n+1}^2\} \wedge \alpha_{n+1}^2.
$$
Since
$\gamma_{n+1}\delta_n^2 =o(\gamma_{n+1}^2\sigma_{n+1}^2\wedge \alpha_{n}^2)$,
we can now conclude by inserting the previous bounds in the inequality $\eqref{ineg: Sn 1}$.

\end{proof}


\subsection{End of the proof}

We mimic the strategy of Lemma 9.6 of \cite{benaim1999dynamics} and of Theorem 3.2 of \cite{GPS2018} with the help of the two sequences defined in \eqref{def:S} and \eqref{def:X}. 
$$
S_n = \eta(Z_{n_0}) + \sum_{k=n_0+1}^n X_k \quad \text{with} \quad X_{n+1} = (\eta(Z_{n+1})-\eta(Z_n)) \1_{n < T_{n_0}} + \alpha_{n+1} \1_{n \ge T_{n_0}}.
$$
We summarize the preliminary results proven in the previous subsection as they will we used in what follows: 
\begin{itemize}
\item[$(I_1)$]
Proposition \ref{prop:Xn2} states that :
$$\PE[X_{n+1}^2 \, \Fn] \leq c (\gamma_{n+1}^2 \vee \alpha_{n+1}^2).$$
\item[$(I_2)$] 
Proposition \ref{prop:repulsion} yields that if $\delta_n = c_{\delta}(\nu_n+p_n\sigma_{n+1}^2+\gamma_{n+1}^2)$, then $$\1_{S_n \ge \delta_n} \PE[X_{n+1}\,\vert\Fn] \ge 0.$$
\item[$(I_3)$]
Proposition \ref{prop:Diff_Sn2} yields:  if $\gamma_{n+1}\delta_n^2=o(\gamma_{n+1}^2\sigma_{n+1}^2\wedge \alpha_n^2)$ (which also means that $\delta_n = o(\sigma_n)$), then $$\PE[S_{n+1}^2-S_n^2 \, \vert \Fn] \gtrsim \{\gamma_{n+1}^2 \sigma_{n+1}^2\} \wedge \alpha_{n+1}^2.$$
\end{itemize}

For any integer $q$, we consider  an integer $n \ge q$ and introduce the two sequences $(u_n)_{n \ge q}$ and $(\bar{U}_n)_{n \ge q}$ defined by:
$$
u_n = \sum_{i \ge n} \left(\{\gamma_{i+1}^2 \sigma_{i+1}^2\} \wedge \alpha_{i+1}^2\right) \qquad \text{and} \qquad \bar{U}_n = \sum_{i=0}^n \left(\{\gamma_{i+1}^2 \sigma_{i+1}^2\} \wedge \alpha_{i+1}^2\right).
$$
With the framework of Theorem \ref{thm:conv min} in mind 
we suppose that
$$\gamma_n= \gamma_1 n^{-\beta}, \quad \sigma_n = \sigma_1 n^{-s} \quad \mbox{and } \quad p_n=p_1n^{-r} $$
with  $\beta \in (1/2,1)$, $s\ge0$ and $r>0$.
Let us point out other restrictions on the choices of these parameters in light of previous assumptions: 

A first restriction follows directly from $(\Hg-3)$, namely that  $$2s+r+\beta>1$$ (to ensure that $\sum \gamma_{n+1}p_n \sigma_{n+1}^2< +\infty$).
Furthermore, Theorem \ref{thm: cv ps} demands that $\gamma_{n}/p_n\to 0$, implying that $$r<\beta.$$ 
To ease notations it is convenient to assume that the sequence $(q_n)$ converges to its limit $q_{\infty}>0$ at least as fast as $(p_n)$ goes to $0$ : $\nu_n=\mathcal{O}(p_n).$ With this in mind, the condition imposed by Proposition \ref{prop:Diff_Sn2}, $\delta_n=o(\sqrt{\gamma_{n+1}}\sigma_{n+1})$ translates to $r+s>\beta/2$,  $r>\beta/2+s$ and  $2\beta>\beta/2+s$, ie $\beta>2/3s$. As soon $\beta >s$ all these conditions come down to: 
$$ r>\beta/2+s$$
Other restrictions will be added in the following propositions and are  summarized in Theorem \ref{thm:conv min}.
%

\medskip

%


For any positive real value $b>0$ and any integer $q>0$, we consider the sequence of stopping times $(T_b^q)_{q \ge 0}$ defined by:
$$
T_b^q := \inf \left\{ i \ge q \, : S_i \ge \sqrt{ b u_i} \right\}.
$$
The stopping times $T_b^q$ stands for the first time the sequence $(S_n)_{n \ge 1}$ becomes larger than the threshold $(\sqrt{u_n})_{n \ge 1}$, which converges towards zero with a controlled rate.
We prove the following result.
\begin{proposition}\label{prop:T_fini}
Assume that $\sigma_i^2 \sim \sigma_1^2 i^{-2s}$ with $0\le s<1/2$ and $\gamma_i= \gamma_1 i^{-\beta}$, $r>\beta/2+s$ and choose $(\alpha_{n})_{n \ge 1}$ as $\forall n \ge 0 \,:\alpha_{n+1} = \gamma_{n+1} \sigma_{n+1}$.
Then a  small enough $b>0$ exists and a  large enough $q$ such that:
$$
\mathbb{P}(T_b^q < +\infty \, \vert \mathcal{F}_q) \ge \frac{1}{2}.
$$
\end{proposition}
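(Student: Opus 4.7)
The plan is to use a second–moment (Paley–Zygmund–style) argument on the stopped process $\tilde{S}_n := S_{n \wedge T_b^q}$, combining the variance lower bound from Proposition~\ref{prop:Diff_Sn2} with the fact that $\tilde{S}_n$ is forced to stay below $\sqrt{b u_n}$ until the exit time.

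First I would check the prerequisites. With the choice $\alpha_{n+1} = \gamma_{n+1}\sigma_{n+1}$ and $\sigma_n \le 1$ for large $n$, the minimum $\gamma_{n+1}^2 \sigma_{n+1}^2 \wedge \alpha_{n+1}^2$ equals $\gamma_{n+1}^2\sigma_{n+1}^2$, so $u_n \sim c\,n^{1-2\beta-2s}$ (finite thanks to $2\beta+2s>1$). The hypothesis $r>\beta/2+s$ is exactly what ensures $\gamma_{n+1}\delta_n^2 = o(\gamma_{n+1}^2 \sigma_{n+1}^2 \wedge \alpha_{n+1}^2)$, so the conclusions $(I_1)$--$(I_3)$ apply. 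Two structural observations follow. (a) By telescoping, $S_n = \eta(Z_n) \ge 0$ on $\{n \le T_{n_0}\}$ (since $\eta \ge 0$ on $\mathcal{N}$), and after $T_{n_0}$ the increments $\alpha_k$ are positive, so $S_n \ge 0$ throughout. (b) One has $\{T_b^q = \infty\} \subset \{T_{n_0} = \infty\}$: under the assumption $s \le (1-\beta)/2$ one gets $\beta+s<1$, so $\sum_k \alpha_k = +\infty$; on $\{T_{n_0}<\infty\}$ the sequence $S_n$ would increase without bound and eventually cross $\sqrt{b u_n}\to 0$, contradicting $T_b^q=\infty$. Consequently, on $\{T_b^q=\infty\}$ the inequality $0 \le S_n < \sqrt{b u_n}$ forces $S_n \to 0$.

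Next comes the core estimate. From $(I_3)$ applied to the stopped process,
\begin{equation*}
\PE[\tilde{S}_{n+1}^2 - \tilde{S}_n^2 \mid \F_n] = \mathbf{1}_{T_b^q>n}\,\PE[S_{n+1}^2 - S_n^2 \mid \F_n]\;\gtrsim\; \mathbf{1}_{T_b^q>n}\,\gamma_{n+1}^2\sigma_{n+1}^2.
\end{equation*}
Summing from $q$ to $n-1$, letting $n\to\infty$ via monotone convergence (using $\tilde{S}_\infty = 0$ on $\{T_b^q=\infty\}$ and $\tilde{S}_\infty = S_{T_b^q}$ otherwise), and using that $\sum_{i=q}^{\infty}\gamma_{i+1}^2\sigma_{i+1}^2 = u_q$, we obtain
\begin{equation*}
\PE[\tilde{S}_\infty^2\mid\F_q] \;\ge\; S_q^2 + c_0\, u_q\,\PP(T_b^q = \infty \mid \F_q).
\end{equation*}
For the complementary upper bound, on $\{T_b^q<\infty\}$ one decomposes $S_{T_b^q} = S_{T_b^q-1} + X_{T_b^q}$ with $0 \le S_{T_b^q-1} < \sqrt{b\,u_{T_b^q-1}} \le \sqrt{b u_q}$, hence
\begin{equation*}
\PE[\tilde{S}_\infty^2 \mid \F_q] \;\le\; 2b\, u_q + 2\,\PE[X_{T_b^q}^2 \mathbf{1}_{T_b^q<\infty}\mid \F_q].
\end{equation*}
The jump contribution is bounded by an optional stopping argument: since $\{T_b^q\ge k\}\in \F_{k-1}$, Proposition~\ref{prop:Xn2} yields $\PE[X_k^2\mathbf{1}_{T_b^q\ge k}\mid\F_q] \le c\gamma_k^2\,\PP(T_b^q\ge k\mid\F_q)$, and summing gives a remainder $R_q$ for which the condition $s<1/2$ (together with $\beta\in(1/2,1)$) makes $R_q/u_q\to 0$ as $q\to\infty$.

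Combining the two inequalities and dividing by $u_q$ yields $\PP(T_b^q=\infty\mid\F_q) \le 2b/c_0 + R_q/(c_0 u_q)$; choosing $b$ small enough (say $2b/c_0 < 1/4$) and then $q$ large enough (so that $R_q/(c_0 u_q)<1/4$) gives $\PP(T_b^q = \infty \mid \F_q) \le 1/2$, which is the desired bound. The main obstacle I foresee is the precise quantitative control of the jump remainder $\PE[X_{T_b^q}^2 \mathbf{1}_{T_b^q<\infty}\mid \F_q]$ relative to $u_q$: the $L^2$ bound $\PE[X_n^2\mid\F_{n-1}]\le c\gamma_n^2$ from $(I_1)$ is dimensionally weaker than the variance lower bound $\gtrsim \gamma_n^2\sigma_n^2$ from $(I_3)$, so establishing $R_q = o(u_q)$ requires a careful optional-stopping accounting that exploits both the monotonicity of $u_n$ and the exact parameter regime $s<1/2$ imposed in the statement.
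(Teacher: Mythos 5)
Your skeleton is in fact the paper's skeleton: the lower bound of Proposition \ref{prop:Diff_Sn2} makes $S_n^2-a\sum_{i\le n}\left(\gamma_{i+1}^2\sigma_{i+1}^2\wedge\alpha_{i+1}^2\right)$ a submartingale, optional stopping at $T_b^q$ gives exactly your first inequality, and the complementary bound comes from ``below the threshold before the stopping time, plus the square of the last overshoot increment''. Two differences matter. The smaller one: the passage $\PE[\tilde S_n^2\mid\F_q]\to\PE[\tilde S_\infty^2\mid\F_q]$ is not monotone convergence ($\tilde S_n^2$ is not monotone) and Fatou goes in the wrong direction; you would need domination (e.g.\ by $2b\,u_{q-1}+2X_{T_b^q}^2\1_{T_b^q<\infty}$), whereas the paper simply keeps $n$ finite, derives $\PP(T_b^q>n\mid\F_q)\le C\left(b u_q+\alpha_q^2+\gamma_{q+1}^2\right)/\left(a(u_q-u_n)\right)$ and lets $n\to\infty$ only in this scalar inequality. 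That slip is repairable.

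The genuine gap is the overshoot term, i.e.\ precisely the point you flag at the end. With the estimate you propose, $R_q=\PE[X_{T_b^q}^2\1_{T_b^q<\infty}\mid\F_q]\le c\sum_{k>q}\gamma_k^2\,\PP(T_b^q\ge k\mid\F_q)\lesssim q^{1-2\beta}$, while $u_q\sim q^{1-2(\beta+s)}$, so $R_q/u_q\sim q^{2s}$: this diverges for every $s>0$ and is only $O(1)$, with an uncontrolled constant, for $s=0$. Hence the claim that $s<1/2$ forces $R_q/u_q\to0$ is not delivered by the tools you invoke, and the final step ``choose $b$ small, then $q$ large'' does not close. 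The paper does not use the generic bound $(I_1)$ here: it returns to the definition of $X_{n\wedge T_b^q}$, expands $\eta(Z_{n\wedge T_b^q})-\eta(Z_{(n\wedge T_b^q)-1})$ to first order and gets $X_{n\wedge T_b^q}^2\lesssim\gamma_{n\wedge T_b^q}^2\left(1+\sigma_{n\wedge T_b^q}^2\|\zeta_{n\wedge T_b^q}\|^2+\sigma_{n\wedge T_b^q}^4\|\zeta_{n\wedge T_b^q}\|^4\right)$; the deterministic factor is paid only once, $\gamma_{n\wedge T_b^q}^2\le\gamma_{q+1}^2$ by monotonicity, and only the noise-weighted parts are majorized by the full series $\sum_{i\ge q}\gamma_{i+1}^2\sigma_{i+1}^2\|\zeta_{i+1}\|^2$ and $\sum_{i\ge q}\gamma_{i+1}^2\sigma_{i+1}^4\|\zeta_{i+1}\|^4$, whose $\F_q$-conditional expectations are of order $u_q$ under $\mathbf{H}_{\sigma}^\infty$. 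This yields $\PE[X_{n\wedge T_b^q}^2\mid\F_q]\lesssim\alpha_q^2+\gamma_{q+1}^2+u_q$, so that $s<1/2$ is needed only to ensure $\gamma_q^2=o(u_q)$, the residual $O(u_q)$ contribution being folded into the constants alongside the $b\,u_q$ term (a bookkeeping step the paper itself treats rather tersely). Without an estimate of this type, in which every summed term carries a factor $\sigma_{i+1}^2$, your argument cannot reach the stated bound $\PP(T_b^q<\infty\mid\F_q)\ge 1/2$.
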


\begin{proof}
Our starting point is the lower bound given by Proposition \ref{prop:Diff_Sn2} and
 we observe that a small enough $a>0$ exists such that:
$$
\mathcal{M}_n:=S_n^2- a \bar{U}_n
$$
is a submartingale since:
\begin{align*}
\PE[\mathcal{M}_{n+1}\, \vert \Fn] &= \PE[S_{n+1}^2-a \bar{U}_{n+1} \, \vert \Fn]\\
& \ge S_{n}^2 + a  \left(\{\gamma_{n+1}^2 \sigma_{n+1}^2\} \wedge \alpha_{n+1}^2\right) - a (\bar{U}_{n} +\{\gamma_{n+1}^2 \sigma_{n+1}^2\} \wedge \alpha_{n+1}^2) \\
& = \mathcal{M}_n.
\end{align*}
We consider an integer $n \ge q+1$ and apply the Optional Stopping Theorem and verify that:
\begin{align}
\PE[\mathcal{M}_{n \wedge T_b^q}] - \mathcal{M}_{q} \, \vert \mathcal{F}_q] \ge 0 & \Longleftrightarrow \PE[S_{n \wedge T_b^q}^2 - S_{q}^2 \, \vert \mathcal{F}_q] \ge a \PE\left[\sum_{i=q}^{n \wedge T_b^q} \{\gamma_{i+1}^2 \sigma_{i+1}^2\} \wedge \alpha_{i+1}^2 \, \vert \mathcal{F}_q\right] \nonumber\\
&
\Longleftrightarrow \PE[S_{n \wedge T_b^q}^2 - S_{q}^2 \, \vert \mathcal{F}_q] \ge a \mathbb{P}\left( T_b^q > n \, \vert \mathcal{F}_q\right) \sum_{i=q}^{n } \{\gamma_{i+1}^2 \sigma_{i+1}^2\} \wedge \alpha_{i+1}^2. \label{eq:optional_stoping}
\end{align}
In the meantime, we observe that:
\begin{align*}
\{S_{n \wedge T_b^q}\}^2 - \{S_q\}^2& = \{S_{n \wedge T_b^q}\}^2  - \{S_{(n \wedge T_b^q)-1}\}^2 + \{S_{(n \wedge T_b^q)-1}\}^2 - \{S_q\}^2 \\
& = \{S_{(n \wedge T_b^q)-1}+X_{n \wedge T_b^q}\}^2 - \{S_{(n \wedge T_b^q)-1}\}^2 + \{S_{(n \wedge T_b^q)-1}\}^2 - \{S_q\}^2\\
& \le  \{S_{(n \wedge T_b^q)-1}\}^2 +2 S_{(n \wedge T_b^q)-1} X_{n \wedge T_b^q} + \{X_{n \wedge T_b^q}\}^2\\
&\le 2\left(\{S_{(n \wedge T_b^q)-1}\}^2+\{X_{n \wedge T_b^q}\}^2\right)\\
&\le 2bu_{(n \wedge T_b^q)-1}+2 \{X_{n \wedge T_b^q}\}^2,
\end{align*}
where the last inequality is a consequence of the definition of the stopping time $T_b^q$. Since  $(u_n)_{n \ge 0}$ is a decreasing sequence, we then have that:
\begin{equation}\label{ineg:dec S2}
\{S_{n \wedge T_b^q}\}^2 - \{S_q\}^2
\le 2bu_{q-1}+2 \{X_{n \wedge T_b^q}\}^2,
\end{equation}
and we are led to upper bound the term $\{X_{n \wedge T_b^q}\}^2$. 

%

 The definition of $(X_n)_{n \ge 1}$ yields:
\begin{align*}
X_{n \wedge T_b^q}^2=\left(\eta(Z_{n \wedge T_b^q})-\eta(Z_{(n \wedge T_b^q)-1})\right)^2\1_{({n \wedge T_b^q})-1<T_{n_0}}+\alpha_{n \wedge T_b^q}^2\1_{(n \wedge T_b^q)-1\ge T_{n_0}}.
\end{align*}
Using a similar argument as the one used  in the proof of Proposition \ref{prop:Xn2}, a Taylor expansion associated with the smoothness of $\eta$ and $f$ leads to:
%

\begin{align*}
\1_{(n \wedge T_b^q)-1<T_{n_0}}\left(\eta(Z_{n \wedge T_b^q})-\eta(Z_{{n \wedge T_b^q}-1})\right)^2&\lesssim \1_{{n \wedge T_b^q}-1<T_{n_0}} \gamma_{n \wedge T_b^q}^2\left(1+\|\xi_{n \wedge T_b^q}\|^2+\|\xi_{n \wedge T_b^q}^2\|^2\right)\\
&\lesssim \1_{{n \wedge T_b^q}-1<T_{n_0}} \gamma_{n \wedge T_b^q}^2\left(1+\sigma_{n \wedge T_b^q}^2\|\zeta_{n \wedge T_b^q}\|^2+\sigma_{n \wedge T_b^q}^4\|\zeta_{n \wedge T_b^q}\|^4\right) \\
& \lesssim \gamma_{q+1}^2+\sum_{ i \ge q}^n \gamma_{i+1}^2 \sigma_{i+1}^2 \|\zeta_{i+1}\|^2 + \sum_{ i \ge q}^n \gamma_{i+1}^2 \sigma_{i+1}^4 \|\zeta_{i+1}\|^4,
\end{align*} 
where in the last line we used that $(\gamma_{n})_{n \ge q+1}$ is a decreasing sequence and a rough upper bound of $\sigma_{n \wedge T_b^q} \|\zeta_{n \wedge T_b^q}\|$.
We then compute the expectation with respect to $\mathcal{F}_q$ and observe that   $\mathbb{E}[ \| \zeta_{i+1} \|^2\ | \mathcal{F}_q]\le 1$ and $\mathbb{E}[ \| \zeta_{i+1} \|^4\ | \mathcal{F}_q]\le C$ and thus :
$$\mathbb{E}[ X^2_{n\wedge T_b^q}\ | \mathcal{F}_q] \lesssim \alpha_{q}^2 + \gamma_{q+1}^2+u_{q}.$$
This last bound together with inequality \eqref{ineg:dec S2} implies that a constant $C>0$ exists such that:
\begin{align*}
\mathbb{E}[\{S_{n \wedge T_b^q}\}^2 - \{S_q\}^2| \F_q] \le C\left( bu_{q}+ 
\alpha_{q}^2+ \gamma_{q+1}^2\right).
\end{align*}
Finally we obtain from \eqref{eq:optional_stoping} that:
\begin{align*}
a \mathbb{P}\left( T_b^q > n \, \vert \mathcal{F}_q\right) \sum_{i=q}^{n } \{\gamma_{i+1}^2 \sigma_{i+1}^2\} \wedge \alpha_{i+1}^2\le C\left( bu_{q}+ 
\alpha_{q}^2+ \gamma_{q+1}^2\right).
\end{align*}
We therefore deduce an upper bound on the probability that the stopping time is larger than $n$:
\begin{align*}
 \mathbb{P}\left( T_b^q > n \, \vert \mathcal{F}_q\right) &\le \frac{ C\left( bu_{q}+ 
 	\alpha_{q}^2+ \gamma_{q+1}^2\right)}{a\sum_{i=q}^{n } \{\gamma_{i+1}^2 \sigma_{i+1}^2\} \wedge \alpha_{i+1}^2} = \frac{Cb}{a} \frac{u_{q}}{u_q-u_n} + \frac{\alpha_{q}^2+ \gamma_q^2 }{a (u_q-u_n)} 
\end{align*}
We then take the limit $n \longrightarrow  +\infty$ in the previous inequality and obtain that:
\begin{align*}
\mathbb{P}\left( T_b^q =+\infty\, \vert \mathcal{F}_q\right) &\le \frac{ Cb}{a}+  C \frac{\alpha_{q}^2+ \gamma_q^2  }{a u_q},
\end{align*}
because $\lim_{n \longrightarrow + \infty} u_n = 0$.
Choosing  now $\alpha_{n+1}=\gamma_{n+1}\sigma_{n+1}$, we then observe that $u_q= \sum_{i \ge q} \gamma_{i+1}^2 \sigma_{i+1}^2$ and then the second term on the right hand side goes to zero as soon as: 
$$
\gamma_q^2 = o\left( \sum_{i \ge q} \gamma_{i+1}^2 \sigma_{i+1}^2\right).
$$
Since we have chosen  $\gamma_q = \gamma_1 q^{-\beta}$ and $\sigma_q= \sigma_1 q^{-s}$, we verify that a constant $\upsilon$ exists such that:
$$u_q \sim \upsilon q^{1-2(\beta+s)}.$$
Therefore, the condition $\gamma_{q}^2 =o(u_q)$ boils down to  $-2\beta<1-2(\beta+s)  \Leftrightarrow s<1/2$. 

Hence, when $s<1/2$ we can conclude the proof of the proposition by setting $b$ small enough ($b < a/3C$ for example).
\end{proof}

The next result states that $S_n$ may remain larger than $\frac{1}{2}\sqrt{b u_q}$ with a positive probability when $S_q \ge \sqrt{b u_q}$. For this purpose, we introduce 
$$
\mathcal{S} := \inf \left\{ n > q \, : \, S_n < \frac{1}{2} \sqrt{b u_q} \right\}
$$
that stands for the first time $(S_n)_{n \ge q}$ comes back below the threshold $\sqrt{b u_q}$.
\begin{proposition}\label{prop:S_infini}
Assume that $\alpha_{n+1} = \sigma_{n+1} \gamma_{n+1}$ and 
 $\beta+s-r < \frac{1}{2}$, $\beta \le  1-2s$  and that $\nu_n = o(\sqrt{\gamma_{n}} \sigma_n)$. Then there exits $q$ large enough and a constant $c>0$  such that 
$$
\1_{S_q \ge  \sqrt{b u_q}} \mathbb{P}\left(\mathcal{S} = + \infty \, \vert \mathcal{F}_q \right) \ge \1_{S_q \ge  \sqrt{b u_q}} \frac{b}{b+c}
$$
\end{proposition}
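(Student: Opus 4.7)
The plan is to combine the Doob decomposition of the stopped submartingale $(S_{n \wedge \mathcal{S}})_{n \ge q}$ with a sharp $L^2$ martingale deviation inequality calibrated to the natural noise scale $u_q$. First I would verify that $\delta_n \le \tfrac{1}{2}\sqrt{b u_q}$ for every $n \ge q$ and $q$ large enough: combining $\delta_n = c_\delta(\nu_n + p_n \sigma_{n+1}^2 + \gamma_{n+1})$ with the assumptions $\nu_n = o(\sqrt{\gamma_n}\sigma_n)$, $\beta + s - r < 1/2$ and $\beta \le 1 - 2s$ gives $\delta_n = o(\sqrt{u_n})$, so that on $\{n < \mathcal{S}\}$ one has $S_n \ge \tfrac{1}{2}\sqrt{b u_q} \ge \delta_n$. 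Proposition \ref{prop:repulsion} then ensures that $(S_{n \wedge \mathcal{S}})_{n \ge q}$ is a nonnegative submartingale.

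I then write the Doob decomposition $S_{n \wedge \mathcal{S}} = S_q + A_n + M_n$ with $A_n$ predictable nondecreasing ($A_q = 0$) and $(M_n)$ an $\mathcal{F}_n$-martingale ($M_q = 0$). On $\{\mathcal{S} < \infty\} \cap \{S_q \ge \sqrt{b u_q}\}$ the definition of $\mathcal{S}$ forces $S_\mathcal{S} < \tfrac{1}{2}\sqrt{b u_q}$, so $M_\mathcal{S} \le \tfrac{1}{2}\sqrt{b u_q} - S_q - A_\mathcal{S} \le -\tfrac{1}{2}\sqrt{b u_q}$. Consequently,
\[
\{\mathcal{S} < \infty\} \cap \{S_q \ge \sqrt{b u_q}\} \subset \bigl\{\inf_{n \ge q} M_n \le -\tfrac{1}{2}\sqrt{b u_q}\bigr\}.
\]

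The crucial technical step is the tight variance bound $\mathbb{E}[M_\infty^2 \mid \mathcal{F}_q] \le c' u_q$ for some constant $c'$ independent of $q$, as Proposition \ref{prop:Xn2} alone yields only $\sum \gamma_k^2 \gg u_q$. One refines it by a first-order Taylor expansion of $\eta$ along the update $Z_k - Z_{k-1} = \gamma_k(H(Z_{k-1}) + U_k + \Delta_k)$: the genuine stochastic contribution to $X_k$ is $\gamma_k \langle \nabla \eta(Z_{k-1}), \Delta_k \rangle$, whose conditional variance is $O(\gamma_k^2 \sigma_k^2)$ by Assumption $\mathbf{H}_\sigma^\infty$ and the boundedness of $\nabla \eta$ on $\mathcal{N}$, while the Taylor remainder contributes $O(\gamma_k^4)$ and is negligible since $\sum_{k \ge q} \gamma_k^4 = o(u_q)$ thanks to $\beta > s$. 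Summing then delivers the desired bound.

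To conclude I invoke the following sharp deviation inequality: for any $L^2$-martingale $(M_n)_{n \ge q}$ with $M_q = 0$ and $\mathbb{E}[M_\infty^2 \mid \mathcal{F}_q] \le V$ and any $a > 0$,
\[
\mathbb{P}\bigl(\inf_{n \ge q} M_n \le -a \,\big|\, \mathcal{F}_q \bigr) \le \frac{V}{V + a^2}.
\]
This follows by stopping at $T = \inf\{n \ge q : M_n \le -a\}$, combining $\mathbb{E}[M_T^2 \1_{T<\infty} \mid \mathcal{F}_q] \ge a^2 p$ (where $p = \mathbb{P}(T<\infty \mid \mathcal{F}_q)$) with the Cauchy--Schwarz lower bound $\mathbb{E}[M_\infty^2 \1_{T=\infty} \mid \mathcal{F}_q] \ge a^2 p^2/(1-p)$ deduced from the martingale identity $\mathbb{E}[M_\infty \1_{T=\infty} \mid \mathcal{F}_q] = -\mathbb{E}[M_T \1_{T<\infty} \mid \mathcal{F}_q]$, to obtain $V \ge a^2 p/(1-p)$. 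Applied with $V = c' u_q$, $a = \tfrac{1}{2}\sqrt{b u_q}$, and setting $c = 4 c'$, it gives $\mathbb{P}(\mathcal{S} < \infty \mid \mathcal{F}_q) \le c/(b+c)$ on $\{S_q \ge \sqrt{b u_q}\}$, equivalent to the claim. The main obstacle is this tight variance estimate: Proposition \ref{prop:Xn2} must be sharpened to capture only the noise part (scale $\gamma_k \sigma_k$) of the increment $X_k$ rather than its full $\gamma_k$-scale.
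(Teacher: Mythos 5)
Your overall architecture coincides with the paper's: make $(S_{n\wedge\mathcal S})_{n\ge q}$ a submartingale via Proposition \ref{prop:repulsion} (using $\delta_n=o(\sqrt{u_q})$), take its Doob decomposition, reduce the event $\{\mathcal S<\infty\}$ on $\{S_q\ge\sqrt{bu_q}\}$ to a one-sided excursion of the martingale part below $-\tfrac12\sqrt{bu_q}$, and conclude with the shifted-Chebyshev/optional-stopping bound $V/(V+a^2)$ — your stated deviation inequality is exactly what the paper obtains by optimizing over the shift $t$. The one place where you genuinely diverge is the variance estimate, and that is where your argument has a gap. You claim $\PE[M_\infty^2\mid\F_q]\le c'u_q$ by splitting each increment into the linear noise term $\gamma_k\langle\nabla\eta(Z_{k-1}),\Delta_k\rangle$ (variance $O(\gamma_k^2\sigma_k^2)$ — this part is fine) plus a Taylor remainder contributing $O(\gamma_k^4)$. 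But the remainder is of size $\|Z_k-Z_{k-1}\|^2$, so its \emph{second} conditional moment requires $\PE\big[\|Z_k-Z_{k-1}\|^4\mid\F_{k-1}\big]$, and the $w$-component of the increment contains $\gamma_k p_{k-1}\oo{\xi_k}$; squaring twice produces $p_{k-1}^4\sigma_k^8\,\PE[\|\oo{\zeta_k}\|^4\mid\F_{k-1}]$, i.e. an eighth-moment bound on the noise. Assumption $\mathbf{H}_{\sigma}^\infty$ only provides second and fourth moments of $\zeta$, so the claim ``the Taylor remainder contributes $O(\gamma_k^4)$'' is not justified under the stated hypotheses.

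The paper avoids this precisely by \emph{not} subtracting the drift: it bounds $X_{i+1}^2\1_{i<T_{n_0}}\lesssim\gamma_{i+1}^2\big(\|g_{i+1}\|^2+p_i\|\oo{g_{i+1}}\|^2+q_i\|w_i\|\big)$ with the first-order (Lipschitz) estimate, which only needs fourth moments of $\zeta$; the price is that the terms $\gamma_{i+1}^2\big(\|\nabla f(\theta_i)\|^2+q_i\|\sqrt{w_i}\|^2\big)$ remain in the variance and cannot be controlled by compactness alone (that would give $\sum_{i\ge q}\gamma_{i+1}^2\gg u_q$ when $s>0$). They are instead summed using the a.s.\ convergent Robbins--Siegmund series \eqref{eq:series_ps}, giving an extra $\gamma_{q+1}$ in the bound $\PE[(W_n-W_q)^2\mid\F_q]\lesssim u_q+\gamma_q$, and the hypothesis $\beta\le 1-2s$ is exactly what makes $\gamma_q\lesssim u_q$ so that the final bound is still $c/(b+c)$. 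In your proposal $\beta\le 1-2s$ is never used where it matters (you invoke it only for $\delta_n=o(\sqrt{u_n})$, where it is not the relevant condition), which is a symptom that your variance bound is stronger than the assumptions allow. To repair your route you would either need an eighth-moment (or boundedness) assumption on $\zeta$, or fall back on the paper's cruder per-increment bound together with the series \eqref{eq:series_ps} and the condition $\beta\le 1-2s$.
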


\begin{proof}
Since $\delta_n \sim p_n\sigma_n^2 + \nu_n+\gamma_{n}$, we conclude with our settings that:
$
\delta_n \sim n^{-r} + n^{-\beta}.
$
Moreover, the choice $\alpha_{i+1}=\gamma_{i+1}\sigma_{i+1}$ implies that:
$$u_q=\sum_{i \ge q+1}\gamma_{i+1}^2\sigma_{i+1}^2\sim n^{1-2(\beta+s)}.$$
We then observe  that $\delta_n=o(\sqrt{u_q})$ as soon as:
$$
n^{-r}+n^{-\beta} = o(n^{(1-2(\beta+s))/2}) \Longleftrightarrow \beta+s-r < \frac{1}{2},
$$
since $\beta > \frac{2(\beta+s)-1}{2}$ always holds when $s < \frac{1}{2}$.

From our previous remark, we observe that for $q$ large enough if $S_q$ is greater than $\frac{1}{2}\sqrt{ b u_q}$ then $S_q \ge \delta_{q}$, so Proposition \ref{prop:repulsion} implies that
$(S_{n \wedge \mathcal{S}})_{n \ge q}$ is a submartingale. For such a choice of $n$ and $q$,  we can use the Doob decomposition and write that:
$$
S_{n \wedge \mathcal{S}} = W_n +I_n,
$$
where $I_n$ is an increasing $\Fn$ predictable process with $I_q=0$ and $W_n$ is a martingale.
We then observe that
\begin{equation*}
\mathbb{P}\left(\mathcal{S} = + \infty \, \vert \mathcal{F}_q \right)  = 
\mathbb{P}\left( \forall n \ge q: \, S_n \ge \frac{1}{2} \sqrt{b u_q}\right)
\ge \mathbb{P}\left( \forall n \ge q: \, W_n \ge \frac{1}{2} \sqrt{b u_q}\right)
\end{equation*}
Furthermore, if $S_q \ge \sqrt{b u_q}$, then $W_q \ge  \sqrt{b u_q}$, which entails:
\begin{equation}\label{ineg: Sinf}
\1_{S_q \ge  \sqrt{b u_q}} \mathbb{P}\left(\mathcal{S} = + \infty \, \vert \mathcal{F}_q \right) \ge \1_{S_q \ge  \sqrt{b u_q}} 
\mathbb{P}\left(\forall n \ge q: \, W_n-W_q \ge -\frac{1}{2} \sqrt{b u_q}\right).
\end{equation}
Using the fact that $(W_n)_{n\ge q}$ is a martingale and the definition of $S_n$, one can verify that:  
$$
\PE\left[ (W_n-W_q)^2 \, \vert \mathcal{F}_q \right] \lesssim \sum_{i=q}^{n-1}\mathbb{E}[X^2_{i+1}| \mathcal{F}_q].
$$
The upper bound obtained in Proposition \ref{prop:Xn2} is not sharp enough to be directly applied here, so in order to deal with the term on the right hand side we return to the upper bound obtained in the proof of Proposition \ref{prop:Xn2} which gives for all $i$:
\begin{equation}\label{boundXi2}
X_{i+1}^2\1_{i<T_{n_0}}\lesssim \gamma_{i+1}^2 \left(\|g_{i+1}\|^2+2(p_i\|\oo{g_{i+1}}\|^2+q_i\|w_i\|)\right)\1_{i<T_{n_0}}.
\end{equation}
%
Now, for $i\ge q$ we have :
\begin{align*}
\mathbb{E}[\|g_{i+1}\|^2 | \mathcal{F}_q]&=\mathbb{E}[\mathbb{E}[\|g_{i+1}\|^2| \mathcal{F}_i]| \mathcal{F}_q]\\
&\le 2(\|\nabla f(\theta_i)\|^2+\mathbb{E}[\mathbb{E}[\|\xi_{i+1}\|^2| \mathcal{F}_i]| \mathcal{F}_q])\\
&\le  2(\|\nabla f(\theta_i)\|^2+\sigma_{i+1}^2),
\end{align*}
and 
\begin{align*}
\mathbb{E}[\|\oo{g_{i+1}}\|^2 | \mathcal{F}_q]&\le 8(\|\oo{\nabla f(\theta_i)}\|^2+\sigma_{i+1}^4\mathbb{E}[\mathbb{E}[\|\zeta_{i+1}^2\|^2| \mathcal{F}_i]| \mathcal{F}_q])\\
&\lesssim \|\oo{\nabla f(\theta_i)}\|^2+\sigma_{i+1}^4.
\end{align*}
The sequence $(\sigma_i)_{i\ge 0}$ is bounded, $(p_i)_{i\ge 0}$ converges to $0$ and Theorem \ref{thm: cv ps} shows that $(\|\nabla f(\theta_i)\|)_{i\ge 0}$ converge almost surely to $0$, so there exists $q>0$ such that $\forall i\ge q$: 
$$p_i\sigma_{i+1}^4 \le \sigma_{i+1}^2 \quad \mbox{and}\quad  \|\oo{\nabla f(\theta_i)}\|^2 \le \|\nabla f(\theta_i)\|^2.$$
Inserting this into our previous bound \eqref{boundXi2} we get that (for $q$ large enough and $i\ge q$)
\begin{equation*}
X_{i+1}^2\1_{i<T_{n_0}} \lesssim \gamma_{i+1}^2 \left(\|\nabla f(\theta_i)\|^2+q_i\|w_i\|+\sigma_{i+1}^2 \right)\1_{i<T_{n_0}}.
\end{equation*}
Since $\alpha_{i+1}=\gamma_{i+1}\sigma_{i+1}$ we have that:
\begin{equation*}
X_{i+1}^2 \lesssim \gamma_{i+1}^2 \left(\|\nabla f(\theta_i)\|^2+q_i\|w_i\|+\sigma_{i+1}^2 \right), 
\end{equation*}
which implies that:
\begin{align*}
\PE\left[ (W_n-W_q)^2 \, \vert \mathcal{F}_q \right] &\lesssim \sum_{i=q}^{n-1}  \gamma_{i+1}^2 \sigma_{i+1}^2 +\sum_{i=q}^{n-1}  \gamma_{i+1}^2 \left(\|\nabla f(\theta_i)\|^2+q_i\|\sqrt{w_i}\|^2 \right).
\end{align*}
The first series of the right-hand side is handled  with  the simple bound $\sum_{i=q}^{n-1}  \gamma_{i+1}^2 \sigma_{i+1}^2 \lesssim q^{1-2(\beta+s)}$.
%
%
%
In dealing with the second term of the right-hand side we use the almost sure convergence of the series stated in \eqref{eq:series_ps} of Proposition \ref{prop:estimation_series} and the fact that $(\gamma_{n})_{n\ge 0}$ is a decreasing sequence:
\begin{align*}
\sum_{i=q}^{n-1}  \gamma_{i+1}^2 \left(\|\nabla f(\theta_i)\|^2+q_i\|\sqrt{w_i}\|^2 \right)&\le \gamma_{q+1} \sum_{i=q}^{n-1}  \gamma_{i+1} \left(\|\nabla f(\theta_i)\|^2+q_i\|\sqrt{w_i}\|^2 \right)\\
&\lesssim \gamma_{q+1},
\end{align*}

These two ingredients yield:
\begin{equation*}
\PE\left[ (W_n-W_q)^2 \, \vert \mathcal{F}_q \right] \lesssim \sum_{i=q}^{n-1}  \gamma_{i+1}^2 \sigma_{i+1}^2 + \gamma_{q+1}.
\end{equation*} 
Thus there exists a constant $a>0$ such that for all $n\ge q$:
\begin{equation*}
\PE\left[ (W_n-W_q)^2 \, \vert \mathcal{F}_q \right] \le a (u_q+\gamma_q).
\end{equation*} 
Using a similar argument as the one of \cite{benaim1995dynamics} or \cite{GPS2018}, for all $h,t \in \mathbb{R}_+$ we deduce that:
\begin{align*}
\mathbb{P}\left( \inf _{q \le i \le n} (W_i-W_q)\leq - h \ | \mathcal{F}_q \right)&\le \mathbb{P}\left( \sup_{q \le i \le n} |W_i-W_q-t|\ge h+t \ | \mathcal{F}_q \right)\\
&\le 
\dfrac{\mathbb{E}[(W_n-W_q)^2 \, \vert \mathcal{F}_q]+t^2}{(h+t)^2}.
\end{align*}
Setting $t=a(u_q+\gamma_q)/h$ in the last term we have :

\begin{align*}
\dfrac{\mathbb{E}[(W_n-W_q)^2 \, \vert \mathcal{F}_q]+t^2}{(h+t)^2}
&\le 
\dfrac{a(\gamma_{q}+u_q)+ \frac{a^2(\gamma_{q}+u_q)^2}{h^2}}{(h+\frac{a(\gamma_{q}+u_q)}{h})^2}\\
&= \frac{a(\gamma_{q}+u_q)h^2+a^2(\gamma_{q}+u_q)^2}{(h^2+a(\gamma_{q}+u_q))^2 }\\
&= \frac{a(\gamma_{q}+u_q)}{h^2+a(\gamma_q+u_q)}\\
\end{align*}
Now when $h=1/2\sqrt{bu_q}$  we obtain that :
 
\begin{align*}
\mathbb{P}\left( \inf _{q \le i \le n} (W_i-W_q)\leq - \frac{\sqrt{bu_q}}{2} \ | \mathcal{F}_q \right) \le \frac{4a(\gamma_{q}+u_q)}{bu_q+4a(\gamma_{q}+u_q)}.
\end{align*}
As soon as $\gamma_{q}\lesssim u_q$, meaning that $-\beta \le  1-2(\beta+s)$, there exists a constant $c>0$ such that for $q$ large enough:
\begin{align*}
\mathbb{P}\left( \inf _{q \le i \le n} (W_i-W_q)\leq - \frac{\sqrt{bu_q}}{2} \ | \mathcal{F}_q \right) \le \frac{c}{b+c}.
\end{align*}
Inserting this bound in $\eqref{ineg: Sinf}$ ends the proof. 

\end{proof}

At this point, Propositions \ref{prop:T_fini} and \ref{prop:S_infini} enable us to prove that the sequence $(S_n)$ does not converge to $0$ a.s. using the same arguments as \cite{benaim1999dynamics} and then we can conclude the proof of Theorem \ref{thm:conv min}.


\begin{remark}
	When the variance of the noise sequence does not converge to $0$ ($s=0$), the previous conditions on the parameters can be summarized as : $\beta \in (1/2,1)$; $r\in (1-\beta,\beta)$, when $\beta<2/3$ and $r\in (\beta/2,\beta)$ for $\beta\ge 2/3$. 
\end{remark}
\begin{proof}[Proof of Theorem \ref{thm:conv min}] The proof is divived into two steps.

\noindent
\underline{Step 1: $S_n$ does not converge to $0$ a.s.}
Let $\mathcal{G}$ denote the event:
$$
\mathcal{G} := \left\{ \lim_{n \longrightarrow + \infty} S_n \neq  0 \right\}.
$$

The definition of $T_b^q$ implies that for any $q\in \mathbb{N}^*$ and $n\ge q$:
$$\PE[\1_{\mathcal{G}}| \mathcal{F}_n]\1_{T_b^q=n}=\PE[\1_{\mathcal{G}}| \mathcal{F}_n]\1_{T_b^q=n}\1_{S_n\ge \sqrt{bu_n}}.$$
In the meantime, if $\mathcal{S}=+\infty$ then $(S_n)$ does not converge to $0$, so $\{\mathcal{S}=+\infty\}\subset \mathcal{G}$. For $q$ large enough, such that Proposition \ref{prop:S_infini} holds, and for all $n\ge q$ :
$$\PE[\1_{\mathcal{G}}| \mathcal{F}_n]\1_{T_b^q=n}\1_{S_n\ge \sqrt{bu_n}}\ge \mathbb{P}(\mathcal{S}=+\infty|\mathcal{F}_n)\1_{T_b^q=n}\1_{S_n\ge \sqrt{bu_n}} \ge \frac{b}{c+b}\1_{T_b^q=n}\1_{S_n\ge \sqrt{bu_n}}.$$
Thus, if we consider all the integers $n$ larger than $q$, we obtain:
\begin{align*}
\PE[1_{\mathcal{G}}\ | \F_q ] &\ge \sum_{ n \ge q}\PE[\1_{\mathcal{G}}\1_{T_b^q=n}\ | \F_q]= \sum_{ n \ge q}\PE[\PE[\1_{\mathcal{G}} |\ \Fn]\1_{T_b^q=n}\ | \F_q]\\
&\ge \sum_{ n \ge q} \frac{b}{c+b} \PE[\1_{T_b^q=n}\ | \F_q]\\
&\ge  \frac{b}{c+b} \mathbb{P}(T_b^q<+\infty| \mathcal{F}_q).
\end{align*}
We now apply Proposition \ref{prop:T_fini} and obtain that:
$$ \PE[1_{\mathcal{G}}\ | \F_q ]\ge  \frac{b}{2(c+b)}>0.$$
By definition, $\mathcal{G}\subset \F_{\infty}$ so  $\lim_{q\to +\infty} \PE[\1_{\mathcal{G}} | \mathcal{F}_q]= \1_{\mathcal{G}}$ and thus the previous inequality guarantees that $ \1_{\mathcal{G}}=1$ almost surely.

\noindent
\underline{Step 2: the algorithm escapes any neighborhood of an unstable point in a finite time a.s.}

As mentioned before, we shall prove that if the algorithm is at step $n_0$ in a neighborhood $\mathcal{N}$ of a local maximum, it escapes  $\mathcal{N}$ a.s. in a finite time, meaning that $\mathbb{P}(T_{n_0}=+\infty)=0$, where  $T_{n_0}$ is the stopping time defined by $\eqref{def:T}$  \\
 Suppose that $T_{n_0}=+\infty$. In this case by definition, $X_{n+1}=\eta(\theta_{n+1},w_{n+1})-\eta(\theta_{n},\wn)$ for all $n\ge n_0$ and thus $$S_n=\eta(\theta_{n},\wn),\ \ \forall n\geq n_0$$ 
Theorem \ref{thm: cv ps} ensures that $(\theta_{n},\wn)$ converges a.s to a point $(\theta_{\infty},0)$. This together with the regularity of the function $\eta$ implies that the sequence $S_n$ goes to $\eta(\theta_{\infty},0)$ when $n\to +\infty$. \\
Since $\mathcal{N}$ is compact, the limit point $(\theta_{\infty},0)$ belongs to $\mathcal{N}$ and according to Proposition \ref{prop:instable} c) there exists $k>0$ such that: 
$$0\le k \eta(\theta_{\infty},0) \le  \langle \nabla \eta(\theta_{\infty},0),H(\theta_{\infty},0)\rangle.$$
As seen in the proof of Theorem \ref{thm: cv ps}, the limit point $(\theta_{\infty},0)$ is almost surely an equilibrium point for the dynamical system driven by $H$, so $H(\theta_{\infty},0)=0$. 
As a result we have that $$\lim_{n \longrightarrow + \infty}S_n=\eta(\theta_{\infty},0)=0.$$
From Step 1, we have seen that 
$\mathbb{P}(\lim\limits_{n \rightarrow + \infty}S_n=0)=0$, which concludes the proof.
	\end{proof}
\newcommand{\etalchar}[1]{$^{#1}$}
\providecommand{\AC}{A.-C}\providecommand{\CA}{C.-A}\providecommand{\CH}{C.-H}\providecommand{\CJ}{C.-J}\providecommand{\JC}{J.-C}\providecommand{\JP}{J.-P}\providecommand{\JB}{J.-B}\providecommand{\JF}{J.-F}\providecommand{\JJ}{J.-J}\providecommand{\JM}{J.-M}\providecommand{\KW}{K.-W}\providecommand{\PL}{P.-L}\providecommand{\RE}{R.-E}\providecommand{\SJ}{S.-J}\providecommand{\XR}{X.-R}\providecommand{\WX}{W.-X}


\begin{thebibliography}{GPAM{\etalchar{+}}14}

\bibitem[AABR02]{alvarez}
F.~Alvarez, H.~Attouch, J.~Bolte, and P.~Redont.
\newblock A second-order gradient-like dissipative dynamical system with
  hessian-driven damping.: Application to optimization and mechanics.
\newblock {\em Journal de Mathématiques Pures et Appliquées}, 81(8):747 --
  779, 2002.

\bibitem[ACR19]{Attouch_nesterov}
H.~Attouch, Z.~Chbani, and H.~Riahi.
\newblock Rate of convergence of the Nesterov accelerated gradient method in
  the subcritical case 3.
\newblock {\em ESAIM: COCV}, 25:2, 2019.

\bibitem[AGR00]{Attouch}
H.~Attouch, X.~Goudou, and P.~Redont.
\newblock The heavy ball with friction method, i. the continuous dynamical
  system.
\newblock {\em Communications in Contemporary Mathematics}, 02(01):1--34, 2000.

\bibitem[Bac14]{JMLR:v15:bach14a}
F.~Bach.
\newblock Adaptivity of averaged stochastic gradient descent to local strong
  convexity for logistic regression.
\newblock {\em Journal of Machine Learning Research}, 15(19):595--627, 2014.

\bibitem[BB08]{bottou-bousquet-2008}
L.~Bottou and O.~Bousquet.
\newblock The tradeoffs of large scale learning.
\newblock In  {\em
  Advances in Neural Information Processing Systems} 20, 2007.

\bibitem[BB20]{BarakatBianchi}
A.~Barakat and P.~Bianchi.
\newblock Convergence and dynamical behavior of the {A}dam algorithm for
  non-convex stochastic optimization.
\newblock {\em SIAM Journal on Optimization, to appear}, 2020.

\bibitem[BBGS20]{Siviero}
B.~Bercu, J.~Bigot, S.~Gadat, and E.~Siviero.
\newblock Statistical properties of stochastic newton algorithms for
  regularized semi-discrete optimal transport, {\em Preprint}, 2020.

\bibitem[BBHS20]{BarakatBianchi2}
A.~Barakat, P.~Bianchi, W.~Hachem, and Sh. Schechtman.
\newblock Stochastic optimization with momentum: convergence, fluctuations, and
  traps avoidance.
\newblock {\em Preprint}, 2020.

\bibitem[BBT17]{not_lip}
H.~Bauschke, J.~Bolte, and M.~Teboulle.
\newblock A descent lemma beyond lipschitz gradient continuity: First-order
  methods revisited and applications.
\newblock {\em Mathematics of Operations Research}, 42(2):330--348, 2017.

\bibitem[BCG20]{BCG1}
B.~Bercu, M.~Costa, and S.~Gadat.
\newblock Stochastic estimation algorithm for superquantile estimation.
\newblock {\em Preprint}, 2020.

\bibitem[BCN18]{bottou2018optimization}
L.~Bottou, F.~E Curtis, and J.~Nocedal.
\newblock Optimization methods for large-scale machine learning.
\newblock {\em Siam Review}, 60(2):223--311, 2018.

\bibitem[BD96]{Brandiere}
O.~Brandiere and M.~Duflo.
\newblock Les algorithmes stochastiques contournent-ils les pieges?
\newblock In {\em Annales de l'IHP Probabilit{\'e}s et statistiques},
  volume~32, pages 395--427, 1996.

\bibitem[BdSG20]{SilvaGazeau}
A.~Belotto~da Silva and M.~Gazeau.
\newblock A general system of differential equations to model first-order
  adaptive algorithms.
\newblock {\em J. Mach. Learn. Res.}, 21:1--42, 2020.

\bibitem[Ben99]{benaim1999dynamics}
M.~Bena{\"\i}m.
\newblock Dynamics of stochastic approximation algorithms.
\newblock In {\em S{\'e}minaire de probabilit{\'e}s XXXIII}, pages 1--68.
  Springer, 1999.

\bibitem[BGP20]{Bercu_Godichon_Portier2020}
B.~Bercu, A.~Godichon, and B.~Portier.
\newblock An efficient stochastic newton algorithm for parameter estimation in
  logistic regressions.
\newblock {\em SIAM Journal on Control and Optimization}, 58(1):348--367, 2020.

\bibitem[BH95]{benaim1995dynamics}
M.~Bena{\"\i}m and Morris~W Hirsch.
\newblock Dynamics of morse-smale urn processes.
\newblock {\em Ergodic Theory and Dynamical Systems}, 15(6):1005--1030, 1995.

\bibitem[BH96]{benaim1996asymptotic}
M.~Bena{\"\i}m and M.~Hirsch.
\newblock Asymptotic pseudotrajectories and chain recurrent flows, with
  applications.
\newblock {\em Journal of Dynamics and Differential Equations}, 8(1):141--176,
  1996.

\bibitem[Bor97]{borkar}
V.S. Borkar.
\newblock Stochastic approximation with two time scales.
\newblock {\em Systems Control Lett.}, 29, 1997.

\bibitem[CCGB17]{cardot2017online}
H.~Cardot, P.~C{\'e}nac, and A.~Godichon-Baggioni.
\newblock Online estimation of the geometric median in hilbert spaces:
  Nonasymptotic confidence balls.
\newblock {\em The Annals of Statistics}, 45(2):591--614, 2017.

\bibitem[CEG09a]{cabot2009long}
A.~Cabot, H.~Engler, and S.~Gadat.
\newblock On the long time behavior of second order differential equations with
  asymptotically small dissipation.
\newblock {\em Transactions of the American Mathematical Society},
  361(11):5983--6017, 2009.

\bibitem[CEG09b]{cabot2009second}
A.~Cabot, H.~Engler, and S.~Gadat.
\newblock Second-order differential equations with asymptotically small
  dissipation and piecewise flat potentials.
\newblock {\em Electronic Journal of Differential Equations}, 2009:33--38,
  2009.

\bibitem[CG20]{BCG2}
M.~Costa and S.~Gadat.
\newblock Non-asymptotic study of a recursive superquantile estimation
  algorithm.
\newblock {\em Preprint}, 2020.

\bibitem[CGBP20]{cnac2020efficient}
P.~C{\'e}nac, A.~Godichon-Baggioni, and B.~Portier.
\newblock An efficient averaged stochastic gauss-newton algorithm for
  estimating parameters of non linear regressions models, 2020.

\bibitem[DBBU20]{Adam_Bach}
A.~Défossez, L.~Bottou, F.~Bach, and N.~Usunier.
\newblock On the convergence of {A}dam and {A}dagrad.
\newblock {\em Preprint}, 2020.

\bibitem[DHS11]{Adagrad}
J.~Duchi, E.~Hazan, and Y.~Singer.
\newblock Adaptive subgradient methods for online learning and stochastic
  optimization.
\newblock {\em J. Mach. Learn. Res.}, 12:2121–2159, July 2011.

\bibitem[GL13]{GhadimiLan}
S.~Ghadimi and G.~Lan.
\newblock Stochastic first-and zeroth-order methods for nonconvex stochastic
  programming.
\newblock {\em SIAM Journal on Optimization}, 23(4):2341--2368, 2013.

\bibitem[GL16]{ghadimi2016accelerated}
S.~Ghadimi and G.~Lan.
\newblock Accelerated gradient methods for nonconvex nonlinear and stochastic
  programming.
\newblock {\em Mathematical Programming}, 156(1-2):59--99, 2016.

\bibitem[GP14]{gadat2014long}
S.~Gadat and F.~Panloup.
\newblock Long time behaviour and stationary regime of memory gradient
  diffusions.
\newblock In {\em Annales de l'IHP Probabilit{\'e}s et statistiques},
  volume~50, pages 564--601, 2014.

\bibitem[GP20]{Gadat-Panloup}
S.~Gadat and F.~Panloup.
\newblock {Optimal non-asymptotic bound of the Ruppert-Polyak averaging without
  strong convexity}.
\newblock {\em Preprint}, 2020.

\bibitem[GPAM{\etalchar{+}}14]{goodfellow2014generative}
I.~Goodfellow, J.~Pouget-Abadie, M.~Mirza, B.~Xu, D.~Warde-Farley, S.~Ozair,
  A.~Courville, and Y.~Bengio.
\newblock Generative adversarial nets.
\newblock In {\em Advances in neural information processing systems}, pages
  2672--2680, 2014.

\bibitem[GPS18]{GPS2018}
S.~Gadat, F.~Panloup, and S.~Saadane.
\newblock Stochastic heavy ball.
\newblock {\em Electronic Journal of Statistics}, pages 461--529, 2018.

\bibitem[Har91]{haraux1991systemes}
A.~Haraux.
\newblock {\em Systemes dynamiques dissipatifs et applications}, volume~17.
\newblock Masson, 1991.

\bibitem[HSS12]{hinton2012neural}
G.~Hinton, N.~Srivastava, and K.~Swersky.
\newblock Neural networks for machine learning lecture 6a overview of
  mini-batch gradient descent.


\bibitem[JNJ18]{jin2018accelerated}
C.~Jin, P.~Netrapalli, and M.I. Jordan.
\newblock Accelerated gradient descent escapes saddle points faster than
  gradient descent.
\newblock In {\em Conference On Learning Theory}, pages 1042--1085. PMLR, 2018.

\bibitem[JZ13]{johnson2013accelerating}
R.~Johnson and T.~Zhang.
\newblock Accelerating stochastic gradient descent using predictive variance
  reduction.
\newblock {\em Advances in neural information processing systems}, 26:315--323,
  2013.

\bibitem[KB15]{Adam}
D.~P. Kingma and J.~Ba.
\newblock Adam: {A} method for stochastic optimization.
\newblock In {\em ICLR}, 2015.

\bibitem[LPP{\etalchar{+}}17]{Lee2017}
J.~Lee, I.~Panageas, G.~Piliouras, M.~Simchowitz, M.~I. Jordan, and B.~Recht.
\newblock First-order methods almost always avoid saddle points.
\newblock {\em arXiv preprint arXiv:1710.07406}, 2017.

\bibitem[LR20]{loizou2020momentum}
N.~Loizou and P.~Richt{\'a}rik.
\newblock Momentum and stochastic momentum for stochastic gradient, newton,
  proximal point and subspace descent methods.
\newblock {\em Computational Optimization and Applications}, pages 1--58, 2020.

\bibitem[LRSB{\etalchar{+}}12]{le2012stochastic}
N.~Le~Roux, M.~Schmidt, F.~R Bach, et~al.
\newblock A stochastic gradient method with an exponential convergence rate for
  finite training sets.
\newblock In {\em NIPS}, pages 2672--2680. 2012.

\bibitem[LSJR16]{Lee2016}
J.~D Lee, M.~Simchowitz, M.~I Jordan, and B.~Recht.
\newblock Gradient descent only converges to minimizers.
\newblock In {\em Conference on learning theory}, pages 1246--1257, 2016.

\bibitem[MB11]{moulines2011non}
E.~Moulines and F.~Bach.
\newblock Non-asymptotic analysis of stochastic approximation algorithms for
  machine learning.
\newblock {\em Advances in neural information processing systems}, 24:451--459,
  2011.

\bibitem[MP87]{metivier1987theoremes}
M.~M{\'e}tivier and P.~Priouret.
\newblock Th{\'e}or{\`e}mes de convergence presque sure pour une classe
  d'algorithmes stochastiques {\`a} pas d{\'e}croissant.
\newblock {\em Probability Theory and related fields}, 74(3):403--428, 1987.

\bibitem[MP06]{MokkademPelletier2006}
A.~Mokkadem and M.~Pelletier.
\newblock Convergence rate and averaging of nonlinear two-time-scale stochastic
  approximation algorithms.
\newblock {\em The Annals of Applied Probability}, 16, 11 2006.

\bibitem[Nes83]{nesterov1983method}
Y.~E Nesterov.
\newblock A method for solving the convex programming problem with convergence
  rate o (1/k\^{} 2).
\newblock In {\em Dokl. akad. nauk Sssr}, volume 269, pages 543--547, 1983.

\bibitem[Nes04]{Nesterov04}
Y.~Nesterov.
\newblock {\em {Introductory lectures on convex optimization}}, volume~87 of
  {\em {Applied Optimization}}.
\newblock Kluwer Academic Publishers, Boston, MA, 2004.

\bibitem[NY83]{Nemirovski_Yudin83}
A.~S. Nemirovski and D.~B. Yudin.
\newblock {\em {Problem complexity and method efficiency in optimization}}.
\newblock {A Wiley-Interscience Publication}. John Wiley \& Sons Inc., New
  York, 1983.

\bibitem[Pem90]{Pemantle}
R.~Pemantle.
\newblock Nonconvergence to unstable points in urn models and stochastic
  approximations.
\newblock {\em The Annals of Probability}, 18(2):698--712, 1990.

\bibitem[PJ92]{polyakjuditsky}
B.~T. Polyak and A.~Juditsky.
\newblock Acceleration of stochastic approximation by averaging.
\newblock {\em SIAM Journal on Control and Optimization}, 30:838--855, 1992.

\bibitem[Pol64]{Polyak1964}
B.~Polyak.
\newblock Some methods of speeding up the convergence of iteration methods.
\newblock {\em Ussr Computational Mathematics and Mathematical Physics},
  4:1--17, 1964.

\bibitem[RM51]{RobbinsMonro}
H.~Robbins and S.~Monro.
\newblock A stochastic approximation method.
\newblock {\em The Annals of Mathematical Statistics}, 22, 1951.

\bibitem[RS71]{robbins1971convergence}
H.~Robbins and D.~Siegmund.
\newblock A convergence theorem for non negative almost supermartingales and
  some applications.
\newblock In {\em Optimizing methods in statistics}, pages 233--257. Elsevier,
  1971.

\bibitem[Rup88]{Ruppert}
D.~Ruppert.
\newblock Efficient estimations from a slowly convergent robbins-monro process.
\newblock {\em Technical Report, 781, Cornell University Operations Research
  and Industrial Engineering}, 1988.

\bibitem[SBC16]{SuBoydCandes}
W.~Su, S.~Boyd, and E.~J. Cand{{\`e}}s.
\newblock A differential equation for modeling nesterov's accelerated gradient
  method: Theory and insights.
\newblock {\em Journal of Machine Learning Research}, 17(153):1--43, 2016.

\bibitem[SGD20]{sebbouh2020convergence}
O.~Sebbouh, R.~M Gower, and A.~Defazio.
\newblock On the convergence of the stochastic heavy ball method.
\newblock {\em arXiv preprint arXiv:2006.07867}, 2020.

\bibitem[WWB19]{Adagrad_Ward}
R.~Ward, X.~Wu, and L.~Bottou.
\newblock {A}da{G}rad stepsizes: Sharp convergence over nonconvex landscapes.
\newblock volume~97 of {\em Proceedings of Machine Learning Research}, pages
  6677--6686, Long Beach, California, USA, 2019. PMLR.

\bibitem[ZSJ{\etalchar{+}}19]{Adam_Zou}
F.~Zou, F.~Shen, Z.~Jie, W.~Zhang, and W.~Liu.
\newblock A sufficient condition for convergences of {A}dam and {RMSP}rop.
\newblock In {\em CVPR}, pages 11127--11135, 2019.

\end{thebibliography}
\end{document}